\newcommand{\ourfooter}[2]{\footnotesize \sf Chapter\ \thechapter\ of
  the {\em Handbook of Epistemic Logic}, H.\ van
  Ditmarsch, J.Y.\ Hal\-pern, W.\ van der Hoek and B.\ Kooi (eds),
  College Publications, 2015, pp.~\pageref{#1}--\pageref{#2}.}
 \newenvironment{proof}{\noindent {\bf Proof} \ \ \ }{\hfill$\dashv$
   \\ }
 \newcommand{\DEL}{\mbox{\bf DEL}}
 \newcommand{\PAL}{\mbox{\bf PAL}}
\newcommand{\refstart}{
\newpage
\thispagestyle{fancy}
\fancyhead[RO]{\thepage}
\fancyhead[LE]{\thepage}
\fancyhead[RE]{\emph{\leftmark}}
\fancyfoot{}
\fancyfoot[C]{}
\bibliographystyle{chicago} 
\renewcommand{\bibname}{References}
\addcontentsline{toc}{section}{\numberline{}References}
}
\newcommand{\putaway}[1]{}
\newcommand{\size}[1]{\mid\!#1\!\mid}
\newcommand{\msize}[1]{\|#1\|}
\newcommand{\prok}{\ensuremath{\mathbf K}}
\newcommand{\imp}{\rightarrow}
\newcommand{\et}{\wedge}
\newcommand{\vel}{\vee}
\newcommand{\Et}{\bigwedge}
\newcommand{\atom}{p}
\newcommand{\Group}{A}
\newcommand{\group}{\Group}
\newcommand{\agent}{a}
\newcommand{\agenta}{a}
\newcommand{\States}{S}
\newcommand{\state}{s}
\newcommand{\statet}{t}
\newcommand{\Domain}{{\mathcal D}}
\renewcommand{\phi}{\varphi}
\newcommand{\bisim}{{\raisebox{.3ex}[0mm][0mm]{\ensuremath{\medspace \underline{\! \leftrightarrow\!}\medspace}}}}
\newcommand{\bisrel}{\ensuremath{\mathfrak{R}}}
\newcommand{\Nat}{\mathbb N}
\theoremstyle{break}
\newtheorem{definition}{Definition}[chapter]
\newtheorem{theorem}{Theorem}[chapter]
\newtheorem{example}{Example}[chapter]
\newtheorem{proposition}{Proposition}[chapter]
\newcommand{\addauthorstotoc}[1]{\addtocontents{toc}{\protect\vspace*{5pt}
    \noindent \protect\emph{#1}\protect\vspace*{5pt}}}
\newcommand{\chapterauthors}[1]{{\begin{center}\large\textbf{#1}
    \end{center}
  } }
\newcommand{\commentout}[1]{}
\newenvironment{abstract}{\begin{quote} \noindent \textbf{Abstract} }{\end{quote}}
\newcommand{\noopsort}[2]{#2}
\newcommand{\lan}[1]{\ensuremath\mathsf{#1}}
\newcommand{\mc}[1]{\ensuremath{\mathcal{#1}}}
\newcommand{\mcx}{\mc{X}}
\newcommand{\mb}[1]{\ensuremath{\mathbf{#1}}}
\newcommand{\mbx}{\mb{X}}
\newcommand{\atoms}{\lan{At}}
\newcommand{\agents}{\lan{Ag}}
\newcommand{\operators}{\lan{Op}}
\renewcommand{\eqref}[1]{(\ref{#1})}
\newcommand{\axiom}[1]{\ensuremath{\mb{#1}}}
\newcommand{\node}[2]{\langle #1 \circ #2 \rangle}
\author{Hans van Ditmarsch \and Joseph Y.\ Halpern \and Wiebe van der Hoek \and Barteld Kooi}
\title{Handbook of Epistemic Logic}
\date{\today}
\begin{document}
\dominitoc
\faketableofcontents
\chapter{An Introduction to Logics of Knowledge and Belief}
\chaptermark{Introduction}
\label{chap:introduction}
\label{chap1}

\chapterauthors{Hans van Ditmarsch\\ Joseph Y. Halpern\\ Wiebe van der Hoek\\
  Barteld Kooi}   
\addauthorstotoc{Hans van Ditmarsch, Joseph Y. Halpern, Wiebe van der Hoek and
  Barteld Kooi}   
\setcounter{minitocdepth}{1}
\minitoc

\begin{abstract}
This chapter provides an introduction to some basic concepts of epistemic logic, basic formal languages, their semantics, and proof systems. It also contains an overview of the handbook, and a brief history of epistemic logic and pointers to the literature.
\end{abstract}

\thispagestyle{fancy}
\fancyhead{}
\renewcommand{\headrulewidth}{0pt}
\fancyfoot[LE,LO]{\ourfooter{chap1:first page}{chap1:lastpage}}
\fancyfoot[C]{}
\label{chap1:first page}

\newcommand{\chapref}[1]{\arabic{#1}}
%
%
%
%
%
\newcounter{chap:onlyknowing}
\setcounter{chap:onlyknowing}{2}

\newcounter{chap:awareness}
\setcounter{chap:awareness}{3}

\newcounter{chap:knowledgeandtime}
\setcounter{chap:knowledgeandtime}{5}

\newcounter{chap:dynamicepistemiclogic}
\setcounter{chap:dynamicepistemiclogic}{6}

\newcounter{chap:beliefrevisioninDEL}
\setcounter{chap:beliefrevisioninDEL}{7}

\newcounter{chap:probabilisticupdates}
\setcounter{chap:probabilisticupdates}{4}

\newcounter{chap:modelchecking}
\setcounter{chap:modelchecking}{8}

\newcounter{chap:epistemicfoundationsforgames}
\setcounter{chap:epistemicfoundationsforgames}{9}

\newcounter{chap:agentsbdi}
\setcounter{chap:agentsbdi}{10}

\newcounter{chap:strategicability}
\setcounter{chap:strategicability}{11}

\newcounter{chap:knowledgeandsecurity}
\setcounter{chap:knowledgeandsecurity}{12}

\section{Introduction to the Book}\label{chap1:sec:intro}
This introductory chapter has four goals:
\begin{enumerate}
\item\label{chap1:enumitem:one:a} an informal introduction to some
basic concepts of epistemic logic;
\item\label{chap1:enumitem:two:a} basic formal languages, their
semantics, and  proof systems;
\item  an overview of the handbook; and
\item a brief history of epistemic logic and pointers to the
literature.
\end{enumerate}
     
In Section~\ref{chap1:sec:tools}, we deal with the first two items.
We provide examples that should help to
connect the informal concepts with the formal definitions. Although the
informal meaning of the concepts that we discuss may vary from author to
author in this book (and, indeed, from reader to reader), the formal
definitions and notation
provide a framework for the discussion in the remainder of the book.

In Section~\ref{chap1:sec:overview}, we outline how the basic concepts
from this chapter are further developed in subsequent chapters, and
how those chapters relate to each other.
This chapter, like all others, concludes with a section of notes, which
gives all the relevant references and some historical background, and
a bibliography. 

\section{Basic Concepts and Tools}\label{chap1:sec:tools}
As the title suggests, this book uses a formal tool, {\em logic}, to study
the notion of {\em knowledge} (``episteme'' in Greek, hence \emph{epistemic logic}) and belief, and, in a wider sense, the notion of {\em  information}.

Logic is the study of reasoning, formalising the way in which certain
conclusions can be reached, given certain premises. This can be done by
showing that the conclusion can be {\em derived} using some deductive
system (like the axiom systems we present in
Section~\ref{chap1:subsec:axioms}), or by arguing that the {\em truth}
of the conclusion must follow from the truth of the premises (truth is
the concern of the semantical approach of
Section~\ref{chap1:subsec:semantics}). However, first of all, the
premises and conclusions need to be presented in some formal {\em
language}, which is the topic of
Section~\ref{chap1:subsec:language}. Such a language allows us to
specify and verify properties of complex systems of interest.

Reasoning about knowledge and belief, which is the focus of this book,
has subtleties beyond those that arise in propositional or predicate
logic.
%
Take, for instance, the law of excluded
middle in classical logic, which says that for any proposition $p$,
either $p$ or $\neg p$ (the negation of $p$) must hold; formally,
$p\lor \neg p$ is valid. In the language of epistemic logic, we write
$K_\agent p$ for `agent $a$ knows that $p$ is the case'.  Even
this simple addition to the language allows
us to ask many more questions.  For example, which of the following
formulas should
be valid, and how are they related?
What kind of `situations' do the formulas describe?
\begin{itemize}
\item $K_\agent p \lor \neg K_\agent p$
\item $K_\agent p \lor K_\agent\neg p$
\item $K_\agent(p \lor \neg p)$
\item $K_\agent p \lor \neg K_\agent \neg p$
\end{itemize}
It turns out that, given the semantics of interest to us,  only the
first and third formulas above are valid.  Moreover as we will see below, $K_\agent p$ logically implies $\neg K_\agent \neg p$, so the last formula is equivalent
to $\neg K_\agent \neg p$, and says `agent $a$ considers $p$ possible'.
This is incomparable to the second formula, which says agent $a$ knows
whether $p$ is true'.


One of the appealing features of epistemic logic is that
it goes beyond the `factual knowledge'
that 
%
the agents have. Knowledge can be
about knowledge, so we can write expressions like $K_a(K_ap \rightarrow
K_aq)$ ($a$ knows that if he knows that $p$, he also knows that
$q$). More interestingly, we can model knowledge about other's
knowledge, which is important when we reason about communication
protocols. Suppose $Ann$ knows some fact $m$ (`we meet for dinner the
first Sunday of August'). So we have $K_am$. Now suppose Ann e-mails
this message to Bob at Monday 31st of July, and Bob reads it that
evening. We then have $K_bm \land K_bK_am$.  Do we have $K_aK_bm$?
Unless Ann has information that Bob has actually read the message, she
cannot assume that he did, so we have $(K_am \land \neg K_aK_bm \land
\neg K_a\neg K_bm)$.

We also have $K_a K_b \neg K_aK_bm$. To see this, we already noted that
$\neg K_aK_b$ $m$, since Bob might not have read the message yet. But if
{\em we} can deduce that, then Bob can as well (we implicitly assume
that all agents can do perfect reasoning), and, moreover, Ann can deduce
{\em that}. Being a gentleman, Bob should resolve the situation in which
$\neg K_aK_bm$ holds, which he could try to do by replying to Ann's
message. Suppose that Bob indeed replies on Tuesday morning, and Ann reads
this on Tuesday evening. Then, on that evening, we indeed have
$K_aK_bK_am$. But of course, Bob cannot assume Ann read the
acknowledgement, so we have $\neg K_bK_aK_bK_am$. It is obvious that if
Ann and Bob do not want any ignorance about knowledge of $m$, they
better pick up the phone and verify $m$. Using the phone is a good
protocol that guarantees $K_am \land K_bm \land K_aK_bm \land K_bK_am
\land K_aK_bK_am \land \dots$, a notion that we call {\em common
knowledge}; see Section~\ref{subsec:groupnotions}.

The point here is that our formal language helps clarify the effect of
a (communication) protocol on the information of the participating
agents. This is the focus of
Chapter~\chapref{chap:knowledgeandsecurity}. It is
important to note that requirements of protocols can involve both
knowledge and ignorance: in the above example for instance, where
Charlie is a roommate of Bob, a goal (of Bob) for the protocol might
be that he knows that Charlie does \emph{not} know the message ($K_b
\neg K_cm$), while a goal of Charlie might even be $K_cK_b \neg
m$.  Actually, in the latter case, it may be more reasonable to
write $K_cB_b\neg m$: Charlie knows that Bob
believes that there is no dinner on Sunday. A temporal
progression from $K_bm \land \neg K_aK_bm$ to $K_bK_am$ can be viewed as
learning. This raises interesting questions in the study of epistemic
protocols: given an initial and
final specification of information, can we find a sequence of messages
that take us from the former to the latter? Are there optimal such
sequences? These questions are addressed in
Chapter~\chapref{chap:knowledgeandtime}, specifically
Sections~5.7 and 5.9.

Here is an example of a scenario where the question is to derive a
sequence of messages from an initial and final specification of
information. It is taken from
Chapter~\chapref{chap:knowledgeandsecurity}, and it demonstrates that
security protocols that aim to ensure that certain agents stay
ignorant cannot (and do not) always rely on the fact that some messages
are kept secret or hidden.

\begin{quote}
Alice and Betty
each draw three cards from a pack of seven cards, and Eve (the
eavesdropper) gets the remaining card.
Can players Alice and Betty learn each other's cards without
revealing that information to Eve? The restriction is that Alice
and Betty can make only public announcements that Eve can hear.
\end{quote}

We assume that (it is common knowledge that) initially, all three agents
know the composition of the pack of
cards, and each agent knows which cards she holds.
At the end of the protocol, we want Alice and Betty to know which cards each of
them holds, while Eve should know only which cards she (Eve) holds.
Moreover, messages can only be public
announcements (these are formally described in
Chapter~\chapref{chap:dynamicepistemiclogic}), which in this setting just
means that Alice and Betty can talk to each other, but it is common
knowledge that Eve hears them. Perhaps surprisingly, such a protocol
exists, and, hopefully less surprisingly by now, epistemic logic
allows us to formulate precise epistemic conditions, and the kind of
announcements that should be allowed. For instance, no agent is allowed
to lie, and agents can announce only what they know.  Dropping the second
condition would allow Alice to immediately announce Eve's card, for
instance.
Note there is an important distinction here: although Alice knows that
there is an announcement that she can make that would bring
about the desired state of knowledge (namely, announcing Eve's card),
there is not something that Alice knows that she can announce that would
bring about the desired state of knowledge (since does not in fact know
Eve's card).  This distinction has be called the \emph{de
dicto}/\emph{de re} distinction in the literature.  The connections
between knowledge and strategic ability are the topic of
Chapter~\chapref{chap:strategicability}.

Epistemic reasoning is also important in distributed computing. As
argued in Chapter~\chapref{chap:knowledgeandtime}, processes or programs
in a distributed environment often have only a limited view of the global
system initially; they gradually come to know more about the system. Ensuring
that each process has the appropriate knowledge needed in order
to act is the main issue here.
The chapter mentions a number of problems in distributed systems where
epistemic tools are helpful, like agreement problems (the dinner
example of Ann and Bob above would be a simple example) and the
problem of mutual exclusion, where processes sharing a resource must
ensure that only one process uses the resource at a time. An instance
of the latter is provided in Chapter~\chapref{chap:modelchecking}, where
epistemic logic is used to specify a correctness property of the {\em
  Railroad Crossing System}. Here, the agents Train, Gate and
Controller must ensure, based on the type of signals that they send, that
the train is never at the crossing while the gate is
`up'. Chapter~\chapref{chap:modelchecking} is on model checking;
it provides techniques to automatically verify that such properties
(specified in an epistemic temporal language; cf.
Chapter~\chapref{chap:knowledgeandtime}) hold. Epistemic tools to
deal with the problem of mutual exclusion are also discussed in Chapter
\chapref{chap:strategicability},
in the context of dealing with {\em shared file updates}.

Reasoning about knowing what others know (about your knowledge) is
also typical in strategic situations, where one needs to make a
decision based on how others will act (where the others, in turn, are
basing their
decision on their reasoning about you). This kind of scenario
is the focus of
game theory.
\emph{Epistemic} game theory studies game theory 
using notions from
epistemic logic.  (Epistemic game theory is the subject of 
Chapter~\chapref{chap:epistemicfoundationsforgames} in this book.)
Here, we 
give a simplified example of one of the main ideas. Consider the game
in Figure~\ref{chap1:fig:extgame}.
\begin{figure}[h!]\center
\begin{center}
\includegraphics[width=0.45\textwidth]{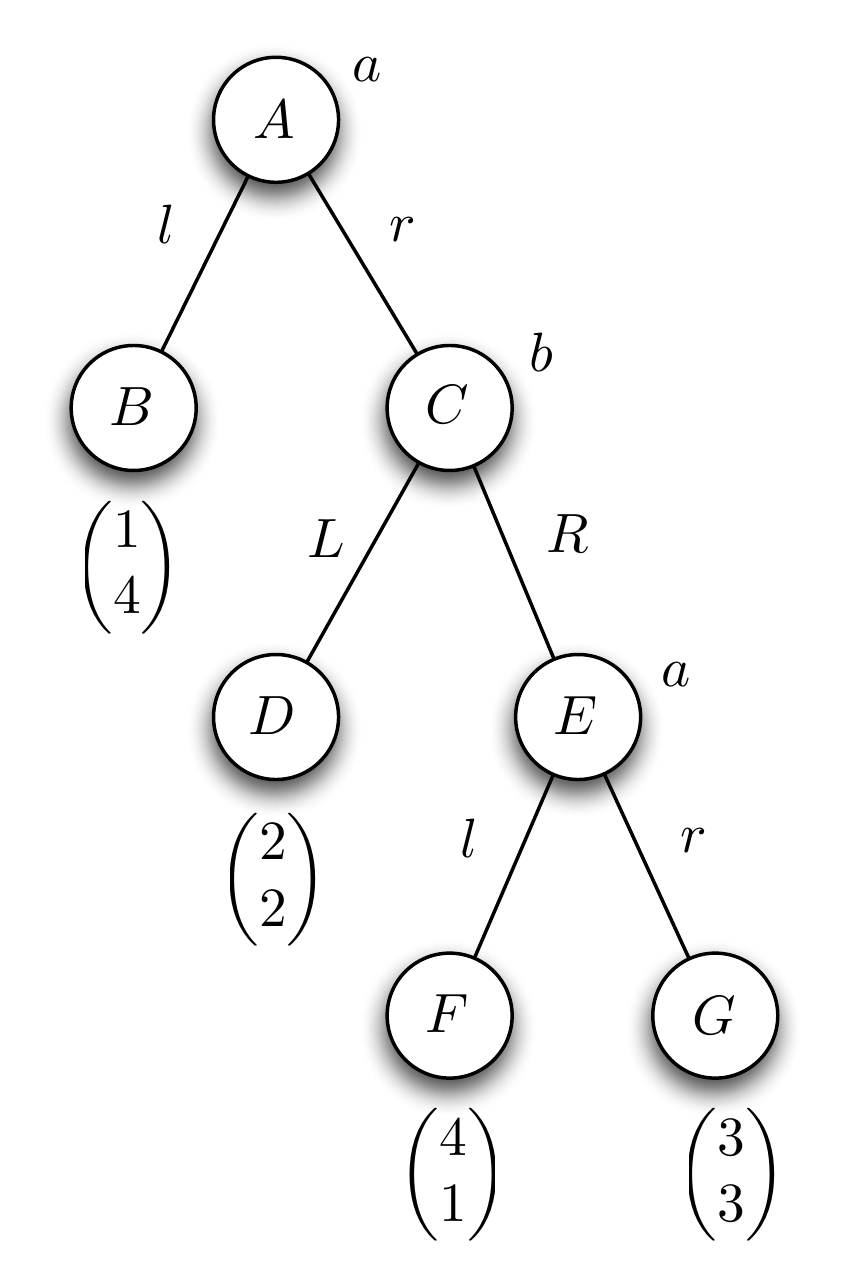}
\end{center}
\caption{A simple extensive form game.}
\label{chap1:fig:extgame}
\end{figure}

This model represents a situation where two
players, $a$ and $b$, take 
turns, with $a$ starting at the
top node $A$. If $a$ plays $l$ (`left') in this node, the game ends in
node $B$ and the payoff for $a$ is $1$ and that for $b$ is $4$.
If $a$, however, plays $r$ in $A$, the game proceeds to node $C$, where it
is $b$'s turn.  Player $b$ has a choice between playing $L$ and $R$
(note that we use upper case to distinguish $b$'s moves from $a$'s moves).
The game continues until a terminal node is reached. We
assume that both players are {\em rational}; that is, each prefers a
higher outcome for themselves over a lower one. What will $a$ play in
the start node $A$?

One way to determine what will happen in this game is to use backward. Consider
node $E$. If that node is reached, given that $a$ is
rational (denoted ${\mathit rat}_a$), $a$ will play $l$
here, since she prefers the outcome $4$ over $3$ (which she would get by
playing $r$). Now consider node $C$. Since $b$ knows that $a$
is rational, he knows that his payoff when playing $R$ at $C$ is
1. Since $b$ is rational, and playing $L$ in $C$ gives him $2$, he
will play $L$. The only thing needed to conclude this is $({\mathit rat}_b
\land K_b{\mathit rat}_a)$. Finally, consider node $A$. Player $a$ can
reason as we just did, so $a$ knows that she has a choice between
the payoff of $2$ she would obtain by playing $r$ and the payoff of $1$ she
would obtain by playing $l$. Since $a$ is rational, she
plays $r$ at $A$. Summarising, the condition that justifies
$a$ playing $r$ at $A$ and $b$ playing $L$ at $B$ is
\[{\mathit rat}_a \land K_a{\mathit rat_b} \land K_aK_b{\mathit rat}_a \land {\mathit rat}_b \land K_b{\mathit rat}_a
\]

This analysis predicts that the game will end in node $D$.  Although
this analysis used only `depth-two' knowledge ($a$ knows that $b$
knows), to perform a similar analysis for longer variants of this game
requires deeper and deeper knowledge of rationality.  In fact, in many epistemic
analyses in game theory, common knowledge of rationality is assumed.
The contribution of epistemic logic to game theory 
is discussed in more detail in
Chapter~\chapref{chap:epistemicfoundationsforgames}.



\subsection{Language}\label{chap1:subsec:language}
Most if not all systems presented in this book extend propositional
logic. The language of propositional logic assumes
a set $\atoms$ of primitive (or atomic) propositions, typically denoted
$p, q, \dots$, possibly with subscripts. They typically refer to
statements that are considered basic; that is, they lack logical
structure, like `it is raining', or `the window is closed'. Classical
logic then uses Boolean operators, such as
$\neg$ (`
not'), $\land$ (`and'), $\lor$, (`or'), $\rightarrow$
(`implies'), and $\leftrightarrow$ (`if and only if'), to build more
complex formulas.  Since all those
operators can be defined in terms of $\land$ and $\neg$ (see
Definition~\ref{1chap:def:abbreviations}), the formal definition of the
language often uses only these two connectives. Formulas
are
denoted with Greek letters: $\varphi, \psi, \alpha, \dots$. So, for
instance, while $(p \land q)$ is the conjunction of two primitive
propositions, the
formula $(\varphi \land \psi)$ is a conjunction of two arbitrary
formulas, each of which may have further structure.

When reasoning about knowledge and belief, we need to be able to refer
to the subject, that is, the agent whose knowledge or belief we are
talking about. To do this, we
assume a finite set $\agents$ of agents. 
Agents are typically denoted $a, b,
\dots, i, j, \dots$, or, in specific examples, $\mathit{Alice},
\mathit{Bob}, \dots$.
To reason about knowledge, we add
operators $K_\agent$ to the language of classical logic, where
$K_\agent\varphi$ denotes `agent $\agent$ knows (or believes)
$\varphi$'.
We typically let the context determine whether $K_\agent$ represents
knowledge or belief. If it is necessary to
reason knowledge and belief simultaneously, we use operators $K_\agent$
for knowledge and $B_\agent$ for belief.
Logics for reasoning about knowledge are sometimes called
\emph{epistemic} logics, while logics for reasoning about belief are
called \emph{doxastic} logics, from the Greek words for knowledge and
belief.
The operators $K_\agent$ and $B_\agent$ are
examples of {\em modal} operators.
We sometimes use $\Box$ or $\Box_\agent$ to denote a generic modal
operator, when we want to discuss general properties of modal operators.

\begin{definition}[An Assemblage of Modal Languages]\label{chap1:def:languages}\index{modal!language}

Let $\atoms$ be a set of primitive propositions, $\operators$ a set of
modal operators, and $\agents$ a set of
agent symbols. Then we define the language $\lan{L}(\atoms,\operators,\agents)$ by the following BNF:
\[
\varphi := p\ \mid\ \neg \varphi\ \mid \ (\varphi \land
\varphi)\ \mid\ \Box\varphi, 
\]
where $p \in \atoms$ and $\Box\in\operators$.
\end{definition}

Typically, the set $\operators$ depends on $\agents$. For instance, the
language for multi-agent epistemic logic is
$\lan{L}(\atoms,\operators,\agents)$, with $\operators = \{K_a \mid a
\in
\agents\}$, that is, we have a knowledge operator for every agent.
To study interactions
between
knowledge and belief, we would have $\operators = \{K_a, B_a \mid a \in
\agents\}$.
The language of propositional logic, which does not involve modal
operators, is denoted $\lan{L}(\atoms)$; \emph{propositional formulas}
are, by definition, formulas in
$\lan{L}(\atoms)$.



\begin{definition}[Abbreviations in the Language]\label{1chap:def:abbreviations}
%
As usual, parentheses are omitted if that does not lead to ambiguity. The
following abbreviations are also standard (in the last one, $\Group
\subseteq \agents$).
\[ \begin{array}{l|l|l}

\text{\it description/name} & \text{\it definiendum} & \text{\it definiens} \\

\hline

\textit{false} & \bot & \atom \et \neg \atom \\
\textit{true} & \top & \neg \bot \\
\text{disjunction} & \phi \vel \psi & \neg (\neg\phi \et \neg\psi) \\
\text{implication} & \phi \imp \psi & \neg \phi \vel \psi \\
\text{dual of $K$} & M_\agent \phi \text{ or }\hat{K}_a \phi & \neg K_\agent \neg \phi \\
\text{everyone in $\Group$ knows} & E_\Group \phi & \Et_{a \in \Group}
K_\agent \phi \\
\end{array} \]
%
Note that $M_a \phi$, which say `agent $a$ does not know $\neg
\phi$', can also be read `agent $a$ considers $\phi$ possible'.
\end{definition}

Let $\Box$ be a modal operator, either one in $\operators$ or one
defined as an abbreviation. We define the $n$th iterated application
of $\Box$, written $\Box^n$, as follows:
\[
\Box^0 \phi = \phi \mbox{ and }  \Box^{n+1} \phi = \Box \Box^n \phi.
\]
%
We are typically interested in iterating the $E_A$ operator, so that we
can talk about `everyone in $A$ knows', `everyone in $A$ knows that
everyone in $A$ knows', and so on.

Finally, we define two measures on formulas. 
\begin{definition}[Length and modal depth]
The {\em length}
$\size{\varphi}$  and the {\em modal depth} $d(\varphi)$ of a formula
$\varphi$ are both defined inductively as follows:
%
\[
\begin{array}{lclclcl}
\size{p} &= &1 &\mbox{and} & d(p) & = & 0 \\
\size{\neg\varphi} & = &\size{\varphi} + 1& \mbox{and} & d(\neg \varphi) & = & d(\varphi) \\
\size{(\varphi \land \psi)} & = & \size{\varphi}+ \size{\psi} +1&\mbox{and} & d(\varphi \land \psi) & = & max\{d(\varphi),d(\psi)\}\\
\size{\Box_a\varphi} & = & \size{\varphi}+ 1& \mbox{and} & d(\Box\varphi)& =& 1 + d(\varphi).
\end{array}\]
%
In the last clause, $\Box_a$ is a modal operator corresponding to a
single agent. Sometimes, if $A
\subseteq \agents$ is a group of agents and $\Box_A$ is a group
operator (like $E_A$, $D_A$ or $C_A$), $\size{\Box_A\varphi}$
depends not only on $\varphi$, but also on the cardinality of $A$.
%
\end{definition}

So, $\size{\Box_a(q \land \Box_bp)} = 5$ and $d(\Box_a(q \land \Box_bp)) = 2$. Likewise, $\size{\Box_aq \land \Box_bp} = 5$ while $d(\Box_aq \land \Box_bp) = 1$.

\subsection{Semantics}\label{chap1:subsec:semantics}
We now define a way to systematically
determine the {\em truth value} of a formula. In propositional logic,
whether $p$ is true or not `depends on the situation'.
The relevant situations are formalised using {\em valuations}, where a
valuation
\[V: \atoms \rightarrow \{\mathit{true}, \mathit{false}\}\]
determines the truth of primitive propositions.  A valuation can be
extended so as to determine the truth of all formulas, using a
straightforward inductive definition:
$\varphi \land \psi$ is true given $V$ iff each of $\varphi$ and $\psi$ is
true given $V$, and
$\neg\varphi$ is true given $V$ iff $\varphi$ is false given
$V$. The truth conditions of disjunctions, implications, and
bi-implications follow directly from these two clauses and
Definition~\ref{1chap:def:abbreviations}.
To model knowledge and belief, we use ideas that go back to Hintikka.
We think of an agent $a$ as considering possible a number of different
situations
that are consistent with the information that the agent has.
Agent $a$ is said to
know (or believe) $\phi$, if $\phi$ is true in all the situations that
$a$ considers possible.  Thus,
rather than using a single situation to give meaning to modal formulas,
we use a {\em set} of such situations; moreover, in each situation, we
consider, for each agent, what other situations he or she considers
possible.  The following example demonstrates how this
is done.

\begin{example}\label{chap1:ex:interviewone}
Bob is invited for a job interview with Alice. They have agreed
that it will take place in a coffeehouse downtown at noon, but the
traffic is quite unpredictable, so it is not guaranteed that either Alice
or Bob
will arrive on time. However, the
coffeehouse is only a 15-minute walk from the bus stop where
Alice plans to go, and a 10-minute walk from the metro station where Bob
plans to go. So, 10 minutes before the interview, both
Alice and Bob will know whether they themselves will arrive on
time. Alice and Bob have never met before.
A Kripke model describing this situation is given in
Figure~\ref{chap1:fig:twoagentmodela}.
\begin{figure}[h!]\center
\begin{center}
\includegraphics[width=0.45\textwidth]{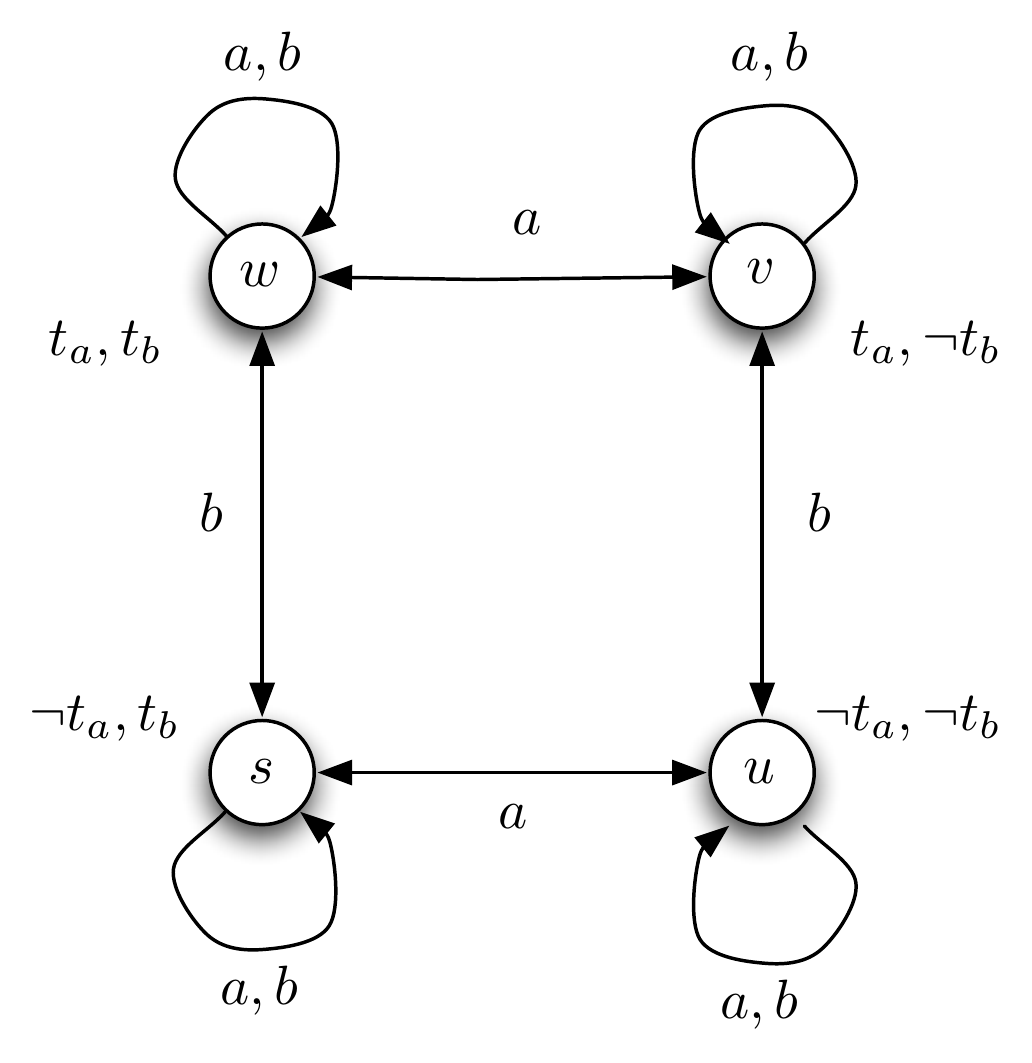}
\end{center}
\caption{The Kripke model for Example~\ref{chap1:ex:interviewone}.}
\label{chap1:fig:twoagentmodela}
\end{figure}

Suppose that at 11:50, both Alice and Bob have just arrived at
their respective stations.  Taking $t_a$ and $t_b$ to represent that
Alice (resp., Bob) arrive on time, this is a situation (denoted $w$ in
Figure~\ref{chap1:fig:twoagentmodela}) where both $t_a$ and $t_b$ are
true. Alice knows that $t_a$ is true (so in $w$ we have
$K_at_a$), but she does not know whether $t_b$ is true; in particular,
Alice considers possible the situation denoted $v$ in
Figure~\ref{chap1:fig:twoagentmodela}, where $t_a
\land \neg t_b$ holds. Similarly, in $w$, Bob considers it possible that
the actual situation is $s$, where Alice is running late but Bob will
make it on time, so that $\neg t_a
\land t_b$ holds. Of course, in $s$, Alice knows that she is late;
that is, $K_a\neg t_a$ holds. Since the only situations that Bob
considers possible at world $w$ are $w$ and
$s$, he knows that he will be on time ($K_bt_b$), and knows that Alice
knows whether or not she is on time ($K_b(K_at_a \lor K_a\neg
t_a)$).  Note that the latter fact follows since $K_a t_a$ holds in
world $w$ and $K_a \neg t_a$ holds in world $s$, so $K_a t_a \lor K_a
\neg t_a$ holds in both worlds that Bob considers possible.
\end{example}

This, in a nutshell, explains what the models for epistemic and doxastic
look like:
they contain a number of situations, typically called {\em states} or
{\em (possible) worlds}, and binary relations on states for each agent,
typically called {\em accessibility relations}.
A pair $(v,w)$ is in the relation for agent $a$ if, in world $v$, agent
$a$ considers state $w$ possible.
Finally, in every state, we need to specify which primitive
propositions are true.

\begin{definition}[Kripke frame, Kripke model]\label{chap1:def:kripkemodel}
Given a set $\atoms$ of primitive propositions and a
set $\agents$ of agents, a {\em Kripke model} is a structure
$M = \langle \States, R^\agents, V^\atoms)$, where
\begin{itemize}
\item
$\States \neq \emptyset$ is a set of states,
sometimes called the {\em domain} of
$M$, and denoted $\Domain(M)$;
\item
$R^\agents$ is a function, yielding an
accessibility relation  $R_a \subseteq \States \times
\States$ for each agent $\agent\in\agents$;
\item
$V^\atoms: S \rightarrow (\atoms \rightarrow \{\mathit{true},\mathit{false}\})$ is a function
that, for all $p\in \atoms$ and $s \in S$, determines what the 
truth value $V^\atoms(s)(p)$ of $p$ is in state $s$
(so $V^\atoms(s)$ is a propositional valuation for each $s \in \States$).
\end{itemize}
We often suppress explicit reference to the sets $\atoms$ and
$\agents$, and write $M = \langle\States, R, V\rangle$, without upper indices.
Further, we sometimes write $s R_a t$ or $R_a s t$ rather than $(s,t) \in R_a$, and use $R_a(s)$ or $R_as$ to denote the set $\{t \in
S \mid R_ast\}$. Finally, we sometimes abuse
terminology and refer to $V$ as a valuation as well.

The class of all Kripke models is denoted $\mathcal{K}$.
We use $\mathcal{K}_m$ to denote the class of Kripke models where
$\size{\agents } = m$.
A {\em Kripke frame} $F = \langle\States,
R\rangle$
focuses on the graph underlying a model, without regard for the
valuation.
\end{definition}

More generally, given a modal logic with a set $\operators$ of modal
operators, the corresponding Kripke model has the form
$M = \langle \States, R^\operators,V^\atoms\rangle$, where there is a
binary relation
$R_\Box$ for every operator $\Box \in \operators$.
$\operators$ may, for example, consist of a knowledge operator for each
agent in some set $\agents$
and a belief operator for each agent in $\agents$.

Given Example~\ref{chap1:ex:interviewone} and
Definition~\ref{chap1:def:kripkemodel}, it should now be clear how the
truth of a formula is determined given a model $M$ and a state $s$. A
pair $(M,s)$ is called a {\em pointed model}; we sometimes drop the
parentheses and write $M,s$.

\begin{definition}[Truth in a Kripke Model]\label{chap1:def:truthdef}
Given a model $M = \langle\States, R^\agents, V^\atoms\rangle$, we
define what it means for a
formula $\varphi$ to be true in $(M,s)$, written $M,s \models
\varphi$, inductively as follows:
\[
\begin{array}{lll}
M,s \models p & \mbox{iff} & V(s)(p) = \mathit{true} {\mbox{ for } p \in
\atoms}\\
M,s \models \varphi \land \psi & \mbox{iff} &
M,s \models \varphi \mbox{ and } M,s \models \psi\\
M,s \models \neg \varphi & \mbox{iff} & \mbox{not } M,s\models
\varphi\  \mbox{ (often written $M,s \not\models\varphi$)}\\
M,s \models K_\agent \varphi & \mbox{iff} & M,t \models \varphi
\mbox{ for all } t
\mbox{ such that } R_\agent st.
\end{array}
\]
More generally, if $M = \langle\States, R^\operators,
V^\atoms\rangle$, then for all $\Box \in \operators$:
\[
M,s \models \Box \varphi  \mbox{ iff }  (M,t) \models \varphi
 \mbox{ for all } t \mbox{ such that } R_\Box st.
\]
%
%
Recall that $M_\agent$ is the dual of $K_\agent$; it easily follows
from the definitions that
$$M,s \models M_\agent \varphi \mbox{ iff there exists some $t$ such
that } R_\agent st \mbox{ and } M,t\models\varphi.$$

%
We write $M \models \phi$ if $M,s \models \phi$ for all $s \in \States$.
\end{definition}
%
%

\begin{example}\label{chap1:ex:interviewtwo}
Consider the model of Figure~\ref{chap1:fig:twoagentmodela}. Note that $K_ap \lor K_a\neg p$ represents the fact that agent $a$ knows {\em whether} $p$ is true. Likewise, $M_ap \land M_a \neg p$ is equivalent to $\neg K_a\neg p \land \neg K_ap$: agent $a$ is ignorant about $p$.
We have the following (in the final items we write $E_{ab}$ instead of $E_{\{a,b\}}$):
\begin{enumerate}
\item $(M,s) \models t_b$:
truth of a primitive  proposition in $s$.
\item $M,s \models (\neg t_a \land K_a\neg t_a \land \neg K_b\neg t_a)
\land (t_b \land \neg K_a t_b \land K_b t_b)$:
at $s$, $a$ knows that $t_a$ is false, but $b$ does not;
similarly, $b$ knows that $t_b$ is true, but $a$ does not.
\item $M \models K_a(K_bt_b \lor K_b\neg t_b) \land K_b(K_at_a \lor
K_a\neg t_a)$:
in all states of $M$, agent $a$ knows that $b$ knows whether $t_b$ is
true, and $b$ knows that $a$ knows whether $t_a$ is true.
\item
$M \models K_a(M_bt_b \land M_b\neg t_b) \land K_b(M_at_a \land M_a\neg t_a)$
in all states of $M$, agent $a$ knows that $b$ does not know whether
$t_a$ is true, and $b$ knows that $a$ does not know whether $t_b$ is
true.
\item $M \models E_{ab}((K_at_a \lor K_a \neg t_a) \land (M_a t_b \land
M_a\neg t_b))$:
in all states, everyone knows that $a$ knows whether
$t_a$ is true, but $a$ does not know whether $t_b$ is true.
\item $M \models E_{ab}E_{ab}((K_at_a \lor K_a \neg t_a) \land (M_a t_b \land
M_a\neg t_b))$:
in all states, everyone knows what we stated in the previous item.
\end{enumerate}
This shows that the model $M$ of Figure~\ref{chap1:fig:twoagentmodela}
is not just a model for a situation where $a$ knows $t_a$ but
not $t_b$ and agent $b$ knows $t_b$ but not $t_a$; it represents much
more information.
\end{example}

As the following example shows, in order to model certain situations, it
may be necessary that some propositional valuations occur in more than
one state in the model.


\begin{example}\label{chap1:ex:interviewthree}
Recall the scenario of the interview between Alice and Bob, as presented
in Example~\ref{chap1:ex:interviewone}. Suppose that we now add the
information that in fact Alice will arrive on time, but Bob is not going
to be on time. Although Bob does not know Alice, he knows that his
friend
Carol is an old friend of Alice. Bob calls Carol, leaving a message on
her machine to ask her to inform Alice about Bob's late arrival as soon
as she is able to do so. Unfortunately for Bob, Carol does not get his
message on time. This situation can be represented in state $M,v$ of
the
model of Figure~\ref{chap1:fig:twoagentmodelb}.

\begin{figure}[h!]\center
\begin{center}
\includegraphics[width=0.75\textwidth]{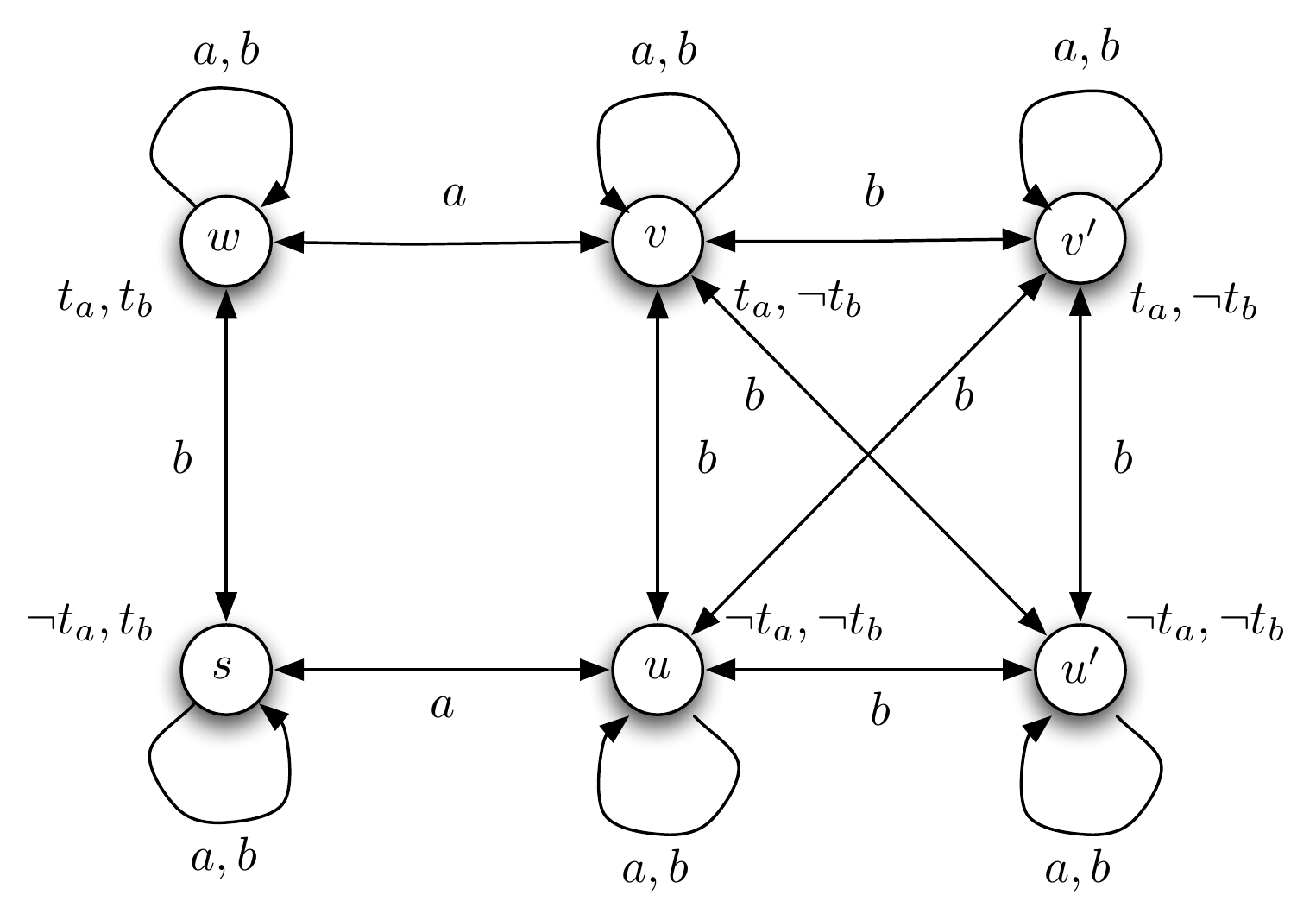}
\end{center}
\caption{The Kripke model for Example~\ref{chap1:ex:interviewthree}.}

\label{chap1:fig:twoagentmodelb}
\end{figure}

Note that in $(M,v)$, we
have $\neg K_a \neg t_b$ (Alice does not know that Bob is late), but
also $M_b(K_a\neg t_b)$ (Bob considers it possible that Alice knows that
Bob is late). So, although the propositional valuations in $v$ and $v'$
are the same, those two states represent different situations: in $v$
agent $a$ is uncertain whether $\neg t_b$ holds, while in $v'$ she knows
$\neg
t_b$. Also, in $M,v$, Bob considers it possible that both of them will
be late, and that Alice knows this: this is because $R_b vu'$ holds in
the model, and $M,u'\models K_a(\neg t_a \land \neg t_b)$.
\end{example}

We often impose restrictions on the accessibility relation.  For
example, we may want to require that if, in world $v$, agent $a$
considers world $w$ possible, then in $w$, agent $a$ should consider $v$
possible.  This requirement would make $R_a$ symmetric.  Similarly, we
might require that, in each world $w$, $a$ considers $w$ itself
possible.  This would make $R_a$ reflexive.  More generally, we are
interested in certain subclasses of models (typically characterized by
properties of the accessibility relations).

\begin{definition}[Classes of models, validity, satisfiability]\label{chap1:def:validity}
Let $\mathcal{X}$ be a class of models, that is, $\mathcal{X} \subseteq
\mathcal{K}$. If
$M\models\varphi$ for all models $M$ in $\mathcal{X}$, we say that {\em
$\varphi$ is valid
in $\mathcal{X}$}, and write $\mathcal{X}\models\varphi$. For example,
for validity in the class of all Kripke models $\mathcal{K}$, we write
$\mathcal{K} \models\varphi$. We write $\mathcal{X}\not\models
\varphi$ when it is not the case that $\mathcal{X} \models \varphi$. So
$\mathcal{X}\not\models\varphi$ holds if, for some model $M \in
\mathcal{X}$ and some $s \in \Domain(M)$, we have $M,s\models
\neg\varphi$.
If there exists a model $M \in \mcx$ and a
state $s \in \Domain({M})$ such that $M,s \models\varphi$, we say that
$\varphi$ is \emph{satisfiable in} $\mcx$.
\end{definition}

We now define a number of classes of models in terms of
properties of the relations $R_a$ in those models. Since they
depend only on the accessibility relation, we
could have defined them for the underlying frames;
indeed, the properties are sometimes called {\em frame
properties}.

\begin{definition}[Frame properties]\label{chap1:def:frameproperties}
Let $R$ be an accessibility relation on a domain of states $\States$.
\begin{enumerate}
\item
$R$ is {\em serial} if for all $s$ there is a $t$ such that $R st$.
The class of serial Kripke models, that is, $\{M = \langle S, R,
V\rangle \mid \mbox{ every }R_a \mbox{ is serial} \}$ is denoted
$\mathcal{KD}$.
\item
$R$ is {\em reflexive} if for all $s$, $R ss$.
The class of reflexive Kripke models  is denoted $\mathcal{KT}$.
\item
$R$ is {\em transitive} if
for all $ s,t,u$, if $R st$ and $R tu$ then $R su$.
The class of transitive Kripke models is denoted
$\mathcal{K}4$.
\item
$R$ is {\em Euclidean} if
for all $s,t,$ and $u$, if $R st$ and $R su$ then  $R tu$.
The class of Euclidean Kripke models is denoted $\mathcal{K}5$
\item
$R$ is {\em symmetric} if
for all $s,t$, if $R st$ then $R ts$.
The class of symmetric Kripke models is denoted $\mathcal{KB}$
\item We can combine properties of relations:
\begin{enumerate}
\item The class of reflexive transitive models is denoted
$\mathcal{S}4$.
\item
The class of transitive Euclidean models is denoted $\mathcal{K}45$.
\item
The class of serial transitive Euclidean models is denoted $\mathcal{KD}45$.
\item
$R$ is an {\em equivalence relation} if $R$ is
reflexive, symmetric, and transitive. It not hard to show that $R$ is
an equivalence relation if $R$ is reflexive and Euclidean.
The class of models  where the relations are equivalence relations is
denoted
$\mathcal{S}5$.

\end{enumerate}
\end{enumerate}
As we did for $\mathcal{K}_m$, we sometimes use the subscript $m$
to denote the number of agents, so $\mathcal{S}5_m$, for instance, is the
class of Kripke models with $\size{\agents } = m$.
\end{definition}

Of special interest in this book is the class $\mathcal{S}5$.
In this case, the accessibility relations are equivalence classes.  This
makes sense if we think of $R_a s t$ holding if $s$ and $t$ are
indistinguishable by agent $a$ based on the information that $a$ has
received.   $\mathcal{S}5$ has typically been used to model knowledge.
In an $\mathcal{S}5$ model,
write $s \sim_a t$ rather than $R_ast$, to emphasize the fact that $R_a$
is an equivalence relation.
When it is clear that $M \in \mathcal{S}5$, when drawing the model, we
omit reflexive arrows, and since the relations are symmetric, we
connect states by a line, rather than using two-way arrows. Finally, we
leave out lines that can be deduced to exist using
transitivity.
We call this the \emph{S5 representation} of a Kripke model.
Figure~\ref{chap1:fig:twoagentmodeld} shows the S5 representation of the
Kripke model of Figure~\ref{chap1:fig:twoagentmodelb}.

\begin{figure}[h!]\center
\begin{center}
\includegraphics[width=0.65\textwidth]{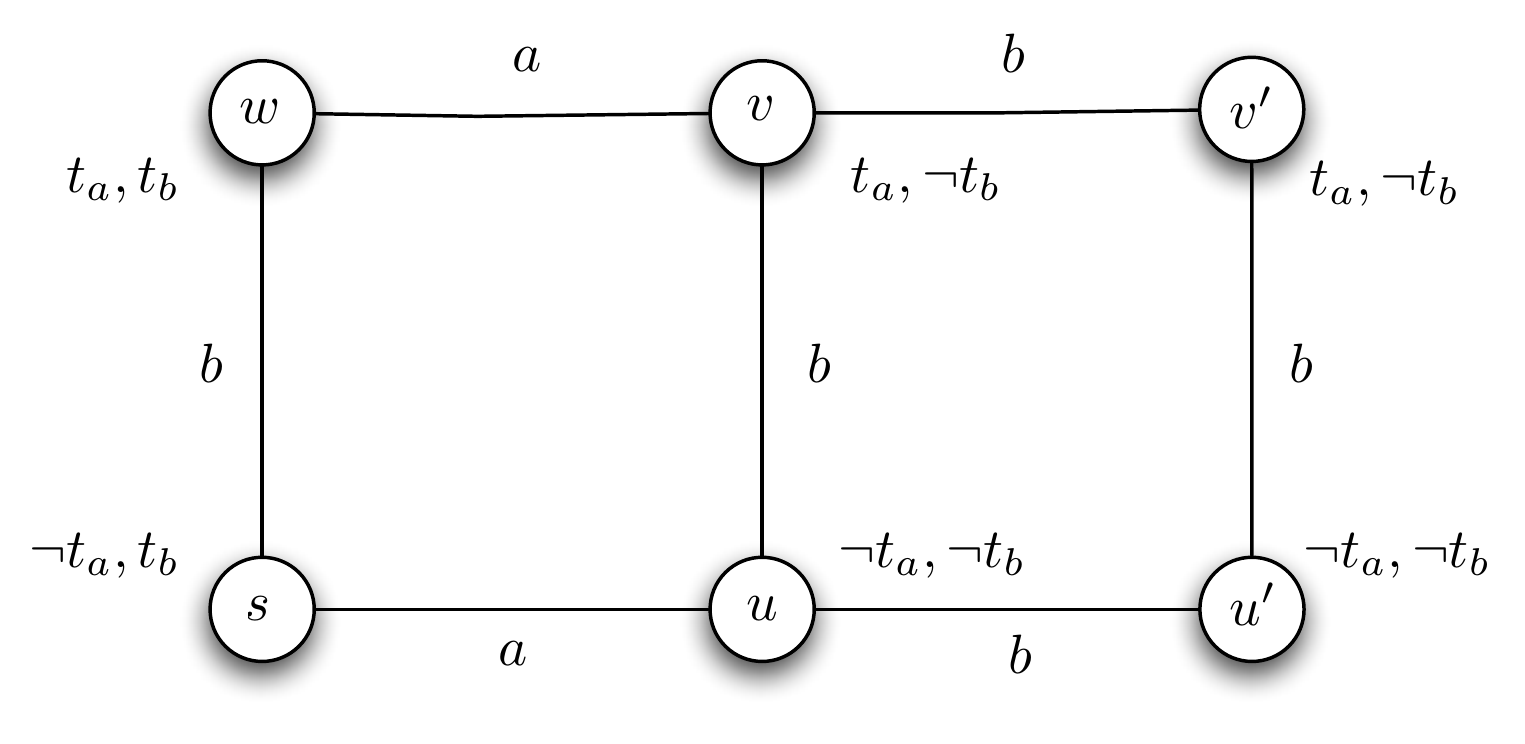}
\end{center}
\caption{The S5 representation of the Kripke model in
Figure~\ref{chap1:fig:twoagentmodelb}.}
\label{chap1:fig:twoagentmodeld}
\end{figure}

When we restrict the classes of models considered, we get some
interesting additional valid formulas.

\begin{theorem}[Valid Formulas]\label{chap1:thm:validities}
Parts (c)--(i) below are valid formulas, where $\alpha$ is a
substitution
instance of a
propositional tautology (see below), $\varphi$ and $\psi$ are arbitrary
formulas, and $\mathcal{X}$ is one of the classes of models defined in
Definition~\ref{chap1:def:frameproperties};
parts (a), (b), and (j) show that we can infer some valid formulas from others.
\begin{enumerate}
\item[(a)]\label{chap1:thm:validities:item:one}
If $\mathcal{X} \models \varphi \rightarrow \psi$ and $\mathcal{X} \models \varphi$, then $\mathcal{X
}\models \psi$.
\item[(b)]\label{chap1:thm:validities:item:two}
If $\mathcal{X} \models \varphi$ then $\mathcal{X} \models K\varphi$.
\item[(c)]\label{chap1:thm:validities:item:three} $\mathcal{X} \models
\alpha$.
\item[(d)]\label{chap1:thm:validities:item:four} $\mathcal{X} \models
K(\varphi \rightarrow \psi) \rightarrow (K\varphi \rightarrow \psi)$.
\item[(e)]\label{chap1:thm:validities:item:five} $\mathcal{KD} \models
K\varphi \rightarrow M\varphi$.
\item[(f)]\label{chap1:thm:validities:item:six}
$\mathcal{T} \models K\varphi \rightarrow \varphi$.
\item[(g)]\label{chap1:thm:validities:item:seven} $\mathcal{K}4 \models
K\varphi \rightarrow KK\varphi$.
\item[(h)]\label{chap1:thm:validities:item:eight} $\mathcal{K}5 \models \neg
K\varphi \rightarrow K\neg K \varphi$.
\item[(i)]\label{chap1:thm:validities:item:nine} $\mathcal{KB} \models \varphi \rightarrow KM\varphi$.
\item[(j)]\label{chap1:thm:validities:item:ten} If $\mathcal{X} \subseteq
\mathcal{Y}$ then $\mathcal{Y} \models \varphi$ implies that
$\mathcal{X}\models\varphi$.
\end{enumerate}
\end{theorem}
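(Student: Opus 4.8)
The plan is to unwind the definition of validity in a class (Definition~\ref{chap1:def:validity}) and verify each item directly from the truth conditions of Definition~\ref{chap1:def:truthdef}. Recall that $\mathcal{X}\models\varphi$ means $M,s\models\varphi$ for every $M\in\mathcal{X}$ and every $s\in\Domain(M)$; so for the object-level validities (c)--(i) I would fix an arbitrary such $M$ and $s$ and push the verification through the inductive clauses. The two inference-rule items (a), (b) and the monotonicity item (j) are essentially bookkeeping. For (a), at a fixed $(M,s)$ the hypotheses give $M,s\models\varphi\imp\psi$ and $M,s\models\varphi$; unfolding $\imp$ via Definition~\ref{1chap:def:abbreviations} together with the clauses for $\neg$ and $\et$ shows that $M,s\models\varphi\imp\psi$ is equivalent to the conditional ``if $M,s\models\varphi$ then $M,s\models\psi$'', whence $M,s\models\psi$. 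For (b), assume $\mathcal{X}\models\varphi$ and fix $M\in\mathcal{X}$, $s$; then $M,t\models\varphi$ at \emph{every} state $t$, in particular at every $t$ with $R st$, which is exactly $M,s\models K\varphi$. Item (j) is immediate, since $\mathcal{X}\subseteq\mathcal{Y}$ means every $M\in\mathcal{X}$ also lies in $\mathcal{Y}$, so validity over the larger class descends to the smaller one.

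For (c) the point is that the Boolean clauses of Definition~\ref{chap1:def:truthdef} mirror ordinary propositional evaluation. Writing $\alpha$ as $\tau[\varphi_1/p_1,\dots,\varphi_n/p_n]$ for a propositional tautology $\tau(p_1,\dots,p_n)$, I would read off from $(M,s)$ the propositional valuation $p_i\mapsto(M,s\models\varphi_i)$ and argue by induction on the Boolean structure of $\tau$ that $M,s\models\alpha$ iff $\tau$ is true under this valuation; since $\tau$ is a tautology it is true under every valuation, so $M,s\models\alpha$. Item (d) -- which I read as the distribution axiom $K(\varphi\imp\psi)\imp(K\varphi\imp K\psi)$ -- follows by applying the modus-ponens step of (a) pointwise: at any successor $t$ of $s$ the hypotheses $K(\varphi\imp\psi)$ and $K\varphi$ give $M,t\models\varphi\imp\psi$ and $M,t\models\varphi$, hence $M,t\models\psi$; as $t$ ranges over all successors this yields $M,s\models K\psi$.

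The heart of the theorem is items (e)--(i), where each validity is driven by exactly one frame property from Definition~\ref{chap1:def:frameproperties}. The uniform recipe is to assume the antecedent at $(M,s)$, spell out the consequent as a statement about successors, and supply the required successor using the relevant property. Thus (e) uses seriality to produce a successor witnessing $M\varphi$ from $K\varphi$; (f) uses reflexivity ($R ss$) to instantiate $K\varphi$ at $s$ itself; (g) uses transitivity to collapse a two-step chain $R st$, $R tu$ into $R su$ and then apply $K\varphi$; and (i) uses symmetry, taking the witness for $M\varphi$ at a successor $t$ to be $s$ itself via $R ts$. The one deserving care is (h): from $M,s\models\neg K\varphi$ I first extract a bad successor $t_0$ with $R st_0$ and $M,t_0\not\models\varphi$, and then, for an arbitrary successor $t$, invoke the Euclidean property on $R st$ and $R st_0$ to get $R tt_0$, so that $t_0$ witnesses $M,t\not\models K\varphi$; since $t$ was arbitrary this gives $M,s\models K\neg K\varphi$.

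No single step is genuinely hard, so the main obstacle is bookkeeping rather than insight: keeping the quantifier alternation straight when a $K$ (universal over successors) sits above an $M$ (existential over successors), as in (h) and (i), and being careful that in (h) the witnessing world $t_0$ is chosen \emph{before} the successor $t$ is fixed. I would also take care to phrase (b), (c) and (d) for a generic operator $\Box$ if the intended claim concerns all modalities, since the arguments use only the truth clause for $\Box$ and never the identity of the agent.
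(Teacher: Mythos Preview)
Your proposal is correct and follows exactly the standard route for establishing these correspondences: unfold the semantics at an arbitrary pointed model, and for (e)--(i) invoke the relevant frame condition to produce or constrain the needed successor. Your handling of (h) is careful in the right place (fixing the witness $t_0$ before quantifying over successors $t$), and you rightly flag that (d) as printed omits a $K$ in the consequent and should be read as the distribution axiom $K(\varphi\imp\psi)\imp(K\varphi\imp K\psi)$.

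There is nothing to compare against: the paper does not supply a proof of this theorem. It states the result, discusses its informal content, and in the Notes refers the reader to \citet[Chapter~2.4]{fagin:95a} for a textbook treatment. Your argument is precisely the kind of proof one finds there.
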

Since $\mathcal{S}5$ is the smallest of the classes of models considered
in Definition~\ref{chap1:def:frameproperties}, it easily follows that all
the formulas and inference rules above are valid in $\mathcal{S}5$. To the
extent that we
view $\mathcal{S}5$ as the class of models appropriate for reasoning
about knowledge, Theorem~\ref{chap1:thm:validities} can be viewed as
describing properties of knowledge.  As we shall see, many of these
properties apply to the standard interpretation of belief as well.

Parts~(a) and (c)
emphasise that we represent
knowledge in a logical framework: modus ponens is valid as a reasoning
rule, and we take all propositional tautologies for granted.
In part (c), $\alpha$ is a {\em
substitution instance} of a propositional tautology.  For example, since
$p \lor \neg
p$ and $p \rightarrow (q \rightarrow p)$ are propositional tautologies,
$\alpha$ could be $Kp \lor \neg Kp$ or $K(p \lor q)
\rightarrow (Kr
\rightarrow K(p \lor q))$.
That is, we can substitute an arbitrary formula (uniformly) for a
primitive proposition in a propositional tautology.
Part (b)
says that agents know all valid formulas, and
part (d) says that an agent is able to
apply modus ponens to his own knowledge.
Part (e) is equivalent to $K\varphi
\rightarrow \neg K\neg\varphi$; an agent cannot at the same time
know a proposition and its
negation. Part (f) is even stronger:
it says that what  an agent knows must be
true. Parts (g) and (h)
represent what
has been called {\em positive} and {\em negative introspection},
respectively: an agent
knows what he knows and what he does not
know. Part (i) can be shown to follow
from the other valid formulas; it says that if something is true, the agent
knows that he considers it possible.

\subsubsection{Notions of Group Knowledge}\label{subsec:groupnotions}
So far, all properties that we have encountered are properties of an
individual agent's knowledge.
such as
$E_\Group$,  defined above.  In this section we
introduce two other notions of group knowledge, {\em common knowledge}
$C_\Group$ and {\em distributed knowledge} $D_\Group$, and
investigate their properties.

%
\begin{example}[Everyone knows and distributed
knowledge]\label{chap1:ex:playground}
Alice and Betty each has a daughter;
 their children can each either be at the playground (denoted $p_a$
and $p_b$, respectively)
or at the library ($\neg p_a$, and $\neg p_b$,
respectively). Each child has been carefully instructed that, if
she ends up being on the playground without the other child, she should
call her mother to inform her.
Consider the situation described by the model $M$ in
Figure~\ref{chap1:fig:groupnotions}.
\begin{figure}[h!]\center
\begin{center}
\includegraphics[width=0.55\textwidth]{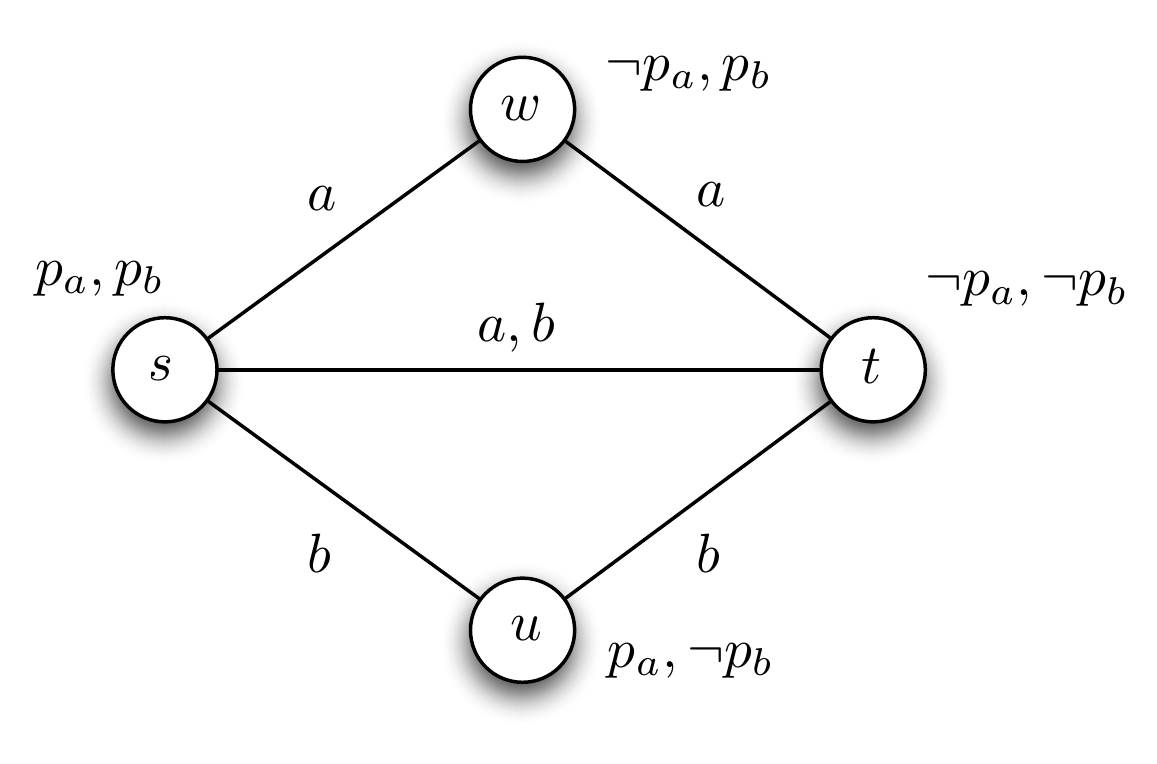}
\end{center}
\caption{The (S5 representation of the) model for
Example~\ref{chap1:ex:playground}.}
\label{chap1:fig:groupnotions}
\end{figure}

We have
$$
M \models ((\neg p_a \land p_b) \leftrightarrow K_a(\neg p_a \land p_b)) \land
 ((p_a \land \neg p_b) \leftrightarrow K_b(p_a \land\neg p_b)).
$$
%
This models the agreement each mother made with her daughter.
%
Now consider the situation at state $s$. We have
$M,s \models K_a\neg(p_a \land \neg p_b)$,
that is, Alice knows that it is not the case that her daughter is
alone at the playground (otherwise her daughter would have informed
her). What does each agent know at $s$? If we consider only
propositional
facts, it is easy to see that Alice knows $p_a \rightarrow p_b$ and
Betty knows $p_b \rightarrow p_a$. What does everyone know at $s$?
%
The following sequence of equivalences is immediate from the definitions:
\[
\begin{array}{ll}
& M,s \models E_{\{a,b\}}\varphi  \\
\mbox{iff} &M,s \models K_a \varphi \land K_b\varphi \\
\mbox{iff} &\forall x(R_asx \Rightarrow M,x\models
\varphi)\ \mbox{ and }\ \forall y(R_bsy \Rightarrow M,y\models
\varphi)\\
\mbox{iff} &\forall x \in \{s, w, t\}\, (M,x\models \varphi) \mbox{
  and } \forall y \in \{s, u, t\}\, (M,y\models \varphi)\\
\mbox{iff} &M \models \varphi.
\end{array}
\]
%

\noindent Thus, in this model, what is known by everyone are just the
formulas valid
in the model.
Of course, this is not true in general.

Now suppose that Alice and Betty an opportunity to talk to each
other. Would they gain any new knowledge? They would indeed.  Since $M,s
\models
K_a(p_a \rightarrow p_b) \land K_b(p_b \rightarrow p_a)$, they would
come to know that $p_a \leftrightarrow p_b$ holds; that is, they would
learn
that their children are at least together, which is certainly not
valid in the model. The knowledge that would emerge if the agents
in a group $A$ were allowed to communicate is  called {\em
distributed knowledge in} $A$, and denoted by the
operator $D_A$.
In our example, we have $M,s \models
D_{\{a,b\}}(p_a \leftrightarrow p_b)$, although $M,s\models
\neg K_a(p_a \leftrightarrow p_b) \land \neg K_b(p_a \leftrightarrow
p_b)$. In other words, distributed knowledge is generally {\em stronger} than
any individual's knowledge, and we therefore cannot define $D_A\varphi$
as $\bigvee_{i\in A}K_i\varphi$, the dual of general knowledge that we
may have expected; that would be weaker than any individual
agent's knowledge. In terms of the model, what would
happen if Alice and Betty could communicate is that Alice could tell
Betty that he should not consider state $u$ possible, while Betty  
could tell Alice that she should not consider state $w$ possible.
So, after communication, the only states considered possible
by both agents at state $s$ are $s$ and $t$. This argument suggests that
we should interpret $D_A$ as the necessity operator ($\Box$-type modal operator) of the relation
$\bigcap_{\agent
\in A}R_\agent$.
By way of contrast, it follows easily from the definitions that $E_A$
can be interpreted as the necessity operator of the relation
$\bigcup_{\agent \in A} R_\agent$.
\end{example}

The following example illustrates common knowledge.

\begin{example}[Common knowledge]\label{chap1:ex:commonknowledge}
This time we have two agents: a sender ($s$) and a receiver ($r$).
If a message is sent,
it is delivered either immediately or with a one-second delay. The
sender sends a message at time $t_0$.  The receiver does not
know that the sender was planning to send the message.
What is each agent's state of knowledge regarding the message?

To reason about this, let $s_z$ (for $z \in \mathbb{Z}$) denote that the
message was sent at time $t_0 + z$, and, likewise, let $d_z$ 
denote that the message was delivered at time $t = z$.
Note that we allow $z$ to be negative.  To see why, consider the world
$w_{0,0}$ where the message arrives
immediately (at time $t_0$). 
(In general, in the subscript $(i,j)$ of a world $w_{i,j}$, $i$ denotes
 the time that the message was sent, and $j$ denotes the time it was received.)
In world $w_{0,0}$, the receiver considers it
possible
that the message was sent at time $t_0 - 1$.  That is, the receiver
considers possible the world $w_{-1,0}$ where the message was sent at $t_0-1$
and took one second to arrive.  In world $w_{-1,0}$, the sender
considers
possible the world $w_{-1,-1}$ where the message was sent at time $t_0
-1$ and arrived immediately.  And in world $w_{-1,-1}$, the receiver
considers
possible a world $w_{-2,-1}$ where the message as sent at time $t_0-2$.
(In general, in world $w_{n,m}$, the message is sent at time $t_0+n$ and
received at time $t_0 + m$.)  In addition, in world $w_{0,0}$, the
sender considers possible world $w_{0,1}$, where the message is received
at time $t_0 +1$.
The situation is described in the following model $M$.
\begin{figure}[h!]\center
\begin{center}
\includegraphics[width=1\textwidth]{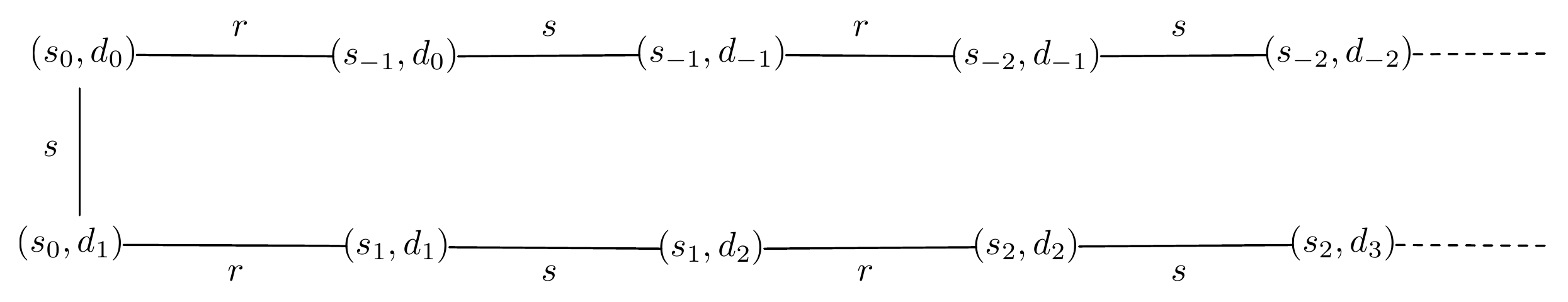}
\end{center}
\caption{The (S5 representation of the) model for
Example~\ref{chap1:ex:commonknowledge}.}
\label{chap1:fig:commonknowledge}
\end{figure}

Writing $E$ for `the sender and receiver both know', it easily follows
that
$$M,w_{0,0} \models s_0 \land d_0 \land \neg E \neg s_{-1}
\land \neg E \neg d_1 \land \neg E^3 \neg s_{-2}.$$

The notion of $\phi$ being {\em common knowledge} among  group $A$,
denoted $C_A\varphi$, is meant to capture the idea that, for all $n$,
$E^n\varphi$ is true. Thus, $\phi$ is {\em not} common
among $A$ if someone in $A$ considers it possible that
someone in $A$ considers it possible that \ldots someone in $A$
considers it possible that $\phi$ is false. 
This is formalised below, but the reader should already be
convinced that in our scenario, even if it is common knowledge among
the agents that messages will have either no delay or a one-second
delay, it is {\em not}
common knowledge that the message was sent at or after time $t_0 - m$
for any value of $m$!
\end{example}

\begin{definition}[Semantics of three notions of group knowledge]
\label{chap01:def:groupnotions}
Let $\group \subseteq \agents$ be a group of agents. Let $R_{E_\group} =
\cup_{a \in\group} R_a$.
%
As we observed above,
$$
(M,s) \models E_\group \varphi \mbox{ iff for all $t$ such that }
R_{E_\group}st, \mbox{ we have } (M,t) \models \varphi.
$$
Similarly, taking $R_{D_\group} = \cap_{a \in \group} R_a$, we have
$$
(M,s) \models D_\group \varphi \mbox{ iff for all $t$ such that }
R_{D_\group}st, \mbox{ we have } (M,t) \models \varphi.
$$

Finally, recall that the {\em transitive closure} of a relation $R$ is
the smallest relation $R^+$ such that $R
\subseteq R^+$, and such that, for all $x, y,$ and $z$, if $R^+xy$ and
$R^+yz$ then $R^+xz$.
We define $R_{C_\Group}$ as $R_{E_\group}^+ =
(\bigcup_{\agent\in\Group} R_\agent)^+$. Note that, in
Figure~\ref{chap1:fig:commonknowledge}, {\em every} pair of states is
in the relation $R_{C_{\{r,s\}}}^+$. In general, we have
$R_{C_\group}st$ iff there is some path $s = s_0, s_1, \dots, s_n = t$
%
from $s$ to $t$ such that $n\ge 1$ and, for all $i < n$, there is some
agent $a\in\group$ for which $R_as_is_{i+1}$. Define
$$
(M,s) \models C_\group \varphi \mbox{ iff for all $t$ such that }
R_{C_\group} st, (M,t) \models \varphi.
$$
%
\end{definition}

%
It is almost immediate from the definitions that, for $a \in A$, we have
\begin{equation}\label{chap1:eq:groupknowledge}
\mathcal{K} \models (C_\Group\varphi \rightarrow E_\Group\varphi)
\land (E_\Group\varphi \rightarrow  K_\agenta\varphi) \land
(K_\agent\varphi \rightarrow D_\Group\varphi).
\end{equation}
%
Moreover, for $\mathcal{T}$ (and hence also for $\mathcal{S}4$ and
$\mathcal{S}5$), we have
$$\mathcal{T} \models D_a \phi \rightarrow \phi.$$

The relative strengths shown in \eqref{chap1:eq:groupknowledge} are strict in the
sense that none of the converse implications are valid (assuming that $A
\neq
\{a\}$).

%
We conclude this section by defining some languages that are used later
in this chapter.  Fixing $\atoms$ and $\agents$, we write $\lan{L}_X$ for
the language $\lan{L}(\atoms,\operators,\agents)$, where
\[
\begin{array}{ll}
X = K & \mbox{if } \operators = \{K_a \mid a \in \agents\}\\
X = CK & \mbox{if } \operators = \{K_a, C_A \mid a \in \agents, A \subseteq \agents\}\\
X = DK & \mbox{if } \operators = \{K_a, D_A \mid a \in \agents, A \subseteq \agents\}\\
X = CDK & \mbox{if } \operators = \{K_a, C_A, D_A \mid a \in
\agents, A \subseteq \agents\}\\
X = EK & \mbox{if } \operators = \{K_a, E_A \mid a \in
\agents, A \subseteq \agents\}.\\
\end{array}
\]

\subsubsection{Bisimulation}\index{bisimulation|(}
It may well be that two models $(M,s)$ and $(M',s')$ `appear
different', but still satisfy the same formulas. For example,
consider the  models $(M,s)$, $(M',s')$, and $(N,s_1)$ in
Figure~\ref{chap1:fig:similarmodels}.
As we now show, they satisfy the same formulas.
We actually prove something even stronger.  We show that
all of $(M,s)$, $(M,t)$,
$(M',s')$, $(N,s_1)$, $(M,s_2)$, and $(N,s_3)$ satisfy the same
formulas, as do all 
of $(M,u)$, $(M,w)$, $(M',w')$, $(N,w_1)$, and $(N,w_2)$.  For the
purposes of the proof, call the models in the first group {\em green},
and the models in the second group {\em red}.
We now show, by induction on the structure of formulas, that all green
models satisfy the same formulas, as do all red models.
For primitive propositions, this is immediate. And if two
models of the same colour agree on two formulas, they also agree on
their negations and their conjunctions. The other formulas we need to
consider are knowledge formulas. Informally, the argument is this. Every
agent considers, in every pointed model, both green and red models
possible. So his knowledge in each pointed model is the same.
We now formalise this reasoning.


\begin{definition}[Bisimulation] \label{del.exp.bisim.bisimdef}
Given models $M= (\States ,R,V)$ and $M'=( \States',R',V' )$,
a non-empty relation $\bisrel \subseteq \States
\times \States'$\index{R@$\bisrel$} is a {\em bisimulation between $M$
and $M'$} iff for all
$\state
\in \States$ and $\state' \in \States'$ with $(\state,\state') \in
\bisrel$:

\begin{itemize}
\item
$V(\state)(p)=V'(\state')(p)$ for all $p \in
\atoms$;
\item
for all $\agent \in \agents$ and all $\statet \in \States$, if
$R_\agent\state\statet$, then there is a $\statet'\in
\States'$ such that $R'_\agent\state'\statet'$ and
$(\statet,\statet') \in \bisrel$;
\item
for all $\agent \in \agents$ and all $\statet' \in \States'$,
if $R'_\agent\state'\statet'$, then there is a
$\statet \in \States$ such that $R_\agent\state\statet$
and $(\statet,\statet') \in \bisrel$.
\end{itemize}

We write $(M,\state) \bisim (M',\state')$ iff there is a bisimulation
between $M$ and $M'$ linking $\state$ and $\state'$. If so, we
call
$(M,\state)$ and $(M',\state')$ {\em bisimilar}.
\end{definition}

Figure~\ref{chap1:fig:similarmodels} illustrates some bisimilar models.
\begin{figure}[h!]\center
\begin{center}
\includegraphics[width=0.7\textwidth]{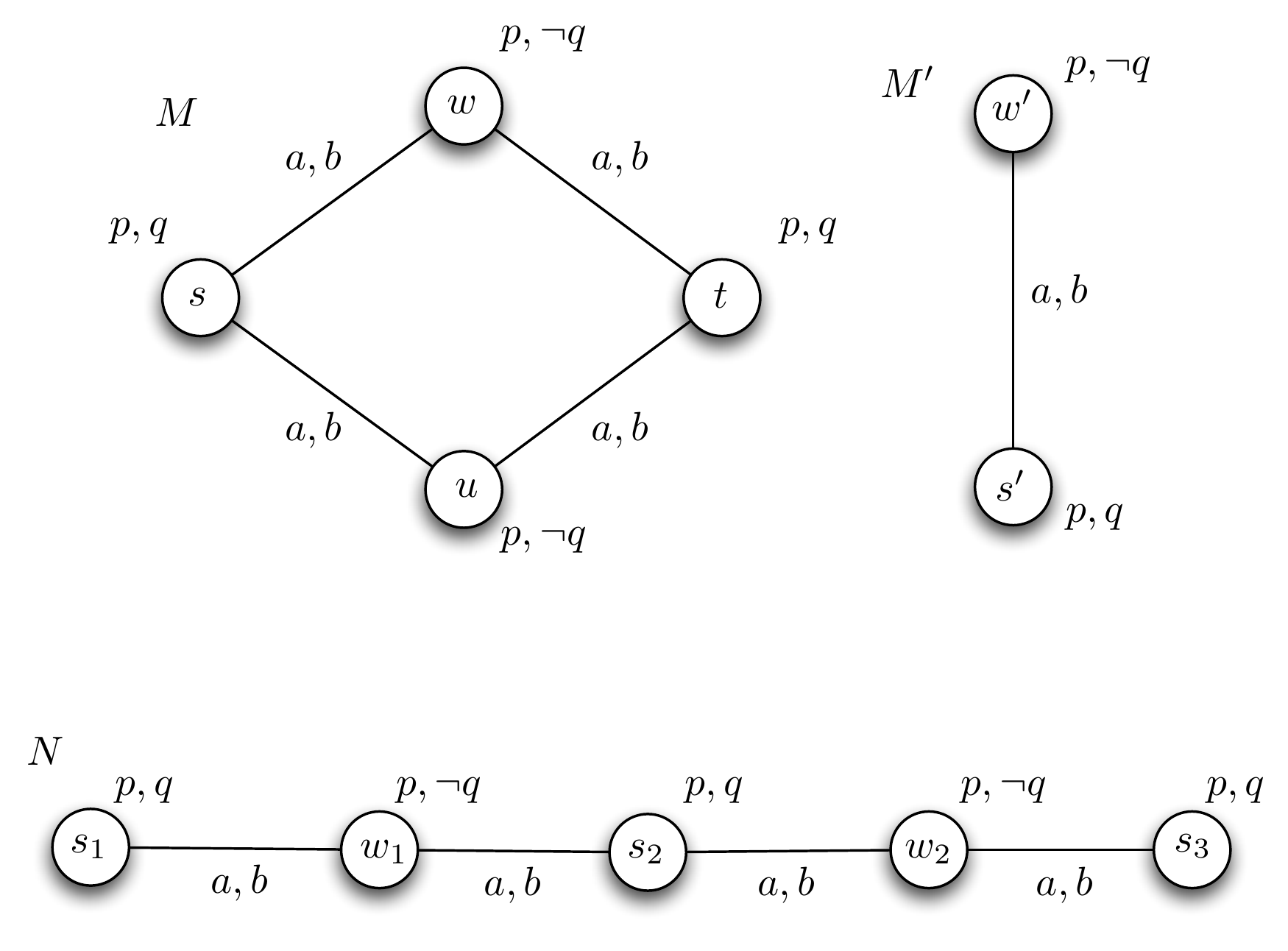}
\end{center}
\caption{Bisimilar models.}
\label{chap1:fig:similarmodels}
\end{figure}
In terms of the models of Figure~\ref{chap1:fig:similarmodels}, we have
$M,s \bisim M',s'$,  $M,s \bisim N,s_1$, etc.  We are interested in
bisimilarity because, as the
following theorem shows, bisimilar models satisfy the same formulas involving the operators $K_\agent$ and $C_\group$.

\begin{theorem}[Preservation under bisimulation]\label{chap1:thm:bisim}

Suppose that $(M,s) \bisim (M',s')$. Then, for all
formulas $\phi \in \lan{L}_{CK}$, we have
\[ M,s\models \varphi\Leftrightarrow M',s'\models \varphi.
\]
\end{theorem}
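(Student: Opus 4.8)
The plan is to prove, by induction on the structure of $\varphi \in \lan{L}_{CK}$, the slightly stronger statement that for \emph{every} pair $(u,u') \in \bisrel$ one has $M,u \models \varphi \Leftrightarrow M',u' \models \varphi$; the theorem is then the instance $(u,u')=(s,s')$. Strengthening the claim to range over all pairs in $\bisrel$ is essential, because the modal clauses carry us from the distinguished pair to other related pairs, where the inductive hypothesis must still apply. The base and Boolean cases are routine: for a primitive proposition $p$, the first clause of Definition~\ref{del.exp.bisim.bisimdef} gives $V(u)(p)=V'(u')(p)$, so $M,u \models p$ iff $M',u' \models p$; and for $\neg\psi$ and $\psi_1 \land \psi_2$ the equivalence is immediate from the inductive hypothesis applied to the subformulas at the same pair $(u,u')$.

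For $\varphi = K_a\psi$ I would use the two ``zig-zag'' clauses of the bisimulation. Suppose $(u,u') \in \bisrel$ and $M,u \models K_a\psi$; to show $M',u' \models K_a\psi$, take any $t'$ with $R'_a u' t'$. The back clause supplies a $t$ with $R_a u t$ and $(t,t') \in \bisrel$; since $M,u \models K_a\psi$ we have $M,t \models \psi$, and the inductive hypothesis at $(t,t')$ yields $M',t' \models \psi$. As $t'$ was arbitrary, $M',u' \models K_a\psi$. The converse direction is symmetric, using the forth clause in place of the back clause.

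The case $\varphi = C_A\psi$ is the crux, since $C_A$ is interpreted over $R_{C_A} = (\bigcup_{a \in A} R_a)^+$ rather than over a single accessibility relation, so the basic zig-zag clauses do not apply directly. The key is a path-lifting lemma: if $(u,u') \in \bisrel$ and $R_{C_A} u t$, then there is a $t'$ with $R_{C_A} u' t'$ and $(t,t') \in \bisrel$ (and symmetrically with the roles of $M$ and $M'$ exchanged). I would prove this by a subsidiary induction on the length $n$ of a witnessing path $u = u_0, u_1, \dots, u_n = t$ in which each step $R_{a_i} u_i u_{i+1}$ uses some $a_i \in A$: starting from $(u_0,u') \in \bisrel$, I repeatedly apply the forth clause to the step $R_{a_i} u_i u_{i+1}$ to obtain a $u'_{i+1}$ with $R'_{a_i} u'_i u'_{i+1}$ and $(u_{i+1},u'_{i+1}) \in \bisrel$, thereby building a matching $A$-path $u' = u'_0, \dots, u'_n = t'$ in $M'$, whence $R_{C_A} u' t'$ and $(t,t') \in \bisrel$. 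Granting the lemma, suppose $M,u \models C_A\psi$ and take any $t'$ with $R_{C_A} u' t'$; the symmetric form of the lemma supplies a $t$ with $R_{C_A} u t$ and $(t,t') \in \bisrel$, so $M,t \models \psi$, and the inductive hypothesis at $(t,t')$ gives $M',t' \models \psi$. Hence $M',u' \models C_A\psi$, and the converse is symmetric.

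The main obstacle is exactly this $C_A$ case: one must first recognise that bisimilarity propagates not merely across single accessibility steps but along entire $A$-paths, which forces the auxiliary induction on path length to be carried out \emph{before} the outer induction on formula structure can be closed. Once the path-lifting lemma is in hand, every case is a direct unwinding of the truth definition and the bisimulation clauses.
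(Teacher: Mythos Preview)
Your proof is correct and follows essentially the same approach as the paper, which simply states that the argument proceeds by induction on the structure of formulas and leaves the details to the reader. You have supplied those details carefully, including the path-lifting lemma for the $C_A$ case, which is exactly the standard way to close that step.
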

The proof of the theorem proceeds by induction on the structure of 
formulas, much as in our example.
We leave the details to the reader.

Note that Theorem~\ref{chap1:thm:bisim} does not claim that distributed knowledge is preserved under bisimulation, and indeed, it is not, i.e., Theorem~\ref{chap1:thm:bisim} does not hold for a language with  $D_\group$ as an operator.
Figure~\ref{chap1:fig:distknowledge} provides a witness for this.   We leave it to the reader to check that
although
$(M,s) \bisim (N,s_1)$ for the two pointed models of
Figure~\ref{chap1:fig:distknowledge}, we nevertheless have $(M,s)
\models \neg
D_{\{a,b\}} p$ and $(N,s_1) \models D_{\{a,b\}}p$.

\begin{figure}[h!]\center
\begin{center}
\includegraphics[width=0.8\textwidth]{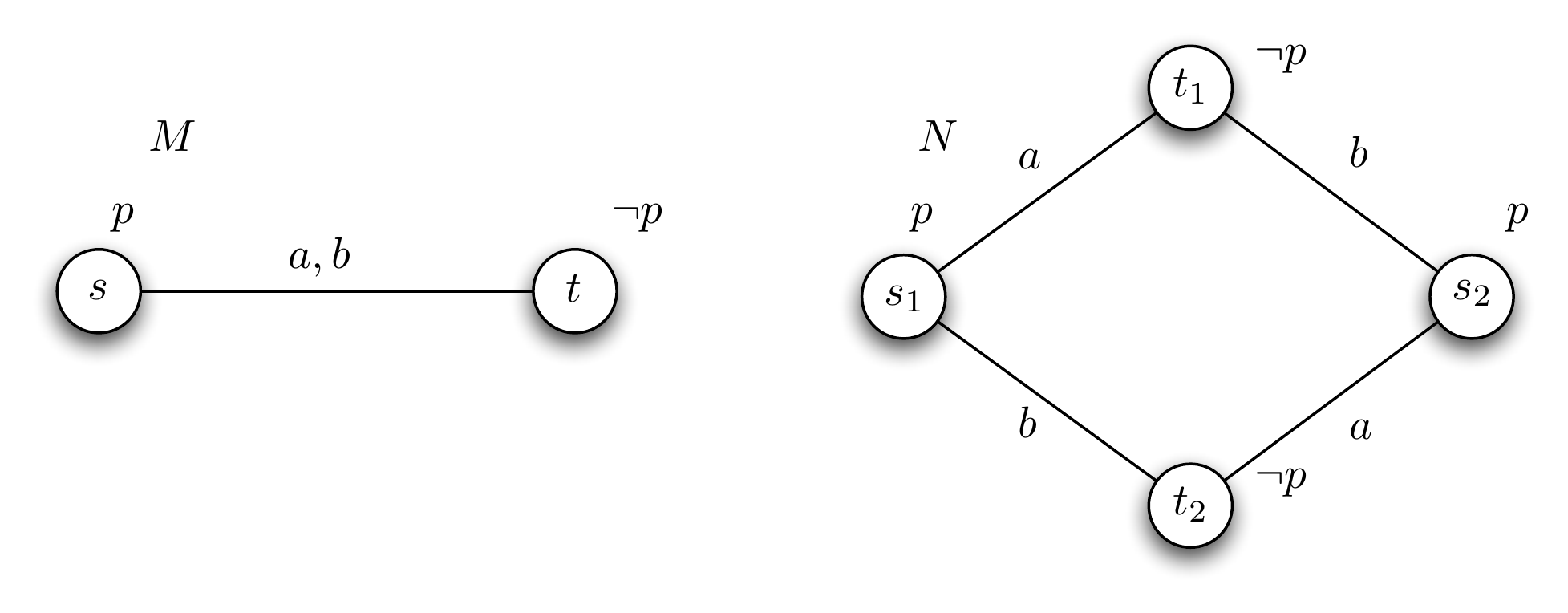}
\end{center}
\caption{Two bisimilar models that do not preserve distributed
knowledge.}
\label{chap1:fig:distknowledge}
\end{figure}
\index{bisimulation|)}

We can, however, generalise the notion of bisimulation to that of a {\em
  group bisimulation} and `recover' the preservation theorem, as
follows. If $\group\subseteq\agents$,
$\state$ and $\statet$ are states, then we write $R_\group\state\statet$ if
$A=\{a \mid R_a\state\statet\}$. That is,
$R_\group\state\statet$ holds if the set of agents $\agent$
for which $\state$ and $\statet$ are $\agent$-connected is exactly
$\group$. $(M,s)$ and $(M',s')$ are \emph{group bisimilar},
written $(M,s) \bisim_{\mathit group} (M',s')$, if the conditions of
Definition~\ref{del.exp.bisim.bisimdef} are met when every
occurrence of an individual agent $a$ is replaced by the group $A$.
Obviously, being group bisimilar implies being bisimilar. Note that
the models $(M,s)$ and $(N,s_1)$ of
Figure~\ref{chap1:fig:distknowledge} are bisimilar, but not group
bisimilar. The proof of Theorem~\ref{chap1:thm:groupbisim} is
analogous to that of Theorem~\ref{chap1:thm:bisim}.

\begin{theorem}[Preservation under bisimulation]\label{chap1:thm:groupbisim}
Suppose that $(M,s) \bisim_{\mathit group} (M',s')$. Then, for all
formulas $\phi \in \lan{L}_{CDK}$, we have
\[ M,s\models \varphi\Leftrightarrow M',s'\models \varphi.
\]
\end{theorem}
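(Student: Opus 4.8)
The plan is to prove the biconditional by induction on the structure of $\varphi$, exactly along the lines announced for Theorem~\ref{chap1:thm:bisim}; the only genuinely new work is the clause for the distributed-knowledge operator $D_\group$. Fix a group bisimulation $\bisrel$ with $(\state,\state')\in\bisrel$. The base case, $\varphi$ a primitive proposition, is immediate from the first clause of group bisimulation, and the cases $\varphi=\neg\psi$ and $\varphi=\psi_1\et\psi_2$ pass through the induction hypothesis in the usual way.

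The conceptual key is to translate between the accessibility relations the operators refer to and the \emph{exact-group} relation $R_\group$ appearing in the definition of group bisimulation. Observe that $R_\agent\state\statet$ holds iff there is some group $B$ with $\agent\in B$ and $R_B\state\statet$, and that $R_{D_\group}\state\statet$ (that is, $R_\agent\state\statet$ for \emph{every} $\agent\in\group$) holds iff there is some group $B$ with $\group\subseteq B$ and $R_B\state\statet$. In particular, the forth and back clauses of group bisimulation, which are stated for the relations $R_B$, immediately yield the corresponding forth and back clauses for each individual $R_\agent$; this is the remark that being group bisimilar implies being bisimilar. Consequently the clause for $K_\agent$ goes through verbatim from the ordinary case, and the clause for $C_\group$ goes through by the same path-following argument as in the proof of Theorem~\ref{chap1:thm:bisim}: given a state $\statet'$ with $R_{C_\group}\state'\statet'$, witnessed by a path from $\state'$ whose steps are individual $\group$-edges, one builds a matching path out of $\state$ by applying the (recovered individual) back clause step by step, keeping the endpoints $\bisrel$-related, and then invokes the induction hypothesis on $\psi$. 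Note that here one really does need the full induction hypothesis, since $\psi$ inside $C_\group\psi$ may itself contain $D$, so the $C_\group$ case cannot simply be imported from Theorem~\ref{chap1:thm:bisim}.

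The main step, and the reason for introducing group bisimulation, is the $D_\group$ clause. Suppose $M,\state\models D_\group\psi$; I will show $M',\state'\models D_\group\psi$, the converse being symmetric. Let $\statet'$ be any state with $R_{D_\group}\state'\statet'$, and put $B=\{\agent\mid R'_\agent\state'\statet'\}$, so that $R'_B\state'\statet'$ holds and $\group\subseteq B$. Applying the back clause of group bisimulation to the group $B$ yields a state $\statet$ with $R_B\state\statet$ and $(\statet,\statet')\in\bisrel$. Since $\group\subseteq B$, we have $R_\agent\state\statet$ for every $\agent\in\group$, i.e.\ $R_{D_\group}\state\statet$; hence $M,\statet\models\psi$ by the assumption, and the induction hypothesis gives $M',\statet'\models\psi$. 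As $\statet'$ was arbitrary, $M',\state'\models D_\group\psi$.

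The point to watch, and the expected main obstacle, is precisely that the witness $\statet$ supplied by the back clause must be connected to $\state$ by a group $B$ large enough to contain all of $\group$ \emph{simultaneously}. Ordinary bisimulation supplies a matching witness one agent at a time and gives no single state realising every agent of $\group$ at once, which is exactly why it fails for $D_\group$; the counterexample of Figure~\ref{chap1:fig:distknowledge} exhibits this gap. Group bisimulation is engineered so that its forth/back conditions are stated for the exact-group relations $R_B$, and since $R_{D_\group}$ is the union of these relations over all $B\supseteq\group$, the required simultaneous witness is delivered automatically.
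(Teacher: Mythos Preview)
Your proof is correct and follows exactly the approach the paper indicates, namely a structural induction on $\varphi$ analogous to that of Theorem~\ref{chap1:thm:bisim}; in fact you supply the details the paper omits, in particular the crucial $D_\group$ clause and the observation that $R_{D_\group}\state\statet$ holds iff $R_B\state\statet$ for some $B\supseteq\group$, which is precisely what makes the group back/forth conditions deliver a simultaneous witness. One small expository point: since in the inductive steps you apply the induction hypothesis to pairs $(\statet,\statet')$ other than the fixed $(\state,\state')$, the statement being proved by induction should be formulated as ``for all $(\state,\state')\in\bisrel$, $M,\state\models\varphi$ iff $M',\state'\models\varphi$'' rather than for a single fixed pair.
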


\subsection{Expressivity and Succinctness}
If a number of formal languages can be used to
model similar phenomena, a natural question to ask is which language
is `best'. Of course, the answer depends on how `best' is measured.
In the next section, we compare various languages in terms of
the computational complexity of some reasoning
problems.  Here, we consider the
notions of {\em expressivity} (what can be expressed in the language?)
and {\em succinctness} (how economically can one say it?).

\subsubsection{Expressivity}

To give an example of expressivity and the tools that are used to study
it, we start by showing that finiteness of models cannot be expressed in
epistemic logic, even if the language includes operators for common
knowledge and distributed knowledge.

\begin{theorem}\label{thm:finite}
There is no formula $\phi\in\lan{L}_{CDK}$ such that, for all
$\mathcal{S}5$-models $M = \langle S, R, V\rangle$,
\[
M \models \varphi \mbox{ iff } S \mbox{ is finite}
\]
\end{theorem}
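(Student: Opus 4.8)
The plan is to prove this via a compactness-style or Ehrenfeucht–Fraïssé/bisimulation argument, exploiting the fact that any single formula $\varphi \in \lan{L}_{CDK}$ has a fixed, finite modal depth $d(\varphi)$ and mentions only finitely many atoms. The key intuition: a formula of bounded modal depth cannot "count" arbitrarily far along a chain, so it cannot distinguish a suitable finite $\mathcal{S}5$-model from an infinite one. If such a $\varphi$ existed, it would have to be true in every finite $\mathcal{S}5$-model and false in every infinite one; I would exhibit a finite model and an infinite model that $\varphi$ cannot tell apart, yielding a contradiction.

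Concretely, the plan is to use the preservation results already established. First I would fix a candidate $\varphi$ and let $n = d(\varphi)$. The obstacle is that Theorem~\ref{chap1:thm:groupbisim} gives preservation only under \emph{exact} group bisimulation, which for infinite-versus-finite models may be too strong to arrange directly. So instead I would pass to a bounded (depth-$n$) notion of equivalence: since $\varphi$ has modal depth $n$ and common knowledge $C_\group$ effectively unfolds as $\bigwedge_{k} E_\group^k$, I would argue that truth of $\varphi$ at a world depends only on the submodel reachable within $n$ steps of the relation $R_{E_\group}$, together with the $C_\group$-components it can see. The cleanest route is to build two $\mathcal{S}5$-models over a single atom, say $p$: a finite model $M_N$ consisting of a long $\sim_a/\sim_b$-alternating path (or cycle) of length depending on a parameter $N \gg n$, and an infinite model $M_\infty$ consisting of the analogous infinite path, arranged so that every world forces the same propositional valuation pattern.

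The central step is then to show $M_N, s \models \varphi \Leftrightarrow M_\infty, s' \models \varphi$ for appropriately chosen base points $s, s'$, for all $\varphi$ with $d(\varphi) \le n$, provided $N$ is large relative to $n$. This I would establish by a depth-bounded bisimulation (a back-and-forth system of relations $Z_n \subseteq Z_{n-1} \subseteq \cdots \subseteq Z_0$, where $Z_0$ matches valuations and $Z_{k+1}$ additionally requires the forth/back clauses of Definition~\ref{del.exp.bisim.bisimdef} into $Z_k$, adapted to the \emph{group} relations $R_\group$ so that $D_\group$ is also handled as in Theorem~\ref{chap1:thm:groupbisim}). A standard induction shows that $Z_n$-related worlds agree on all formulas of modal depth $\le n$, and for the $C_\group$ operator one verifies that the reachability structure (which $C_\group$-component a world lies in) is identical in $M_N$ and $M_\infty$ by construction, so common-knowledge subformulas are also matched. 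Since a single long-enough finite path and the infinite path are depth-$n$ indistinguishable but differ in finiteness of $S$, the formula $\varphi$ cannot separate them.

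\textbf{The main obstacle} I anticipate is the $C_\group$ operator: unlike $K_\agent$ and $D_\group$, common knowledge is \emph{not} of bounded modal depth in its semantic reach, since $R_{C_\group}$ is the transitive closure $R_{E_\group}^+$ and can look arbitrarily far. The resolution is to design the models so that their $C_\group$-connected components are structurally identical—for instance, making both $M_N$ (as a cycle) and $M_\infty$ a single $R_{C_\group}$-component in which $p$ has the same global truth pattern—so that $C_\group$-subformulas evaluate identically regardless of depth, while the \emph{local} $K$- and $D$-structure is controlled by the depth-$n$ bisimulation. Getting these two requirements (global $C$-invariance and local depth-$n$ indistinguishability) to hold simultaneously for finite-versus-infinite models is the delicate part of the construction; everything else is a routine induction on formula structure.
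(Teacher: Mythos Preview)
Your proposal could in principle be made to work, but it is built on a false premise that sends you down an unnecessarily hard road. You write that full group bisimulation ``may be too strong to arrange directly'' between a finite and an infinite $\mathcal{S}5$-model, and therefore retreat to depth-bounded bisimulations plus a separate ad hoc treatment of $C_\group$. In fact the opposite is true: it is easy to exhibit a finite $\mathcal{S}5$-model and an infinite one that are genuinely group bisimilar in the sense of Definition~\ref{del.exp.bisim.bisimdef} (adapted to groups). For instance, take $M$ to have a single state $s$ with all reflexive loops, and $M'$ to have infinitely many states $s_1, s_2, \ldots$ with every $R_a$ the universal relation and the same valuation everywhere; the relation $\{(s,s_i)\mid i\in\Nat\}$ is a group bisimulation. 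The paper's proof is exactly this: two such models are displayed, observed to be group bisimilar, and Theorem~\ref{chap1:thm:groupbisim} is invoked once to conclude that they satisfy the same $\lan{L}_{CDK}$-formulas. Done.

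By contrast, your route requires you to (i) set up a depth-stratified back-and-forth system for the group relations, (ii) prove a preservation lemma for $\lan{L}_{DK}$ at bounded depth, and then (iii) separately control the $C_\group$-components so that common-knowledge subformulas, whose semantic reach is unbounded, still match. Step~(iii) is genuinely delicate with your path-versus-cycle construction (nested formulas like $C_\group K_a C_\group\psi$ force you to match not just component membership but the full internal structure of each component), and once you make the components match well enough for all such nestings you have essentially rebuilt a full group bisimulation anyway. So the detour buys nothing here; the direct application of Theorem~\ref{chap1:thm:groupbisim} is both shorter and avoids the $C_\group$ obstacle entirely.
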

\begin{proof}
  Consider the two models $M$ and $M'$ of Figure~\ref{fig:finite}.
\begin{figure}[h]\center
\begin{center}
\includegraphics[width=0.7\textwidth]{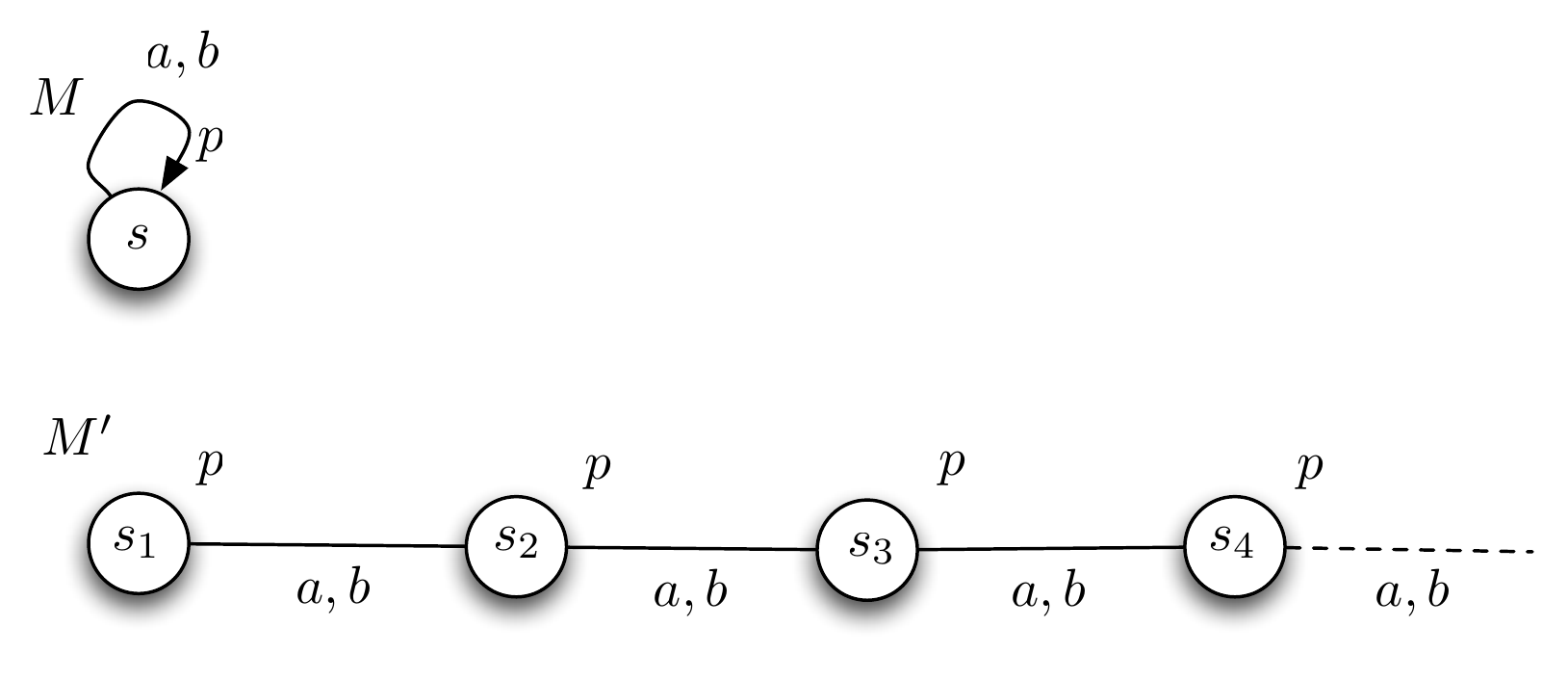}
\caption{A finite and an infinite model where the same formulas
are valid.}\label{fig:finite}
\end{center}
\end{figure}
%
  Obviously, $M$ is finite and $M'$ is not. Nevertheless, the two models are
easily seen to be group bisimilar, so they cannot be distinguished by epistemic
  formulas. More precisely, for all formulas $\varphi \in
  \lan{L}_{CDK}$, we have $M,s \models \varphi$ iff $M', s_1 \models
  \varphi$ iff $M', s_2 \models \varphi$ iff $M', s_n \models \varphi$
  for some $n \in \Nat$, and hence $M \models \varphi$ iff $M' \models
  \varphi$.
\end{proof}


It follows immediately from Theorem~\ref{thm:finite} that finiteness
cannot be expressed in the language $\lan{L}_{CDK}$ in a class ${\mathcal
X}$ of models containing ${\mathcal S}5$.

We next prove some results that let us compare the expressivity of two
different languages.  We first need some definitions.

\begin{definition}
Given a class ${\mathcal X}$ of models, formulas $\varphi_1$ and $\varphi_2$
are \emph{equivalent on} ${\mathcal X}$, written $\varphi_1 \equiv_{\mathcal X} \varphi_2$,
  if, for all $(M,s) \in \mathcal{X}$, we have that $M,s \models
  \varphi_1$ iff $M,s \models \varphi_2$.
Language $\lan{L}_2$ is {\em at least as expressive as}
$\lan{L}_1$ on ${\mathcal X}$, written $\lan{L}_1 \sqsubseteq_{\mathcal
X}\lan{L}_2$ if, for every formula $\varphi_1\in\lan{L}_1$, there is a formula
$\varphi_2
\in \lan{L}_2$ such that $\varphi_1 \equiv_{\mathcal X} \varphi_2$.
$\lan{L}_1$ and $\lan{L}_2$ are {\em equally expressive} on ${\mathcal X}$ if
$\lan{L}_1\sqsubseteq_{\mathcal X}\lan{L}_2$ and
$\lan{L}_2\sqsubseteq_{\mathcal X}\lan{L}_1$. If
$\lan{L}_1\sqsubseteq_{\mathcal X}\lan{L}_2$  but
$\lan{L}_2 \not\sqsubseteq_{\mathcal X} \lan{L}_1$, then
$\lan{L}_2$ is {\em more expressive than} $\lan{L}_1$ on
${\mathcal X}$, written $\lan{L}_1 \sqsubset_{\mathcal X}\lan{L}_2$.
\end{definition}

Note that if ${\mathcal Y} \subseteq {\mathcal X}$,
then  $\lan{L}_1 \sqsubseteq_{\mathcal X}\lan{L}_2$ implies
$\lan{L}_1 \sqsubseteq_{\mathcal Y} \lan{L}_2$, while $\lan{L}_1
\not\sqsubseteq_{\mathcal Y}\lan{L}_2$ implies $\lan{L}_1
\not\sqsubseteq_{\mathcal X}\lan{L}_2$.
Thus, the strongest results that we can show for the classes of models
of interest to us are $\lan{L}_1 \sqsubseteq_{\mathcal K} \lan{L}_2$ and
$\lan{L}_1 \not\sqsubseteq_{\mathcal{S}5}\lan{L}_2$

With these definitions in hand, we can now make precise that common
knowledge `really adds' something to epistemic
logic.

\begin{theorem}\label{thm:firstexpress}
$\lan{L}_K \sqsubseteq_{\mathcal{K}} \lan{L}_{CK}$ and
$\lan{L}_K \not\sqsubseteq_{\mathcal{S}5} \lan{L}_{CK}$.
\end{theorem}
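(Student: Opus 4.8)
The plan is to treat the two conjuncts separately, since one is essentially trivial and the other carries all the content. For the inclusion $\lan{L}_K \sqsubseteq_{\mathcal{K}} \lan{L}_{CK}$, I would simply observe that $\lan{L}_K$ is a syntactic \emph{sublanguage} of $\lan{L}_{CK}$: the operator set $\{K_a \mid a \in \agents\}$ of $\lan{L}_K$ is contained in the operator set $\{K_a, C_A \mid a \in \agents, A \subseteq \agents\}$ of $\lan{L}_{CK}$, so every $\varphi_1 \in \lan{L}_K$ is \emph{already} a formula of $\lan{L}_{CK}$. Taking $\varphi_2 := \varphi_1$ gives $\varphi_1 \equiv_{\mathcal{K}} \varphi_2$ on the nose, and indeed on every class of models. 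This part needs only a one-line remark.

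The substantive half asserts that common knowledge genuinely adds expressive power, i.e.\ that some $\lan{L}_{CK}$ formula is not equivalent on $\mathcal{S}5$ to any $\lan{L}_K$ formula. My witness would be $C_{\{a,b\}} p$ (a two-agent common-knowledge formula), and I would argue by contradiction: suppose $C_{\{a,b\}} p \equiv_{\mathcal{S}5} \psi$ for some $\psi \in \lan{L}_K$, and let $n = d(\psi)$ be its (finite) modal depth. The idea is that $C_{\{a,b\}} p$ can ``see arbitrarily far'' along chains of $\sim_a$- and $\sim_b$-steps, whereas any fixed $\lan{L}_K$ formula can only probe to depth $n$. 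I would therefore build, for this $n$, two pointed $\mathcal{S}5$ models that agree on all $\lan{L}_K$ formulas of depth $\leq n$ but disagree on $C_{\{a,b\}} p$, contradicting the supposed equivalence.

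Concretely, I would use a ``caterpillar'' path $x_0, x_1, \dots, x_{n+1}$ in which the $a$-clusters are $\{x_0,x_1\}, \{x_2,x_3\}, \dots$ and the $b$-clusters are $\{x_1,x_2\}, \{x_3,x_4\}, \dots$ (all other pairs related only reflexively), so that each $\sim_a, \sim_b$ really is an equivalence relation and the model lies in $\mathcal{S}5$. In the first model $M$, set $p$ true at $x_0,\dots,x_n$ and false at $x_{n+1}$; since $R_{C_{\{a,b\}}} = (\sim_a \cup \sim_b)^+$ lets $x_0$ reach $x_{n+1}$, we get $M, x_0 \models \neg C_{\{a,b\}} p$. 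In the companion model $M'$, keep the identical frame but make $p$ true \emph{everywhere}, so $M', x_0 \models C_{\{a,b\}} p$. The only difference between the two models sits at $x_{n+1}$, at distance $n+1$ from $x_0$.

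The one genuine lemma I need is a \emph{finite-depth} preservation fact: if two pointed models are linked by a relation satisfying the back-and-forth conditions of Definition~\ref{del.exp.bisim.bisimdef} for the first $n$ rounds (an ``$n$-bisimulation''), then they satisfy exactly the same $\lan{L}_K$ formulas of modal depth $\leq n$. This is proved by induction on formula structure exactly as Theorem~\ref{chap1:thm:bisim}, except the induction on $\Box$-formulas decrements the available depth, and only $\lan{L}_K$ operators (no $C_A$) are allowed. Because the $\neg p$ state $x_{n+1}$ is unreachable within $n$ steps of $x_0$, the obvious identity-matching relation is an $n$-bisimulation between $(M,x_0)$ and $(M',x_0)$, so they agree on $\psi$; yet they split on $C_{\{a,b\}} p$, the desired contradiction. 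The main obstacle is precisely this lemma: one must set up the truncated back-and-forth relation carefully and check that $\mathcal{S}5$-hood (reflexivity, symmetry, transitivity of each $\sim_a$) is preserved by the caterpillar construction, so that the counterexample legitimately lives in $\mathcal{S}5$ rather than in general $\mathcal{K}$.
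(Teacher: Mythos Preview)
Your proposal is correct and takes essentially the same approach as the paper: both use alternating $a/b$ caterpillar $\mathcal{S}5$ models whose only difference lies at distance $n{+}1$ from the root, together with a depth-$n$ indistinguishability claim (your $n$-bisimulation lemma is exactly the paper's displayed equation \eqref{eq:depth}, which it states as ``easy to show''). The only cosmetic differences are that the paper uses $C_{\{a,b\}}\neg p$ rather than $C_{\{a,b\}}p$ and presents the models as an $n$-indexed family $(M_n,N_n)$ before fixing $n=d(\varphi)$, whereas you fix $n$ first and then build a single pair.
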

\begin{proof}
Since $\lan{L}_K \subseteq \lan{L}_{CK}$, it is obvious that $\lan{L}_K
\sqsubseteq_{\mathcal{K}} \lan{L}_{CK}$.  To show that $\lan{L}_{CK}
\not\sqsubseteq_{\mathcal{S}5}
\lan{L}_{K}$,  consider the sets of pointed models
${\mathcal M} = \{(M_n,s_1)\mid n\in\Nat\}$ and ${\mathcal N}
=\{(N_n,t_1)\mid n \in \Nat\}$ shown in Figure~\ref{fig:express
common}. The two models $M_n$ and $N_n$ differ only in $(M_n,s_{n+1})$
(where $p$ is false) and
$(N_n,t_{n+1})$ (where $p$ is true). In
particular, the first $n-1$ states of $(M_n,s_1)$ and $(N_n,t_1)$
are the same. As a consequence, it is easy to show that,
\begin{equation}\label{eq:depth}
\mbox{for all }   n \in \Nat \mbox{ and } \varphi \in \lan{L}_K \mbox{
with } d(\varphi) < n, \ (M_n,s_1) \models \varphi \mbox{ iff }
(N_n,t_1)\models\varphi.
\end{equation}
%
Clearly ${\mathcal M} \models C_{\{a,b\}} \neg p$ while
${\mathcal N} \models \neg C_{\{a, b\}} \neg p$. If there were a
formula $\varphi \in \lan{L}_K$  equivalent to $C_{\{a,b\}}\neg
p$, then we would have ${\mathcal M} \models \varphi$ while ${\mathcal
N} \models \neg\varphi$.  Let $d = d(\varphi)$,
and consider the pointed models $(M_{d+1},s_1)$ and
$(N_{d+1},t_1)$. Since the first is a member of ${\mathcal M}$ and the
second of ${\mathcal N}$, the pointed models disagree on $C_{\{a,b\}}
\neg p$;
however, by \eqref{eq:depth}, they agree on $\varphi$. This is obviously
a contradiction, therefore a formula $\varphi \in \lan{L}$ that is equivalent to
$C_{\{a,b\}}\neg p$ does not exist.

\begin{figure}[h!]\center
\begin{center}
\includegraphics[width=0.9\textwidth]{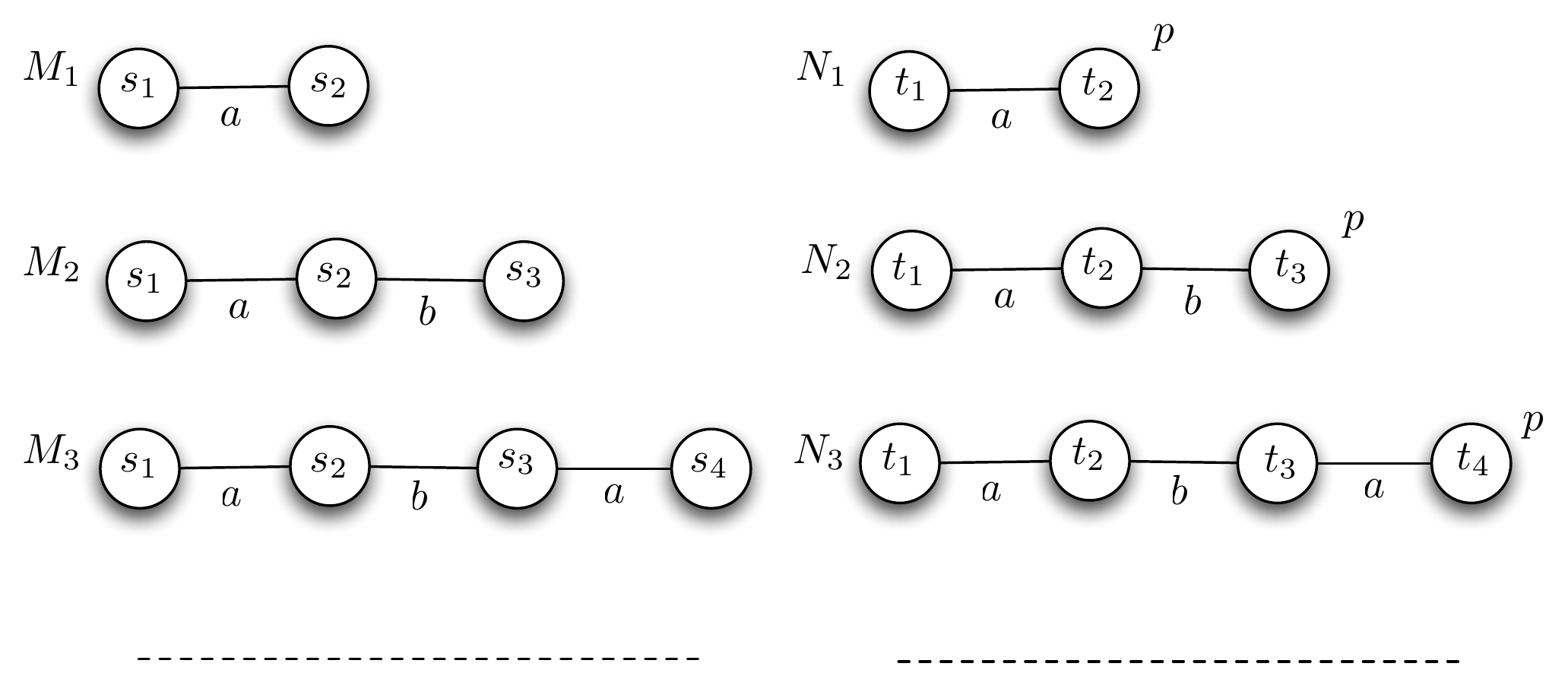}
\caption{Models $M_n$ and $N_n$. The atom $p$ is only true in the pointed models $(N_n,s_{n+1})$.}\label{fig:express common}
\end{center}
\end{figure}
\end{proof}


The next result shows, roughly speaking, that distributed knowledge is
not expressible using knowledge and common knowledge, and that common
knowledge is not expressible using knowledge and distributed knowledge.

\begin{theorem}\label{thm:expressiveness}
\begin{enumerate}
\item[(a)] $\lan{L}_K \sqsubseteq_{\mathcal{K}} \lan{L}_{DK}$ and
$\lan{L}_K \not\sqsubseteq_{\mathcal{S}5} \lan{L}_{DK}$;
\item[(b)] $\lan{L}_{CK} \not\sqsubseteq_{\mathcal{S}5} \lan{L}_{DK}$;
\item[(c)] $\lan{L}_{DK} \not\sqsubseteq_{\mathcal{S}5} \lan{L}_{CK}$;
\item[(d)] $\lan{L}_{CK} \sqsubseteq_{\mathcal{K}} \lan{L}_{CDK}$ and
$\lan{L}_{CDK} \not\sqsubseteq_{\mathcal{S}5} \lan{L}_{CK}$;
\item[(e)] $\lan{L}_{DK} \sqsubseteq_{\mathcal{K}} \lan{L}_{CDK}$  and
$\lan{L}_{CDK} \not\sqsubseteq_{\mathcal{S}5} \lan{L}_{DK}$.
\end{enumerate}
\end{theorem}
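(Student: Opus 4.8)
The plan is to reduce all five items to two genuine non-expressibility lemmas. The positive halves, namely $\lan{L}_K \sqsubseteq_{\mathcal{K}}\lan{L}_{DK}$, $\lan{L}_{CK}\sqsubseteq_{\mathcal{K}}\lan{L}_{CDK}$ and $\lan{L}_{DK}\sqsubseteq_{\mathcal{K}}\lan{L}_{CDK}$, hold immediately because the source language is syntactically contained in the target, and an inclusion over $\mathcal K$ transfers to every subclass. The non-inclusion halves all assert that the richer operator genuinely adds expressive power (matching the prose preceding the theorem), and they reduce to: (I) the formula $D_{\{a,b\}}p$ has no equivalent in $\lan{L}_{CK}$ on $\mathcal{S}5$; and (II) the formula $C_{\{a,b\}}\neg p$ has no equivalent in $\lan{L}_{DK}$ on $\mathcal{S}5$. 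From (I) I get (c) directly; since $\lan{L}_K\subseteq\lan{L}_{CK}$, the witness $D_{\{a,b\}}p$ then has no $\lan{L}_K$ equivalent either (the substantive half of (a)), and since $D_{\{a,b\}}p\in\lan{L}_{DK}\subseteq\lan{L}_{CDK}$ I also get (d). From (II) I get (b), and since $C_{\{a,b\}}\neg p\in\lan{L}_{CK}\subseteq\lan{L}_{CDK}$ I get (e).

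For Lemma (I) I would simply reuse the two bisimilar pointed models $(M,s)$ and $(N,s_1)$ of Figure~\ref{chap1:fig:distknowledge}, for which $(M,s)\models\neg D_{\{a,b\}}p$ while $(N,s_1)\models D_{\{a,b\}}p$. Because $(M,s)\bisim(N,s_1)$, Theorem~\ref{chap1:thm:bisim} guarantees that the two models satisfy exactly the same formulas of $\lan{L}_{CK}$; hence no $\lan{L}_{CK}$ formula can agree with $D_{\{a,b\}}p$ at both points, so $D_{\{a,b\}}p$ has no $\lan{L}_{CK}$ equivalent on $\mathcal{S}5$.

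For Lemma (II) I would mimic the proof of Theorem~\ref{thm:firstexpress}, keeping the same family of $\mathcal{S}5$ models $M_n,N_n$ (the alternating $a/b$ paths of length $n+1$, with $p$ true only at the last state of $N_n$), for which $\mathcal{M}\models C_{\{a,b\}}\neg p$ and $\mathcal{N}\models\neg C_{\{a,b\}}\neg p$. The crucial extra observation is that in these path models the equivalence classes of $a$ and of $b$ meet only in singletons, so $R_a\cap R_b$ is the identity; consequently $D_{\{a,b\}}$ is semantically vacuous, $D_{\{a,b\}}\psi\equiv\psi$, while $D_{\{a\}},D_{\{b\}}$ coincide with $K_a,K_b$. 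Every $\lan{L}_{DK}$ formula is therefore equivalent on these models to an $\lan{L}_K$ formula of no greater modal depth, so the depth-bounded agreement \eqref{eq:depth} extends verbatim from $\lan{L}_K$ to $\lan{L}_{DK}$: for all $n$ and all $\varphi\in\lan{L}_{DK}$ with $d(\varphi)<n$, we have $(M_n,s_1)\models\varphi$ iff $(N_n,t_1)\models\varphi$. If $C_{\{a,b\}}\neg p$ were equivalent to some $\varphi\in\lan{L}_{DK}$ of depth $d$, then evaluating at $(M_{d+1},s_1)$ and $(N_{d+1},t_1)$ reproduces the contradiction of Theorem~\ref{thm:firstexpress}.

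The main obstacle is Lemma (II), and within it the distributed-knowledge collapse: I must check carefully that $R_a\cap R_b$ really is the identity on the chosen models and that the induced translation of $\lan{L}_{DK}$ into $\lan{L}_K$ does not raise modal depth, so that the bound driving \eqref{eq:depth} survives. A related delicate point is the empty-group operator $D_\emptyset$, whose relation is the universal relation; on the connected models $M_n,N_n$ it would satisfy $D_\emptyset\neg p\equiv C_{\{a,b\}}\neg p$ and thus distinguish them regardless of $n$, wrecking the argument. I would handle this by following the standard convention that the group operators range over nonempty groups (individual knowledge already covering singletons); alternatively one can neutralise $D_\emptyset$ by giving $M_n$ and $N_n$ a common, separately attached $p$-state, so that the universal modality carries identical information in both and the path component, on which the collapse holds, still separates them via $C_{\{a,b\}}\neg p$.
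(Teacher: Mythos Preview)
Your proposal is correct and follows essentially the same route as the paper: the bisimilar pair of Figure~\ref{chap1:fig:distknowledge} together with Theorem~\ref{chap1:thm:bisim} handles (a), (c) and (d), while the chain models of Theorem~\ref{thm:firstexpress} with the observation that \eqref{eq:depth} extends to $\lan{L}_{DK}$ handles (b) and (e). Your explicit justification of that extension via the collapse $R_a\cap R_b=\mathrm{id}$ on the chain models, and your flagging of the $D_\emptyset$ issue, are in fact more careful than the paper, which simply asserts that \eqref{eq:depth} carries over to $\lan{L}_{DK}$ without comment.
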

\begin{proof}
For part (a),  $\sqsubseteq_{\mathcal{K}}$ holds trivially.  We use the
models in Figure~\ref{chap1:fig:distknowledge} to show that
$\lan{L}_{DK} \not\sqsubseteq_{\mathcal{S}5} \lan{L}_K$.
Since $(M,s)
\bisim (N,s_1)$,
the models verify the same $\lan{L}$-formulas. However, $\lan{L}_{DK}$
discriminates them: we have $(M,s) \models \neg D_{\{a,b\}}p$, while
$(N,s_1) \models D_{\{a,b\}} p$.  Since $(M,s)$ and $(N,s_1)$ also
verify the same $\lan{L}_{CK}$-formulas, part (3) also follows.

For part (b), observe that \eqref{eq:depth} is also true for all formulas
$\varphi \in \lan{L}_{DK}$, so the formula $C_{\{a,b\}}\neg p \in
\lan{L}_{CK}$ is not equivalent to a formula in $\lan{L}_{DK}$.

Part (c) is proved using exactly the same models and
argument as part (a).

For part (d), $\sqsubseteq$ is
obvious. To show that $\lan{L}_{CDK} \not\sqsubseteq_{{\mathcal S}5}
\lan{L}_{DK}$, we can use the models and argument of part (b).
Similarly, for part (e), $\sqsubseteq$ is
obvious. To show that $\lan{L}_{CDK} \not\sqsubseteq_{{\mathcal S}5}
\lan{L}_{DK}$, we can use the models and argument of part (a).
\end{proof}

We conclude this discussion with a remark about distributed knowledge. We
informally described distributed knowledge in a group as the knowledge
that would obtain were the agents in that group able to
communicate. However, Figure~\ref{chap1:fig:distknowledge} shows that
this intuition is not quite right. First, observe that both
$a$ and $b$ know the same formulas in $(M,s)$ and $(N,s_1)$; they even
know the same formulas in $(M,s)$ and $(N,s_1)$. That is, for all
$\varphi\in \lan{L}_K$, we have
\[(M,s) \models K_a \varphi \mbox{ iff } (M,s) \models K_b\varphi \mbox{ iff } (N,s_1) \models K_a \varphi \mbox{ iff } (N,s_1) \models K_b\varphi\]

But if both agents possess the same knowledge in $(N,s_1)$, how can
communication help them in any way, that is, how can it be that there is
distributed knowledge (of $p$) that no individual agent has?
Similarly, if $a$ has the same knowledge in $(M,s)$ in $(N,s_1)$, and so does
$b$, why would communication in one model ($N$) lead them to know
$p$, while in the other, it does not? Semantically, one could argue that
in $s_1$ agent $a$ could `tell' agent $b$ that $t_2$ `is not possible',
and $b$ could `tell' $a$ that $t_1$ `is not possible'. But how would
verify the same formulas? This observation has led some researchers to
require that distributed knowledge be interpreted in what are
called {\em
bisimulation contracted models} (see the notes at the end of the chapter for
references). Roughly, a model is bisimulation contracted if it does not
contain two points that are bisimilar. Model $M$ of
Figure~\ref{chap1:fig:distknowledge} is bisimulation contracted, model
$N$ is not.

\subsubsection{Succinctness}
Now suppose that two languages $\lan{L}_1$ and $\lan{L}_2$ are equally
expressive on ${\mathcal X}$, and also that their computational
complexity of the reasoning problems for them is equally good, or
equally bad. Could we still prefer one language over the other? {\em
Representational succinctness} may provide an answer here: it may be the
case that the description of some properties is much shorter in
one language than in the other.

But what does `much shorter' mean?  The fact that there is a
formula $\lan{L}_1$ whose length is 100 characters less than the shortest
equivalent formula in $\lan{L}_2$ (with respect to some class
$\mathcal{X}$ of models) does not by
itself make $\lan{L}_1$ much more succinct that $\lan{L}_2$. 

We want to capture the idea that $\lan{L}_1$ is exponentially more
succinct than $\lan{L}_2$.  We cannot do this by looking at just one
formula.  Rather, we need a sequence of formulas $\alpha_1, \alpha_2,
\alpha_3, \ldots$ in $\lan{L}_1$, where the gap in size between $\alpha_n$ and
the shortest formula equivalent to $\alpha_n$ in $\lan{L}_2$ grows
exponentially in $n$.  This is formalised in the next definition.

\begin{definition}[Exponentially more succinct]
Given a class ${\mathcal X}$ of models, $\lan{L}_1$ is
\emph{exponentially more succinct} than $\lan{L}_2$ on ${\mathcal X}$ if
the following conditions hold:
\begin{enumerate}
\item[(a)] for every formula $\beta \in \lan{L}_2$, there is a formula
$\alpha \in \lan{L}_1$ such that $\alpha \equiv_{\mathcal{X}} \beta$ and
$\size{\alpha} \le \size{\beta}$.
\item[(b)] there exist $k_1, k_2 > 0$, a sequence $\alpha_1, \alpha_2,
\ldots$ of formulas in $\lan{L}_1$, and a sequence $\beta_1, \beta_2,
\ldots$ of formulas in $\lan{L}_2$ such that, for all $n$, we have:
\begin{itemize}
\item[(i)] $\size{\alpha_n} \le k_1 n$;
\item[(ii)]  $\size{ \beta_n} \ \geq 2^{k_2n}$;
\item[(iii)] $\beta_n$ is the shortest formula in $\lan{L}_2$ that is
equivalent to $ \alpha_n$ on ${\mathcal X}$.
\end{itemize}
\end{enumerate}
\end{definition}

In words, $\lan{L}_1$ is exponentially more succinct than $\lan{L}_2$ if,
for every formula $\beta \in \lan{L}_2$, there is a formula in
$\lan{L}_1$ that is equivalent and no longer than $\beta$, but there is
a sequence $\alpha_1, \alpha_2, \ldots$ of formulas in $\lan{L}_1$ whose
length increases at most linearly, but there is no sequence $\beta_1,
\beta_2, \ldots$ of formulas in $\lan{L}_2$ such that $\beta_n$ is the
equivalent to $\alpha_n$ and the length of the formulas in the latter
sequence is increasing better than exponentially.


We give one example of succinctness results here. Consider the language
$\lan{L}_{EK}$.
Of course, $E_A$ can be defined using the modal
operators $K_i$ for $i \in A$.  But, as we now show, having the modal
operators $E_A$ in the language makes the language exponentially more
succinct.

\begin{theorem}\label{thm:expsuccinct} The language $\lan{L}_{EK}$ is
exponentially more
succinct than $\lan{L}_{K}$ on $\mathcal{X}$, for all $X$ between
$\mathcal{K}$ and $\mathcal{S}5$.
\end{theorem}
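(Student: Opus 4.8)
The plan is to verify the two clauses of the definition of ``exponentially more succinct'' with $\lan{L}_1 = \lan{L}_{EK}$ and $\lan{L}_2 = \lan{L}_K$. Clause (a) is immediate: since every $\lan{L}_K$ formula is already an $\lan{L}_{EK}$ formula (the operators $K_a$ belong to both languages), given $\beta \in \lan{L}_K$ I simply take $\alpha = \beta$, so $\alpha \equiv_{\mathcal X} \beta$ and $\size{\alpha} = \size{\beta}$ on every class $\mathcal X$. The work is entirely in clause (b), and here I would first reduce the whole theorem to a single lower bound on $\mathcal{S}5$. Indeed, if $\mathcal{S}5 \subseteq \mathcal X$, then any $\lan{L}_K$ formula equivalent to a given $\alpha_n$ on $\mathcal X$ is a fortiori equivalent to it on $\mathcal{S}5$; hence the shortest $\mathcal X$-equivalent formula is at least as long as the shortest $\mathcal{S}5$-equivalent one. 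So it suffices to exhibit a linear-size family $\alpha_n \in \lan{L}_{EK}$ whose shortest $\mathcal{S}5$-equivalent in $\lan{L}_K$ has size $\ge 2^{k_2 n}$; this bound then transfers upward to every $\mathcal X$ between $\mathcal{S}5$ and $\mathcal{K}$.

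For the witness family I would take $\alpha_n = E_A^{\,n} p$, where $A$ is a fixed group with $|A| \ge 3$ (three agents are genuinely needed for the $\mathcal{S}5$ case; see below). Clause (b)(i) is routine: $\size{E_A \psi} = c_A + \size{\psi}$ for a constant $c_A$ depending only on $|A|$, so $\size{\alpha_n} = c_A n + 1 \le k_1 n$. To locate the blow-up, unfold $E_A\psi = \bigwedge_{a\in A} K_a\psi$; iterating, $\alpha_n$ is equivalent to $\bigwedge_{\sigma \in A^n} K_{\sigma_1}\cdots K_{\sigma_n}\, p$ over all agent-sequences of length $n$. On $\mathcal{S}5$ one has $K_aK_a \equiv K_a$, so each sequence collapses to its reduced word (delete consecutive repeats), and the surviving conjuncts are indexed by the \emph{reduced} words of length $\le n$, of which there are $|A|(|A|-1)^{n-1}$ at length $n$ alone. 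For $|A|=2$ this count is linear (only the two alternating words survive, and indeed $\alpha_n$ is then $\mathcal{S}5$-equivalent to an $\lan{L}_K$ formula of size $O(n^2)$), which is exactly why $|A|\ge 3$ is required; for $|A|\ge 3$ it is exponential in $n$.

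The heart of the proof, and the main obstacle, is to upgrade this into a genuine \emph{lower bound}: the naive conjunction is exponential, but I must show that \emph{no} $\lan{L}_K$ formula, however structured, is both $\mathcal{S}5$-equivalent to $\alpha_n$ and shorter than $2^{k_2 n}$. For this I would use a formula-size game (Adler--Immerman style, adapted to modal logic) on two families of pointed $\mathcal{S}5$ models. As the hard instances I would take, for each reduced word $w = i_1\cdots i_n$, the chain model $M_w$ with states $v_0,\dots,v_n$ in which $\{v_{k-1},v_k\}$ is an $i_k$-cluster (all other relations trivial) and $p$ holds everywhere except at $v_n$; then $E_A^{\,n}p$ is false at $(M_w,v_0)$, since $v_n$ sits at the end of the $A$-path spelled by $w$, whereas $E_A^{\,n}p$ is true in the all-$p$ chain $M_\top$. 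A separating formula must hold at $(M_\top,v_0)$ and fail at every $(M_w,v_0)$. The crux is the analysis of the modal ($K_a$) moves: stepping along agent $a$ can only make progress on reduced words whose next letter is $a$, so any strategy for the separating player must branch enough to account for all $|A|(|A|-1)^{n-1}$ words, forcing at least that many leaves in the strategy tree and hence size $\ge 2^{k_2 n}$. Establishing that the modal step cannot amortise work across distinct reduced words — i.e.\ designing the opposing strategy that keeps exponentially many words ``alive'' until individually resolved — is the delicate part; a self-contained adversary argument tracking, for each partial formula, the set of reduced paths it can still distinguish is an alternative route to the same bound.

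Finally I would assemble the pieces: (b)(i) holds by the linear size of $\alpha_n$; (b)(ii) is the lower bound just established, with $k_2 = 1$, since $|A|(|A|-1)^{n-1} \ge 2^n$ for $|A|\ge 3$; and (b)(iii) holds by taking $\beta_n$ to be a shortest $\mathcal{S}5$-equivalent of $\alpha_n$ in $\lan{L}_K$. Together with clause (a) this shows $\lan{L}_{EK}$ is exponentially more succinct than $\lan{L}_K$ on $\mathcal{S}5$, and the upward-transfer observation extends the conclusion to every $\mathcal X$ between $\mathcal{K}$ and $\mathcal{S}5$.
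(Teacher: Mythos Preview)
Your overall architecture matches the paper's: clause~(a) is dismissed by the inclusion $\lan{L}_K \subseteq \lan{L}_{EK}$, the witness family is built from iterated $E_A$, the linear upper bound on $\size{\alpha_n}$ is immediate, and the exponential lower bound is the real content. The paper, like you, does not carry out that lower bound in full; it writes ``this argument is beyond the scope of this book'' and points to \cite{Succ:AIJ}, whose method is precisely the Adler--Immerman-style formula-size game you outline.

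Where you genuinely diverge from the paper's sketch is in the choice of group, and here your version is the more careful one. The paper takes $A=\{a,b\}$ and $\alpha_n = \neg E^n_{\{a,b\}}\neg p$, asserting that the exponential-size $\beta_n$ is shortest on every $\mathcal{X}$ between $\mathcal{K}$ and $\mathcal{S}5$. Your observation that two agents fail on $\mathcal{S}5$ is correct and applies to the paper's own witness: in $\mathcal{S}5$ one has $M_aM_a\equiv M_a$, every reduced $\{a,b\}$-word of length $<n$ embeds as a subsequence of one of the two alternating words of length $n$, and reflexivity gives $M_w p \to M_{w'}p$ whenever $w$ is a subsequence of $w'$; hence $\neg E^n_{\{a,b\}}\neg p$ is $\mathcal{S}5$-equivalent to $M_aM_bM_a\cdots p \lor M_bM_aM_b\cdots p$, a formula of size $O(n)$ in $\lan{L}_K$. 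So the paper's displayed $\alpha_n$ does not witness exponential succinctness on $\mathcal{S}5$, while your choice $|A|\ge 3$ does (the reduced words now form a tree of branching $|A|-1\ge 2$). Your reduction of the general statement to the single hardest class $\mathcal{S}5$, via the observation that shortest equivalents can only grow when $\mathcal{X}$ shrinks, is also a clean organizing device that the paper does not make explicit.
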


\begin{proof}
Clearly, for every formula $\alpha$ in $\lan(L)_K$, there is an
equivalent formula in $\lan{L}_{EK}$ that is no longer than $\alpha$,
namely, $\alpha$ itself.
Now consider the following two sequences of formulas:
\[ \alpha_n = \neg E^n_{\{a,b\}}  \neg p\]
and
\[\beta_1 = \neg(K_a \neg p \land K_b \neg p), \mbox{ and }  \beta_n = \neg (K_a\neg \beta_{n-1} \land K_b \neg\beta_{n-1}).\]
If we take $\size{E_A \phi} = \size{A} + \size{\phi}$, then it is easy
to see that $\size{\alpha_n} = 2n + 3$, so $\size{\alpha_n}$ is
increasing linearly in $n$.  On the other hand, since $\size{\beta_n}
> 2\size{\beta_{n-1}}$, we have $\size{\beta} \ge 2^n$.  It is also
immediate from the definition of $E_{\{a,b\}}$ that $\beta_n$ is
equivalent to $\alpha_n$ for all classes $\mathcal{X}$ between
$\mathcal{K}$ and $\mathcal{S}5$.  To complete the proof, we must show
that there is no formula shorter than $\beta_n$ in $\cal{L}_K$ that is
equivalent to $\alpha_n$.  This argument is beyond the scope of this
book; see the notes for references.
\end{proof}

\commentout{
Now consider the game starting in
$\node{\mathbbm{A}^n}{\mathbbm{B}^n}$: if we can argue that it takes
Spoiler at least exponentially many moves to win this game,
then
Theorem~\ref{thm:fsg} tells us that all formulas equivalent to
$\beta_n$ have size at least exponential in $n$. Observe that
$\mathbbm{A}^n$ contains $2^n$ models. In order to win the game
starting in $\node{\mathbbm{A}}{\mathbbm{B}}$, Spoiler needs to
`separate' all of them into different branches by playing and-moves,
and it is those moves that we are going to count. To see that Spoiler
needs to play an and-move for every two models in $\mathbbm{A}^n$, let
us go back to $n = 3$ and consider the two models
$(A^3_{abb},\epsilon)$ (see Figure~\ref{fig:binary}) and
$(A^3_{aab},\epsilon)$ (in which $p$ is true only at $aab$). In order
to win the game, Spoiler needs to `reach' the states where $p$ is
true, and then play an atomic-move, because that atom is the only
difference between the models on the left and those on the right. He
can start with playing some not-moves, but the two models we are
following will stay together at the same side of the tree. 
}

\subsection{Reasoning problems}

Given the machinery developed so far, we can state some basic reasoning
problems in semantic terms. They concern {\em satisfiability} and {\em
model checking}. Most of
those problems are typically considered with a specific class of models
and a
specific language in mind. So let $\mathcal{X}$ be some class of 
models, and let $\lan{L}$ be a language.

\subsubsection{Decidability Problems}

A decidability problem checks some input for some property, and returns `yes' or `no'.
\begin{definition}[Satisfiability]\label{chap1:def:satisfiability}
The satisfiability problem for $\mathcal{X}$ is the following reasoning problem.

\fbox{
\begin{tabular}{lp{3.3in}}
{\sc Problem}: & satisfiability in $\mcx$, denoted {\sc sat}$_\mcx$.\\
{\sc Input}:  & a formula $\varphi \in \lan{L}$.\\
{\sc Question}: & does there exist a model $M \in \mathcal{X}$ and a state $s \in \Domain(M)$  such that $M,s \models \varphi$? \\
{\sc Output}: & `yes' or `no'.\\
\end{tabular}
}
\mbox{ }\\
\end{definition}

Obviously, there may well be formulas that are satisfiable
in some Kripke model (or generally, in a class $\mathcal{Y}$), but
not in $\mathcal{S}5$ models.
Satisfiability in $\mcx$ is closely related to the problem of {\em
validity} in $\mathcal{X}$, due to the following equivalence: $\varphi$
is valid in $\mcx$ iff $\neg \varphi$ is not satisfiable in $\mcx$.\\

\fbox{
\begin{tabular}{lp{3.3in}}
{\sc Problem}: & validity in $\mathcal{X}$, denoted {\sc val}$_\mcx$.\\
{\sc Input}:  & a formula $\varphi \in \lan{L}$.\\
{\sc Question}: & is it the case that $\mathcal{X}  \models \varphi$? \\
{\sc Output}: & `yes' or `no'.\\
\end{tabular}
}
\ \\

The next decision problem is computationally and conceptually simpler than the previous two, since rather than quantifying over a set of models, a specific model is given as input (together with a formula).

\begin{definition}[Model checking]\label{chap1:def:modelchecking}

The model checking problem for $\mathcal{X}$ is the following reasoning problem:\\

\fbox{
\begin{tabular}{lp{3.3in}}
{\sc Problem}: & Model checking in $\mathcal{X}$, denoted {\sc
modcheck}$_\mcx$.\\
{\sc Input}:  & a formula $\varphi \in \lan{L}$ and a pointed model
$(M,s)$ with $M \in \mathcal{X}$ and $s \in \Domain(M)$.\\
{\sc Question}: & is it the case that $M,s \models \varphi$? \\
{\sc Output}:: & `yes' or `no'.\\
\end{tabular}
}
\mbox{ }\\
\end{definition}

The field of {\em computational complexity} is concerned with the
question of how much  of a resource is needed to solve a specific
problem. The resources of most interest  are {\em
computation time} and {\em space}.
Computational complexity then asks questions of the
following form: if my input were to increase in size, how much more
space and/or time would be needed to compute the answer? Phrasing the
question this way already assumes that the problem at hand can be
solved in finite time using an algorithm, that is, that the problem is
{\em decidable}.
Fortunately, this is the case for the problems of interest to us.

\begin{proposition}[Decidability of {\sc sat} and {\sc
      modcheck}]\label{chap1:fact:decidability} 
If $\mcx$ is one of the model classes defined in
Definition~\ref{chap1:def:frameproperties}, $(M,s) \in \mcx$, and
$\varphi$ is a formula in one of the languages defined in
Definition~\ref{chap1:def:languages}, then both  {\sc
sat}$_\mcx(\varphi)$ and {\sc modcheck}$_\mcx((M,s),\varphi)$ are
decidable.
\end{proposition}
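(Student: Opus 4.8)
The plan is to treat the two problems separately, reducing satisfiability to model checking by way of a finite model property.

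For model checking, I would rely on the fact that the input model $(M,s)$ is finite, so truth of $\varphi$ can be evaluated by a labelling algorithm proceeding by induction on the structure of $\varphi$: for each subformula $\psi$ compute the set $\{t \mid M,t \models \psi\}$. The cases for atoms, $\neg$, $\land$, and $K_a$ are immediate from the truth definition, and $E_A$, $D_A$ reduce to these by replacing $R_a$ with $\bigcup_{a\in A} R_a$ and $\bigcap_{a\in A} R_a$ respectively. The only non-local case is $C_A$, but since $R_{C_A}$ is the transitive closure of a relation on a finite set it is computable (for instance by iterated reachability), after which the extension of $C_A\psi$ is read off directly. Hence {\sc modcheck}$_\mcx$ is decidable (indeed computable in time polynomial in $\size{\varphi}$ and the size of $M$).

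For satisfiability, the key is a finite model property with a computable size bound. First I would recall from the excerpt that validity reduces to non-satisfiability, so it suffices to decide {\sc sat}. Then I would establish that for each class $\mcx$ of Definition~\ref{chap1:def:frameproperties} and each language of Definition~\ref{chap1:def:languages} there is a computable function $f$ such that every $\varphi$ satisfiable in $\mcx$ is satisfiable in some model of $\mcx$ with at most $f(\size{\varphi})$ states. Granting this, the decision procedure is: enumerate all models in $\mcx$ with at most $f(\size{\varphi})$ states (finitely many up to isomorphism, once we fix the finitely many atoms occurring in $\varphi$), run the model-checking algorithm on each, and answer `yes' iff one of them satisfies $\varphi$.

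The main work, and the main obstacle, is proving this bounded finite model property. The standard tool is filtration: given a possibly infinite model satisfying $\varphi$, collapse states that agree on all formulas in a suitable finite closure $\Sigma$ of the subformulas of $\varphi$, obtaining a model with at most $2^{\size{\Sigma}}$ states. Two points require care. First, to handle $C_A$ one cannot simply take the subformula closure: because $C_A\psi$ behaves like a greatest fixed point satisfying $C_A\psi \leftrightarrow E_A(\psi \land C_A\psi)$, the set $\Sigma$ must be the Fischer--Ladner closure, which adds the finitely many formulas needed to evaluate the fixpoint, and one must then verify that the filtrated transitive-closure relation still validates $C_A$. Second, the filtrated model must remain inside $\mcx$: reflexivity is preserved automatically, but to preserve transitivity, Euclideanness, symmetry, seriality, or the equivalence-relation property defining $\mathcal{S}5$, one must choose an appropriate filtration (for example the transitive or symmetric closure of the minimal filtrated relation) and check that it both lies in the class and still yields the correct truth values on $\Sigma$. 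Distributed knowledge is the most delicate case, since $D_A$ is interpreted via $\bigcap_{a\in A} R_a$ rather than a single relation, so the filtration must be arranged to respect intersection; for the full details of the $C_A$ and $D_A$ constructions I would appeal to the literature cited in the chapter notes. Once $f$ is in hand, decidability of {\sc sat}$_\mcx$ follows as above.
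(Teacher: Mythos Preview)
Your proposal is correct and follows essentially the same route the paper takes: the paper does not give a self-contained proof of this proposition but immediately identifies the finite model property (Proposition~\ref{chap1:fact:fmp}) as ``the key'', sharpens it to the small-model bound $2^{\size{\varphi}}$ (Proposition~\ref{chap1:prop:smallmodel}), and then gives the labelling algorithm for model checking (Proposition~\ref{chap1:prop:modelchecking}). Your enumerate-and-check argument from the bounded FMP is exactly the intended glue, and your sketch of filtration (with Fischer--Ladner closure for $C_A$ and care about preserving the frame conditions) supplies more detail than the paper, which defers those proofs to the cited literature.
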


In order to say anything sensible about the additional resources that an
algorithm needs to compute the answer when the input increases in size,
we need to define a notion of size for inputs, which in our case are
formulas and models. Formulas are by definition finite
objects, but models can in principle be infinite (see, for instance,
Figure~\ref{chap1:fig:commonknowledge}). The following fact
is the key to proving Fact~\ref{chap1:fact:decidability}. For a class
of models
$\mcx$, let $\mc{F}in(\mcx) \subseteq \mcx$ be the set of models in
$\mcx$ that are finite.

\begin{proposition}[Finite model property]\label{chap1:fact:fmp}
For all classes of models in
Definition~\ref{chap1:def:frameproperties} and languages $\lan{L}$ in
Definition~\ref{chap1:def:languages}, we have, for all $\varphi \in
\lan{L}$,
\[\mcx \models \varphi \mbox{ iff } \mc{F}in(\mcx)\models \varphi.\]
\end{proposition}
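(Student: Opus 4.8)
The plan is to prove the two directions separately. One direction is immediate: since $\mc{F}in(\mcx) \subseteq \mcx$, any formula valid throughout $\mcx$ is in particular valid throughout its finite members, so $\mcx \models \varphi$ entails $\mc{F}in(\mcx) \models \varphi$. All the content is in the converse, which by the validity/satisfiability duality recorded earlier ($\varphi$ is valid in $\mcx$ iff $\neg\varphi$ is not satisfiable in $\mcx$) is equivalent to the \emph{finite model property}: if $\neg\varphi$ is satisfiable somewhere in $\mcx$, then it is satisfiable in some \emph{finite} member of $\mcx$. So I would fix a pointed model $(M,\state)$ with $M = \langle \States, R, V\rangle \in \mcx$ and $M,\state \models \psi$, writing $\psi$ for $\neg\varphi$, and build a finite $M' \in \mcx$ with a state satisfying $\psi$.

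The tool is \emph{filtration}. First I would fix a finite set $\Sigma$ of formulas containing $\psi$ and closed under subformulas; for the group operators I would enlarge the closure so that it is well behaved, requiring that whenever $C_\group\chi \in \Sigma$ then $K_\agent\chi, K_\agent C_\group\chi \in \Sigma$ for each $\agent \in \group$, so that the fixpoint equivalence $C_\group\chi \eq E_\group\chi \et E_\group C_\group\chi$ can be exploited, and analogously recording the $D_\group$-subformulas. The point is that $\Sigma$ is still finite. I would then define $\state \sim \statet$ iff $\state$ and $\statet$ agree on every formula in $\Sigma$; since $\Sigma$ is finite there are at most $2^{\size{\Sigma}}$ equivalence classes, so the quotient domain $\States' = \States/{\sim}$ is finite. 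The valuation $V'$ is read off from the atoms in $\Sigma$ and is well defined by construction.

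The heart of the argument is defining the filtrated relations $R'_\agent$ on $\States'$ so that two goals are met at once: (i) the \emph{filtration conditions} hold, namely $R_\agent\state\statet$ implies $R'_\agent[\state][\statet]$, and conversely $R'_\agent[\state][\statet]$ forces every $K_\agent\chi \in \Sigma$ true at $\state$ to make $\chi$ true at $\statet$, from which the \emph{filtration theorem} ($M,\state \models \chi \Eq M',[\state] \models \chi$ for all $\chi \in \Sigma$) follows by a routine induction on the structure of $\chi$; and (ii) the frame properties defining $\mcx$ are inherited by $M'$. For $C_\group$ I would take $R'_{C_\group}$ to be the transitive closure of $\bigcup_{\agent\in\group} R'_\agent$ and use the fixpoint closure of $\Sigma$ to push the induction through the common-knowledge clause. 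To preserve the $\mathcal{S}5$ (and the other) frame conditions I would choose the filtration so that reflexivity, symmetry, transitivity, seriality or the Euclidean property survive; for the equivalence-relation classes the cleanest choice is to let $R'_\agent$ be the smallest equivalence relation on $\States'$ containing all pairs $([\state],[\statet])$ with $R_\agent\state\statet$, and then verify that the second filtration condition still holds for this larger relation.

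I expect two steps to be the real obstacles. The first is distributed knowledge: because $R_{D_\group} = \Inter_{\agent\in\group} R_\agent$ is an \emph{intersection}, the intersection of independently chosen filtrated individual relations need not be a filtration of $R_{D_\group}$ (the same phenomenon that makes $D_\group$ fail to be preserved under ordinary bisimulation, as noted after Theorem~\ref{chap1:thm:bisim}), so the $D_\group$-subformulas of $\Sigma$ must be built into the definition of the $R'_\agent$ rather than treated afterwards. The second is checking that a \emph{single} choice of filtration simultaneously preserves all the frame properties of Definition~\ref{chap1:def:frameproperties} while remaining compatible with the group-operator clauses; here I would proceed class by class, verifying in each that the chosen $R'_\agent$ lies in the required class and still satisfies the filtration conditions. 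Together with the filtration theorem this gives $M',[\state] \models \psi$ with $M' \in \mc{F}in(\mcx)$, which is exactly the finite model in $\mcx$ we want.
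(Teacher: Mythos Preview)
The paper does not actually prove this proposition: it is stated as a fact, and the Notes section refers the reader to \cite{fagin:95a} for a proof of the stronger small-model result (Proposition~\ref{chap1:prop:smallmodel}). The only argument the paper offers is the one-line sketch after Proposition~\ref{chap1:prop:smallmodel}, namely that states agreeing on all subformulas of $\varphi$ can be identified, giving at most $2^{\size{\varphi}}$ equivalence classes, with the caveat that ``work needs to be done'' to land in the right class $\mcx$.

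Your proposal is exactly this filtration idea, carried out in considerably more detail than anything in the paper. The structure is correct: the easy direction via $\mc{F}in(\mcx)\subseteq\mcx$, the contrapositive reduction to a finite-model property for satisfiability, the subformula-closed set $\Sigma$, the $\Sigma$-equivalence quotient, the filtration lemma by induction, and the case-by-case verification that a suitable choice of filtrated relations preserves each frame class of Definition~\ref{chap1:def:frameproperties}. You also correctly flag the two genuinely delicate points the paper glosses over entirely: extending the closure set for $C_\Group$ so that the fixpoint unfolding is available in the induction step, and the fact that $D_\Group$, being an intersection of relations, does not filtrate naively and must be baked into the definition of the individual $R'_\agent$. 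So your approach is the standard one the paper gestures at, and your outline is both sound and more complete than the paper's own treatment.
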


Fact~\ref{chap1:fact:fmp} does not say that the models in
$\mcx$ and the finite models in $\mcx$ are the same in any meaningful
sense; rather, it says that
we do not gain valid formulas if we restrict ourselves to finite
models. It implies that a formula is satisfiable in a model in $\mcx$
iff it is satisfiable in a finite model in $\mcx$.  It follows that
in the languages we have considered so far, `having a finite
domain' is not expressible
(for if there were a formula $\phi$ that were true only of models with
finite domains, then $\phi$ would be a counterexample to
Fact~\ref{chap1:fact:fmp}).

\begin{definition}[Size of Models]\label{chap1:def:size}
For a finite model $M = \langle S,^\agents, V^\atoms\rangle$,
the size of $M$, denoted $\msize{M}$, is
the sum of the number of states ($\size{S}$, for which we also write $\size{M}$) and the number of
pairs in the accessibility relation ($\size{R_a}$) for each agent $a \in
\agents$.
\end{definition}

We can now strengthen Fact~\ref{chap1:fact:fmp} as follows.
\begin{proposition}\label{chap1:prop:smallmodel}
For all classes of models in
Definition~\ref{chap1:def:frameproperties} and languages $\lan{L}$ in
Definition~\ref{chap1:def:languages}, we have, for all $\varphi \in
\lan{L}$,
$\varphi$ is satisfiable in  $\mcx$  iff  there is a model $M \in\mcx$
such that $\size{\Domain(M)} \leq  2^{\size{\,\varphi\,}}$ and $\varphi$
is
satisfiable in $M$.
\end{proposition}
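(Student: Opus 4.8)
The plan is to prove the nontrivial (left-to-right) direction by the method of \emph{filtration}, collapsing a possibly large (even infinite) satisfying model down to a finite one whose domain is bounded by $2^{\size{\varphi}}$. The right-to-left direction is immediate, since any model witnessing satisfiability of $\varphi$ already shows that $\varphi$ is satisfiable in $\mcx$. This refines the finite model property (Proposition~\ref{chap1:fact:fmp}) by making the bound on the domain explicit.

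So suppose $M = \langle S, R, V\rangle \in \mcx$ and $M, s \models \varphi$. First I would fix a finite set $\Sigma$ of formulas that is closed under subformulas and contains $\varphi$; for languages containing $C_A$ I would instead use the Fischer--Ladner closure, which adds the formulas needed to unfold common knowledge. Since each symbol of $\varphi$ contributes at most one distinct subformula, $\size{\Sigma} \le \size{\varphi}$ (the Fischer--Ladner closure is likewise linear in $\size{\varphi}$). Next I would define $t \equiv t'$ iff $M,t \models \psi \Leftrightarrow M,t' \models \psi$ for every $\psi \in \Sigma$, and let the states of the filtrated model $M^f$ be the equivalence classes $[t]$. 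As distinct classes are separated by distinct subsets of $\Sigma$, there are at most $2^{\size{\Sigma}} \le 2^{\size{\varphi}}$ of them, which is exactly the required bound; the valuation on $M^f$ is inherited from $M$, which is well defined because every atom occurring in $\varphi$ lies in $\Sigma$.

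The core of the argument is the choice of the filtrated relations $R_a^f$ on classes. Any relation lying between the \emph{smallest} filtration ($[t] R_a^f [u]$ whenever some representatives are $R_a$-related) and the \emph{largest} filtration ($[t] R_a^f [u]$ whenever $M,t \models K_a\psi$ implies $M,u \models \psi$ for all $K_a\psi \in \Sigma$) validates the \emph{Filtration Lemma}: by induction on $\psi \in \Sigma$ one shows $M,t \models \psi \Leftrightarrow M^f, [t] \models \psi$, the only nonroutine case being the modal one, which uses precisely these two inclusions. Applying the lemma to $\varphi$ yields $M^f, [s] \models \varphi$ with $\size{\Domain(M^f)} \le 2^{\size{\varphi}}$.

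The main obstacle, and the part that must be carried out class by class, is ensuring that $M^f$ actually lies in $\mcx$, i.e.\ that the filtration preserves the frame properties of Definition~\ref{chap1:def:frameproperties}. Reflexivity and seriality are preserved by suitable extremes (the largest, respectively smallest, filtration), but for transitivity, symmetry and the Euclidean property one must pick a specific relation between the two extremes, typically by closing off the filtrated relation, and then verify both that it still lies between the smallest and largest filtration (so the lemma survives) and that it has the required property; for $\mathcal{S}5$ one can simply take $[t] R_a^f [u]$ iff $t$ and $u$ agree on every $K_a\psi \in \Sigma$, which is manifestly an equivalence relation and is easily checked to be a legitimate filtration. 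For the group operators the construction must be adapted: the relation for $C_A$ (the transitive closure of $\bigcap$ and $\bigcup$ aside, here $\bigcup_{a\in A}R_a$) is handled through the Fischer--Ladner closure, and for $D_A$ one filtrates the intersection relation $\bigcap_{a \in A} R_a$ directly. In each case the verification is routine once the correct relation is fixed, and I would organise the proof so that these per-class checks are the only place where the specific structure of $\mcx$ enters.
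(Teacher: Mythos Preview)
Your proposal is correct and is precisely the filtration argument that the paper's brief sketch gestures at: the paper merely says that states agreeing on all subformulas of $\varphi$ can be identified, yielding at most $2^{\size{\varphi}}$ classes, and that ``work needs to be done'' to land in the right class $\mcx$---which is exactly what your smallest/largest filtration discussion and per-class frame-property checks supply. Your additional remarks on the Fischer--Ladner closure for $C_A$ and on filtrating the intersection relation for $D_A$ go beyond what the paper spells out but are the standard (and correct) way to handle those operators.
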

\vspace{.1cm}

The idea behind the proof of Proposition~\ref{chap1:prop:smallmodel} is
that states that `agree' on all subformulas of
$\varphi$ can be `identified'.
Since there are only $\size{\phi}$ subformulas of $\phi$, and
$2^{\size{\,\phi\,}}$ truth assignments to these formulas, the result
follows.  Of course, work needs to done to verify this intuition, and to
show that an appropriate model can be constructed in the right class
$\mcx$.

To reason about the complexity of a computation performed by an
algorithm, we distinguish various complexity classes.
If a deterministic algorithm can solve a problem in
time polynomial in the size of the input,  the problem is said to
be in {\sf P}. An example of a problem in {\sf P} is to decide,
given two finite Kripke models $M_1$ and $M_2$, whether there exists a
bisimulation between them.
Model checking for the basic multi-modal language is also in {\sf P};
see Proposition~\ref{chap1:prop:modelchecking}.

In a {\em nondeterministic} computation, an algorithm is allowed
to `guess' which of a finite number of steps to take next.
A nondeterministic
algorithm for a decision problem says `yes' or {\em accepts the input}
if the algorithm says `yes' to an appropriate sequence of guesses. So a
nondeterministic algorithm can be seen as generating different branches at each computation step, and the answer of
the nondeterministic algorithm is `yes' iff one of the branches results
in a `yes' answer.

The class {\sf NP} is the class of problems that are solvable by a
nondeterministic algorithm in polynomial time. Satisfiability of
propositional logic is an example of a problem in {\sf NP}:
an algorithm for satisfiability first guesses an appropriate truth
assignment to the primitive propositions, and then verifies that the
formula is in fact true under this truth assignment.

A problem that is at least as hard as any problem
in {\sf NP} is called {\sf NP}{\em -hard}. An {\sf NP}-hard problem has
the property that any problem in {\sf NP} can be reduced to it using a
polynomial-time reduction.
A problem is {\sf NP}{\em -complete} if it
is both in {\sf NP} and {\sf NP}-hard; satisfiability for propositional
logic is well known to be {\sf NP}-complete.
For an arbitrary complexity class {\sf C}, notions of {\sf C}-hardness
and {\sf C}-completeness can be similarly defined.

Many other complexity classes have been defined.  We mention a few of
them here. An algorithm that runs in space polynomial in the size of the
input it is in {\sf PSPACE}.  Clearly if an algorithm needs only
polynomial time then it is in polynomial space; that is {\sf P}
$\subseteq$ {\sf PSPACE}.  In fact, we also have {\sf NP} $\subseteq$
{\sf PSPACE}.  If an algorithm is in {\sf NP}, we can run it in polynomial space
by systematically trying all the possible guesses, erasing the space
used after each guess, until we eventually find one that is the `right'
guess.
{\sf EXPTIME} consists of all algorithms that run in time exponential in
the size of the input; {\sf NEXPTIME} is its nondeterministic analogue.
We have {\sf P} $\subseteq$ {\sf NP} $\subseteq$ {\sf PSPACE}
$\subseteq$ {\sf
EXPTIME} $\subseteq$ {\sf NEXPTIME}. One of the most important open
problems in computer science is the question whether {\sf P} = {\sf
NP}. The conjecture is that the two classes are different, but
this has not yet been proved; it is possible that a polynomial-time algorithm
will be found
for an {\sf NP}-hard problem.
What is
known is that {\sf P} $\neq$
{\sf EXPTIME} and {\sf NP} $\neq$ {\sf NEXPTIME}.

The complement $\bar{P}$ of a problem $P$  is the problem in which all
the `yes' and `no' answers are reversed. Given a complexity class {\sf
C}, the class co-{\sf C} is the set of problems for which the complement
is in {\sf C}. For every deterministic class {\sf C}, we have co-{\sf C}
= {\sf C}.  For nondeterministic classes, a class and its complement
are, in general, believed to be incomparable.
Consider, for example, the satisfiability problem for propositional
logic, which, as we noted above, is  {\sf NP}-complete.
Since a formula $\varphi$ is valid if and only if $\neg\varphi$
is not satisfiable,
it easily follows that the validity problem for propositional logic is
co-{\sf NP}-complete.
The class of {\sf NP}-complete and co-{\sf NP}-complete problems are
believed to be distinct.

We start our summary of complexity results for decision problems in
modal logic
with model checking.

\begin{proposition}\label{chap1:prop:modelchecking}
Model checking formulas in $\lan{L}(\atoms,\operators,\agents)$, with
$\operators = \{K_a \mid a \in \agents\}$, in finite models is in
{\sf P}.
\end{proposition}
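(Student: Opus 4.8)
The plan is to give a bottom-up \emph{labelling} algorithm that performs dynamic programming over the subformulas of the input formula. Given a finite model $M = \langle \States, R, V\rangle$, a state $\state \in \Domain(M)$, and a formula $\varphi$, the algorithm computes, for every subformula $\psi$ of $\varphi$, the \emph{satisfaction set} $\States_\psi := \{\statet \in \States \mid M,\statet \models \psi\}$, and then outputs `yes' iff $\state \in \States_\varphi$. The key is to process the subformulas in order of increasing length, so that whenever $\psi$ is handled the satisfaction sets of its immediate subformulas are already available, and to memoise each $\States_\psi$ so that it is computed exactly once.

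The recursion simply mirrors Definition~\ref{chap1:def:truthdef}. For an atom $p$ we read $\States_p$ directly off the valuation, $\States_p = \{\statet \mid V(\statet)(p) = \mathit{true}\}$. For a negation we take the complement $\States_{\neg\chi} = \States \setminus \States_\chi$, and for a conjunction the intersection $\States_{\chi_1 \land \chi_2} = \States_{\chi_1} \inter \States_{\chi_2}$. For a modal formula we set $\States_{K_\agent\chi} = \{\statet \mid R_\agent(\statet) \subseteq \States_\chi\}$; that is, a state is labelled with $K_\agent\chi$ exactly when all of its $R_\agent$-successors are already labelled with $\chi$. Correctness is then a routine induction on the structure of $\psi$, each case being a direct restatement of the corresponding clause of Definition~\ref{chap1:def:truthdef} (the modal case using that $M,\statet\models K_\agent\chi$ iff $R_\agent(\statet) \subseteq \States_\chi$).

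For the complexity bound, note that $\varphi$ has at most $\size{\varphi}$ distinct subformulas, so there are at most $\size{\varphi}$ rounds. Each Boolean round costs $O(\size{\States})$. The modal round for $K_\agent\chi$ can be carried out by scanning each pair of $R_\agent$ once and marking a state as \emph{failing} as soon as it has a successor outside $\States_\chi$; this costs $O(\size{R_\agent})$, which is bounded by $\msize{M}$. Hence every round costs $O(\msize{M})$, and the whole computation runs in time $O(\size{\varphi}\cdot\msize{M})$, which is polynomial in the size of the input $\langle (M,\state),\varphi\rangle$; this places the problem in {\sf P}.

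The only real pitfall --- a bookkeeping issue rather than a genuine mathematical difficulty --- is the memoisation: a naive top-down recursion that recomputes $\States_\psi$ each time $\psi$ is encountered could take exponential time on formulas with deeply nested repeated subformulas, so one must store each satisfaction set and reuse it, e.g.\ by a single pass over the subformulas sorted by length. With that in place the argument is entirely elementary, and (although it lies outside the statement) the same labelling idea extends to the group operators $E_\group$ and $D_\group$, and to $C_\group$ via a transitive-closure/reachability computation on $R_{E_\group}$.
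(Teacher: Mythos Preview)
Your proposal is correct and follows essentially the same approach as the paper: a bottom-up labelling algorithm that processes the subformulas of $\varphi$ in an order respecting the subformula relation, computing for each subformula the set of states satisfying it, with each of the at most $\size{\varphi}$ rounds costing $O(\msize{M})$. The paper phrases this as labelling each state with $\varphi_j$ or $\neg\varphi_j$ rather than computing satisfaction sets $\States_\psi$, but this is purely notational; the inductive argument and the resulting $O(\size{\varphi}\cdot\msize{M})$ bound are the same.
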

\begin{proof}
We now describe an algorithm that, given a model $M = \langle S,
R^\agents,$ $V^\atoms\rangle$ and a formula $\varphi \in \lan{L}$,
determines
in time polynomial in $\size{\phi}$ and $\msize{M}$ whether
$M,s\models\varphi$. Given $\varphi$, order the subformulas $\varphi_1,
\dots \varphi_m$ of $\varphi$ in such a
way that, if $\varphi_i$ is a subformula of $\varphi_j$, then $i <
j$. Note that $m \leq {\size{\varphi}}$. We claim that
\begin{quotation}
\noindent (*) for every $k \leq m$, we can label each state $s$ in $M$
with either
 $\varphi_j$ (if $\varphi_j$ if true at $s$) or $\neg\varphi_j$
(otherwise), for every $j \leq k$, in $k \msize{M}$ steps.
\end{quotation}
We prove (*) by induction on $m$. If $k = 1$, $\varphi_m$ must be
a primitive proposition, and obviously we need only ${\size{M}} \leq
{\msize{M}}$ steps to label all states as required. Now suppose (*)
holds for some $k < m$,
 and consider the case $k+1$. If $\varphi_{k+1}$ is a primitive proposition, we
 reason as before. If $\varphi_{k+1}$ is a negation, then it must be
 $\neg \varphi_j$ for some $j \leq k$. Using our assumption, we know
 that the collection of formulas $\varphi_1, \dots, \varphi_k$ can be
 labeled in $M$ in $k \msize{M}$ steps. Obviously, if we include
 $\varphi_{k+1} = \neg\varphi_j$ in the collection of formulas, we can
 do the labelling in $k$ more steps: just use the opposite label for
 $\varphi_{k+1}$ as used for $\varphi_i$. So the collection $\varphi_1,
 \dots, \varphi_{k+1}$ can be labelled in $M$ in at $(k+1)\msize{M}$
steps, are required. Similarly, if
$\varphi_{k+1} = \varphi_i \land \varphi_j$, with $i, j \leq k$,
a labelling for the collection $\varphi_1, \dots, \varphi_{k+1}$ needs only
$(k+1)\msize{M}$ steps: for the last formula, in each state $s$
 of $M$, the labelling can be completed using the labellings for
$\varphi_i$ and $\varphi_j$. Finally, suppose $\varphi_{k+1}$ is of the
form $K_a\varphi_j$ with $j \leq k$. In this case, we label a state $s$
with $K_a\varphi_j$ iff each state $t$ with $R_ast$ is labelled
$\varphi_j$. Assuming the labels $\varphi_j$ and $\neg\varphi_j$ are
 already in place, this can be done in $\size{R_\agent(s)} \leq
 \msize{M}$ steps.
 \end{proof}

Proposition~\ref{chap1:prop:modelchecking} should be interpreted with
care. While having a poly\-no\-mi\-al-time procedure seems attractive, we are
talking about computation time polynomial {\em in the size of the
input}. To model an interesting scenario or system often requires `big
models'. Even for one agent and $n$ primitive propositions, a model might
consist of $2^n$ states.  Moreover, the procedure does not check 
properties of the model either, for instance whether it belongs to a
given class $\mcx$. 

\putaway{
\begin{algorithm}\caption{A model checking algorithm}\label{chap1:alg:modcheck}
\begin{program}
\mbox{The model checking algorithm } |modcheck|(M,s,\varphi)
\BEGIN %
\IF s \in |collect|(M,\varphi) \THEN |`yes'| \ELSE |`no'| \FI;
\WHERE
\PROC |collect|(M,\varphi) \BODY
          \IF \varphi = p \THEN \{s \mid V(s)(p)\} \FI;
           \IF \varphi = \varphi_1 \land \varphi_2 \THEN |collect|(M,\varphi_1) \cap |collect|(M,\varphi_2) \FI;
           \IF \varphi = \neg \psi \THEN S \setminus |collect|(M,\psi)\FI;
           \IF \varphi = K_a\psi \THEN |pre|(R_a,|collect|(M,\psi) \FI \ENDPROC
           \WHERE
           \PROC |pre|(R_a,T) = \{s \in S \mid \forall t (R_ast \Rightarrow t \in T)\} \ENDPROC
\END
\end{program}
\end{algorithm}
}

We now formulate results for satisfiability checking. The results depend
on two parameters: the class of models considered (we focus on $\mc{K},
\mc{T},
\mc{S}4, \mc{KD}45$ and $\mc{S}5$) and the language. Let $\agents_{=1}$
consist of only one agent, let $\agents_{\geq 1} \neq \emptyset$ be an
arbitrary set of agents,  and let $\agents_{\geq 2}$ be a set of at
least two agents. Finally, let $\operators = \{K_a \mid a \in
\agents\}$.

\begin{theorem}[Satisfiability]\label{chap1:thm:sat}
The complexity of the satisfiability problem is
\begin{enumerate}
\item\label{chap1:thm:sat:item:one} {\sf NP}-complete if $\mcx \in
\{\mc{KD}45, \mc{S}5\}$ and $\lan{L} =
\lan{L}(\atoms,\operators,\agents_{=1})$;
\item\label{chap1:thm:sat:item:two} {\sf PSPACE}-complete if
\begin{enumerate}
\item $\mcx \in \{\mc{K}, \mc{T}, \mc{S}4\}$ and $\lan{L} = \lan{L}(\atoms,\operators,\agents_{\geq 1})$, or
\item $\mcx \in \{\mc{KD}45, \mc{S}5\}$ and $\lan{L} =
\lan{L}(\atoms,\operators,\agents_{\geq 2})$;
\end{enumerate}
\item\label{chap1:thm:sat:item:three}
 {\sf EXPTIME}-complete if
\begin{enumerate}
\item $\mcx \in \{\mc{K}, \mc{T}$ and $\lan{L} = \lan{L}(\atoms,\operators \cup \{C\},\agents_{\geq 1})$, or
\item $\mcx \in \{\mc{S}4, \mc{KD}45, \mc{S}5\}$ and $\lan{L} =
\lan{L}(\atoms,\operators\cup \{C\},\agents_{\geq 2})$.
\end{enumerate}
\end{enumerate}
\end{theorem}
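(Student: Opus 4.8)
The plan is to treat each complexity bound as a pair of separate tasks — a membership (upper bound) argument and a hardness (lower bound) argument — and to observe at the outset that every hardness claim inherits a floor from propositional logic: since propositional satisfiability is \textbf{NP}-hard and embeds trivially into each language and each class considered here, \textbf{NP}-hardness holds throughout, and in particular already supplies the lower bound for part~\ref{chap1:thm:sat:item:one}. What remains for part~\ref{chap1:thm:sat:item:one} is \textbf{NP} \emph{membership} in the single-agent $\mc{KD}45$ and $\mc{S}5$ cases. The key is a strong small-model property: because a single equivalence relation (respectively, a single serial-transitive-Euclidean relation) forces a ``flat'' shape — one cluster, possibly with a root pointing into it — a satisfiable $\varphi$ is satisfiable in a model whose domain has size linear in the number of subformulas of $\varphi$. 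The nondeterministic algorithm then guesses such a polynomial-size model together with, at each state, the set of subformulas asserted true, and verifies consistency in polynomial time using the model-checking procedure of Proposition~\ref{chap1:prop:modelchecking}. This collapse to \textbf{NP} is precisely what breaks once a second agent is available, which motivates the split in part~\ref{chap1:thm:sat:item:two}.

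For the \textbf{PSPACE} bounds of part~\ref{chap1:thm:sat:item:two}, membership follows from a Ladner-style tableau / tree-model method: a satisfiable formula admits a tree-like model with branching bounded by $\size{\varphi}$ and depth bounded by the modal depth $d(\varphi)$, and a depth-first search can explore this tree one branch at a time while reusing space, giving a polynomial-space decision procedure. For $\mc{T}$ and $\mc{S}4$ one threads reflexivity and transitivity through the tableau expansion rules; for the two-agent $\mc{KD}45, \mc{S}5$ cases one combines the flat single-agent structure with genuine alternation between the two relations. \textbf{PSPACE}-hardness is then the classical reduction from the satisfiability of quantified Boolean formulas (\textbf{QBF}), in which the alternation of $\forall$ and $\exists$ quantifiers is simulated by nested modalities. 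Here the requirement of \emph{at least two} agents in the $\mc{KD}45, \mc{S}5$ cases is essential: the two distinct accessibility relations supply the quantifier alternation that a single equivalence relation simply cannot encode, which is exactly why the single-agent case stayed in \textbf{NP}.

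Part~\ref{chap1:thm:sat:item:three} concerns adding common knowledge $C$, whose fixpoint (transitive-closure) semantics pushes the complexity up to \textbf{EXPTIME}. Membership is obtained by a type-elimination procedure over the maximally consistent sets of subformulas of $\varphi$, of which there are only exponentially many: one builds a candidate model on these types and then iteratively discards any type whose $C$-eventualities (the witnesses demanded by subformulas of the form $\neg C\psi$) cannot be fulfilled along a reachable path, terminating at the largest surviving fixpoint. \textbf{EXPTIME}-hardness is the main obstacle, and the standard route is a reduction from an \textbf{EXPTIME}-complete problem — encoding the computation of an alternating polynomial-space Turing machine (recall \textbf{APSPACE} $=$ \textbf{EXPTIME}) — where $C$ is used to impose the global consistency constraints across \emph{all} reachable configurations, something no fixed finite iteration of $E_A$ could enforce. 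The delicate points throughout are the case analyses that thread the frame conditions (seriality, reflexivity, transitivity, the Euclidean property) correctly through both the tableau rules and the type-elimination construction, and the verification that each hardness reduction actually produces a structure lying in the intended class $\mcx$ rather than merely in $\mc{K}$.
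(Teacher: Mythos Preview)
The paper does not actually prove this theorem in the text; it states the result and, in the Notes section, attributes the proof to \cite{halpern:92a}. So there is no in-paper proof to compare your proposal against in detail. That said, your sketch follows essentially the standard Halpern--Moses line: the linear-size model property for single-agent $\mc{S}5$/$\mc{KD}45$ yielding \textsf{NP} membership, Ladner-style tableaux together with a QBF reduction for the \textsf{PSPACE} cases, and type elimination plus an alternation-based reduction for \textsf{EXPTIME} once common knowledge is present. This is the right architecture.

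One point worth tightening: for $\mc{S}4$ the tree-model depth is \emph{not} bounded by the modal depth $d(\varphi)$, since transitivity lets a single $K_a$ see arbitrarily far down a chain. The polynomial-space bound there comes instead from bounding the length of any branch by the number of distinct subformula-sets (so that no ``type'' repeats along a path), not from $d(\varphi)$. A similar care is needed when threading the multi-agent $\mc{S}5$/$\mc{KD}45$ argument, where alternation between agents is what generates depth. With that correction, your outline matches the cited source and is sound as a high-level plan.
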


From the results in Theorem~\ref{chap1:thm:sat}, it follows that
the satisfiability problem for logics of knowledge and belief  for one
agent,
$\mc{S}5$ and $\mc{KD}45$, is exactly as hard as the satisfiability
problem for  propositional
logic. If we do not allow for common knowledge, satisfiability for the
general case is {\sf PSPACE}-complete, and with common knowledge it is
{\sf EXPTIME}-complete. (Of course, common knowledge does not add
anything for the case of one agent.)

For validity, the consequences of Theorem~\ref{chap1:thm:sat} are as
follows. We remarked earlier that if satisfiability (in $\mcx$) is in
some class {\sf C}, then validity is in co-{\sf C}. Hence,
checking validity for the cases in item~\ref{chap1:thm:sat:item:one} is
co-{\sf NP}-complete. Since co-{\sf PSPACE} = {\sf PSPACE}, the
validity problem for the cases in item~\ref{chap1:thm:sat:item:two}
is {\sf PSPACE}-complete, and, finally, since co-{\sf EXPTIME} = {\sf
EXPTIME}, the
validity problem for the cases in item~\ref{chap1:thm:sat:item:three} is
{\sf EXPTIME}-complete.
What these results on satisfiability and validity mean in practice?
Historically, problems that were not in {\sf P} were viewed as too hard
to deal with in practice. However, recently, major advances have been
made in finding algorithms that deal well with many {\sf NP}-complete
problems, although no generic approaches have been found for dealing
with problems that are co-{\sf NP}-complete, to say nothing of problems
that are {\sf PSPACE}-complete and beyond.  Nevertheless, even for
problems in these complexity classes, algorithms with humans in the loop
seem to provide useful insights.  So, while these complexity results
suggest that it is unlikely that we will
be able to find tools that do automated satisfiability or
validity checking and are guaranteed to always give correct results for
the logics  that we focus on in this book, this should not be taken to
say that we cannot write algorithms for satisfiability, validity,
or model checking that are useful for the problems of practical
interest.  Indeed, there is much work focused on just that.


\subsection{Axiomatisation}\label{chap1:subsec:axioms}
In the previous section, the formalisation of reasoning was defined
around the notion of {\em truth}: $\mcx \models \varphi$ meant that
$\varphi$ is true in all models in $\mcx$.
In this section, we discuss a form of reasoning
where a conclusion is inferred
purely based on its
syntactic form. Although there are several ways to do this,
in epistemic logic, the most popular way to define deductive
inference is by defining a {\em Hilbert-style axiom system}.
Such systems provide a very simple notion of formal proofs. Some
formulas are valid merely because they have a certain syntactic form.
These are the axioms of the system. The rules of the system say that
one can conclude that some formula is valid due to other formulas
being valid.
A formal proof or \emph{derivation} is a list of formulas,
where each formula is either an axiom of the system or can be
obtained by applying an inference rule of the system to formulas that
occur earlier in the list.
A proof or derivation of $\phi$ is a derivation whose  last formula is $\phi$.

\subsubsection{Basic system}

Our first definition of such a system will make the notion more
concrete. We give our definitions for a language
where the modal operators are  $K_\agent$ for the agents in some set
 $\agents$, although many of the ideas generalise to a setting with arbitrary
 modal  operators.

\begin{definition}[System $\prok$]
Let $\lan{L} = \lan{L}(\atoms,\operators,\agents)$, with $\operators =
\{K_a \mid a \in \agents\}$. The axiom system $\prok$
consists of the following axioms and rules of inference:
\begin{center}

\framebox{

\begin{tabular}[b]{llr}
$\axiom{1}$ & All substitution instances of propositional
tautologies.\index{axiom!propositional tautologies}&  \\
$\axiom{K}$ & $K_\agent (\phi \imp \psi) \imp (K_\agent \phi\imp
K_\agent \psi)$ for all $\agent \in \agents$. \\
$\axiom{MP}$ &
From $\phi$ and $\phi \imp \psi$ infer $\psi$.\\
$\axiom{Nec}$ & From $\phi$ infer $K_\agent \phi$. \\
\end{tabular}}
\end{center}
\end{definition}

Here, formulas in the axioms $\axiom{1}$ and $\axiom{K}$ have to be
interpreted as {\em axiom schemes}: axiom $\axiom{K}$ for instance denotes all
formulas $\{K_\agent (\phi \imp \psi) \imp (K_\agent \phi\imp K_\agent
\psi)\mid \varphi, \psi \in \lan{L}\}$.
The rule \axiom{MP} is also called {\em modus ponens};
\axiom{Nec} is called {\em necessitation}. Note that the notation for
axiom \axiom{K} and the axiom system $\prok$ are the same: the context
should make clear which is intended.

To see how an axiom system is actually used, we need to define
the notion of {\em derivation}.

\begin{definition}[Derivation]\label{chap1:def:derivation}
Given a logical language $\lan{L}$, let \mbx\ be an  axiom system with
axioms $\axiom{Ax_1},\dots,$ $\axiom{Ax_n}$ and rules $\axiom{Ru_1}, \dots
\axiom{Ru_k}$. A {\em derivation} of $\varphi$ in {\bf X} is a finite
sequence $\varphi_1, \dots, \varphi_m$ of formulas such that: (a)
$\varphi_m = \varphi$, and (b) every $\varphi_i$ in the sequence
is either an instance of an axiom or else the result of applying
a rule to formulas in the sequence prior to $\varphi_i$. For the
rules  \axiom{MP} and \axiom{Nec}, this means the following:
\begin{description}
\item[MP] $\varphi_h = \varphi_j \rightarrow \phi_i$, for some
$h, j < i$.
\end{description}
That is, both $\phi_j$ and $\phi_j \rightarrow \phi_i$ occur in th
sequence before $\phi_i$.
\begin{description}
\item[Nec] $\varphi_i = K_a \varphi_j$, for some $j < i$;
\end{description}
If there is a derivation for $\varphi$ in \mbx\ we write $\mbx \vdash
\varphi$, or $\vdash_\mbx \varphi$, or, if the system {\bf X}
is clear from the context, we just write $\vdash \varphi$. We then also
say that $\varphi$ is a {\em theorem} of \mbx, or that \mbx\ {\em
proves} $\varphi$. The sequence $\varphi_1, \dots, \varphi_m$ is then
also called a {\em proof of} $\varphi$ {\em in }\mbx.
\end{definition}


\begin{example}[Derivation in \prok]
We first show that
\begin{equation}\label{chap1:eq:kprop}
\prok \vdash K_\agent(\varphi \land \psi)   \rightarrow
(K_\agent\varphi\land K_\agent\psi).
\end{equation}
We present the proof as a sequence of numbered steps (so that the
formula $\phi_i$ in the derivation is given number $i$).  This allows us
to justify each step in the proof by describing which axioms, rules of
inference, and previous steps in the proof it follows from.

\noindent \(
\begin{array}{@{}l@{~}ll}
1. & (\varphi \land \psi) \rightarrow \varphi & \axiom{1}\\
2. & K_a((\varphi \land \psi)\rightarrow  \varphi) & \axiom{Nec}, 1\\
3. & K_a((\varphi \land \psi)\rightarrow  \varphi) \rightarrow (K_a(\varphi \land \psi) \rightarrow K_a\varphi) & \axiom{K}\\
4. & K_a(\varphi \land \psi) \rightarrow K_a\varphi & \axiom{MP}, 2, 3\\
5. & (\varphi \land \psi) \rightarrow \psi & \axiom{1}\\
6. & K_a((\varphi \land \psi)\rightarrow  \psi) & \axiom{Nec}, 5\\
7. & K_a((\varphi \land \psi)\rightarrow  \psi) \rightarrow (K_a(\varphi \land \psi) \rightarrow K_a\psi) & \axiom{K}\\
8. & K_a(\varphi \land \psi) \rightarrow K_a\psi & \axiom{MP}, 6, 7\\
9. & (K_a(\varphi \land \psi) \rightarrow K_a\varphi) \rightarrow \\
  &((K_a(\varphi \land \psi) \rightarrow K_a\psi) \rightarrow (K_a(\varphi \land \psi) \rightarrow (K_a\varphi \land K_a \psi) )) & \axiom{1}\\
10. & (K_a(\varphi \land \psi) \rightarrow K_a\psi) \rightarrow (K_a(\varphi \land \psi) \rightarrow (K_a\varphi \land K_a \psi)) &\axiom{MP}, 4, 9\\
11. & K_a(\varphi \land \psi) \rightarrow (K_a\varphi \land K_a\psi) & \axiom{MP}, 8,10 \\
\end{array}
\)

Lines 1, 5, and 9 are instances of propositional tautologies (this can
be checked using a truth table).
Note that the
tautology on line 9 is of the form $(\alpha \rightarrow \beta)
\rightarrow ((\alpha \rightarrow \gamma) \rightarrow (\alpha \rightarrow
(\beta\land\gamma)))$. A proof like that above may look cumbersome, but
it does show what can be done using only the axioms and rules of
$\prok$. It is convenient to give names to properties that are
derived, and so build a library of theorems. We have, for
instance that $\prok \vdash \axiom{KCD}$, where $\axiom{KCD}$
(`$K$-over-conjunction-distribution') is
\[
\axiom{KCD} \ \ K_a(\alpha \land \beta) \rightarrow K_a\alpha \mbox{ and
} K_a(\alpha \land \beta) \rightarrow K_a\beta.
\]
The proof of this follows steps 1 - 4 and steps 5 - 8,
respectively, of the proof above. We can also derive new rules;
for example, the following rule:
\axiom{CC} (`combine conclusions')  is derivable in $\prok$:
\[
\axiom{CC} \ \  \mbox{from } \alpha \rightarrow \beta \mbox{ and } \alpha
\rightarrow \gamma \mbox{ infer } \alpha \rightarrow (\beta \land
\gamma).
\]
The proof is immediate from the tautology on line 9 above, to which
we can, given the assumptions, apply modus ponens twice.
We can give a more compact proof of
$K_a(\varphi \land \psi) \rightarrow (K_a\varphi \land K_a\psi)$ using this library:

\[
\begin{array}{lll}
1. & K_a(\varphi \land \psi) \rightarrow K_a\varphi &
\axiom{KCD}\\
2. & K_a(\varphi \land \psi) \rightarrow K_a\psi & \axiom{KCD}\\
3. & K_a(\varphi \land \psi) \rightarrow (K_a\varphi \land K_a\psi) &
\axiom{CC}, 1, 2 
\end{array}
\]
\end{example}

\commentout{
We now give an example of a proof using premises.

\begin{example}[Inference from premises, in \prok]\label{chap1:ex:proofwithpremise}
We prove: $\{K_a(\varphi \rightarrow \psi), M_a\varphi \}\vdash_\mbx M_a\psi$. Recall that, by definition, $M_\alpha = \neg K_a \neg \alpha$.

\(
\begin{array}{lll}
1 & (\varphi \rightarrow \psi) \rightarrow (\neg \psi \rightarrow \neg\varphi) & \axiom{1}\\
2 & K_a((\varphi \rightarrow \psi) \rightarrow (\neg \psi \rightarrow \neg\varphi) ) & \axiom{Nec}, 1\\
 3 & K_a((\varphi \rightarrow \psi) \rightarrow (\neg \psi \rightarrow \neg\varphi) )  \rightarrow \\
 & (K_a(\varphi \rightarrow \psi) \rightarrow K_a(\neg \psi \rightarrow \neg \varphi)) &\axiom{K} \\
4 & K_a(\varphi \rightarrow \psi) \rightarrow K_a(\neg \psi \rightarrow \neg \varphi)& \axiom{MP}, 2, 3\\
5 & K_a(\varphi \rightarrow \psi) & \mbox{premise}\\
6& K_a(\neg\psi \rightarrow \neg\varphi)& \axiom{MP}, 4, 5\\
7 & K_a(\neg\psi \rightarrow \neg\varphi) \rightarrow (K_a \neg \psi \rightarrow K_a \neg\varphi) & \axiom{K}\\
8 & K_a \neg \psi \rightarrow K_a \neg\varphi & \axiom{MP}, 6, 7\\
9 & (K_a \neg \psi \rightarrow K_a \neg\varphi) \rightarrow (\neg K_a\neg\varphi \rightarrow \neg K_a\neg\psi) &  \axiom{1}\\
10 & \neg K_a\neg\varphi \rightarrow \neg K_a\neg\psi & \axiom{MP}, 8, 9\\
11 & \neg K_a\neg\varphi & \mbox{premise, def.\ of } M_a\varphi\\
12 & M_a\psi & \axiom{MP}, 10, 11, \mbox{ def.\ of } M_a\psi\\[0.7em]
\end{array}
\)
\end{example}
Note that in Example~\ref{chap1:ex:proofwithpremise} we have indeed
applied the necessitation rule before using any premise. Without this
restriction, we would have been able to derive, from the premises of the
example, that $K_bK_a(\varphi \rightarrow \psi)$, which is `clearly not
what we want'. Why don't we want it? Because such an inference would not
be {\em valid!} Of course, we would like to design an inference system
$\prok$ that infers all and only the valid formulas of $\mc{K}$.
}

For every class $\mcx$ of models introduced in the previous
section,
we want to have an inference system $\mbx$ such that derivability in
\mbx\ and validity in \mcx\ coincide:

\begin{definition}[Soundness and Completeness]\label{chap1:def:completeness}
Let $\lan{L}$ be a language, let $\mcx$ be a class of models, and let
$\mbx$ be an
axiom system. The axiom system is said to be
\begin{enumerate}
\item {\em sound} for $\mcx$ and the language $\lan{L}$ if, for all
formulas $\varphi \in \lan{L}$,
$\mbx\vdash \varphi$ implies $\mcx \models \varphi$; and
\item {\em complete} for $\mcx$ and the language $\lan{L}$ if, for all
formulas $\varphi \in \lan{L}$,
$\mcx\models \varphi$ implies $\mbx \vdash \varphi$.

\end{enumerate}
\end{definition}
We now provide axioms that characterize some of the subclasses of models
that were introduced in Definition~\ref{chap1:def:frameproperties}.

\begin{definition}[More axiom systems]\label{chap1:def:axioms}
Consider the following axioms, which apply for all agents $a \in
\agents$:
\begin{center}
\framebox{
\(
\begin{array}{llr}
\axiom{T}. & K_a\varphi \rightarrow \varphi&  \\
\axiom{D}. & M_a \top &  \\
\axiom{B}. & \varphi \rightarrow K_aM_a\varphi\\
\axiom{4}. & K_a \varphi \rightarrow K_aK_a\varphi\\
\axiom{5}. & \neg K_a \varphi \rightarrow K_a\neg K_a \varphi\\
\end{array}\)}
\end{center}
A simple way to denote axiom systems is just to add the axioms that are
included together with the name $\prok$. Thus, $\axiom{KD}$ is the
axiom system that has all the axioms and rules of the system $\prok$
($\axiom{1}$, $\axiom{K}$, and rules $\axiom{MP}$ and $\axiom{Nec}$)
together with $\axiom{D}$. Similarly, $\axiom{KD45}$ extends
$\prok$ by adding the
axioms $\axiom{D}$, $\axiom{4}$ and $\axiom{5}$. System $\axiom{S4}$ is
the more common way of denoting $\axiom{KT4}$, while $\axiom{S5}$ is the
more common way of denoting $\axiom{KT45}$. If it is
necessary to make explicit that there are $m$ agents in $\agents$,
we write $\axiom{K}_m$, $\axiom{KD}_m$, and so on.
\end{definition}


\subsubsection{Using $\mathbf{S5}$ to model knowledge}\label{subsubsec:s5}
The system $\mathbf{S5}$ is an extension of $\prok$
with the so-called `properties of knowledge'. Likewise,
$\mathbf{KD45}$ has been viewed as characterizing the `properties of belief'. 
The axiom
$\axiom{T}$ expresses that knowledge is {\em
veridical}\index{veridicality}: whatever one knows, must be true.
(It is sometimes called the {\em truth axiom}.)
The
other two axioms specify so-called {\em introspective
agents}\index{introspection}: $\axiom{4}$ says that an agent knows what
he knows (positive introspection), while $\axiom{5}$ says that he knows
what he does not know (negative introspection). As a side remark, we
mention that
axiom $\axiom{4}$ is superfluous in $\axiom{S5}$; it can be
deduced from the other axioms.


All of these axioms are idealisations, and indeed, logicians do not
claim that they hold for all possible interpretations of knowledge. It is only
human to claim one day that you know a certain fact, only to find
yourself admitting the next day that you were wrong, which undercuts
the axiom ${\mathbf T}$. Philosophers use
such examples to challenge the notion of knowledge in the
first place (see the notes at the end of the chapter for references to
the literature on logical 
properties of knowledge). Positive introspection has also been viewed
as problematic.  For example, consider a pupil who is asked a question
$\phi$ to which he 
does not know the answer. It may well be that, by asking more
questions, the pupil becomes able to answer that $\phi$ is true.
Apparently, the pupil knew
$\varphi$, but was not aware he knew, so did not know that
he knew $\varphi$. 

The most debatable among the axioms is that of
negative introspection. Quite possibly, a reader of this chapter does
not know (yet) what Moore's paradox is (see
Chapter~\chapref{chap:dynamicepistemiclogic}), but did she know before
picking up this book that she did not know that? 

Such examples suggest that a reason for ignorance can be lack of
{\em awareness}.  Awareness is the subject of
Chapter~\chapref{chap:awareness} in this book.
Chapter~\chapref{chap:onlyknowing} also has an interesting link to
negative introspection: this chapter tries to capture what it means to
claim `All I know is $\phi$'; in other words, it tries to give an
account of `minimal knowledge states'. This is a tricky concept in the
presence of axiom $\axiom{5}$, since all ignorance immediately leads to
knowledge!

One might argue that `problematic' axioms for knowledge should just be omitted,
or perhaps weakened, to obtain an appropriate system for knowledge,
but what about the basic principles of modal logic: the axiom
$\axiom{K}$ and the rule of inference $\axiom{Nec}$.  How acceptable are they
for knowledge? As one might expect, we should not take anything for
granted. $\axiom{K}$ assumes perfect reasoners, who can infer logical
consequences of their knowledge. It implies, for instance, that under
some mild assumptions, an agent will know what day of the week
July 26, 5018 will be. All that it takes to answer this question is that (1)
the agent knows today's date and what day of the week it is today,
(2) she knows the rules for assigning dates, computing leap years, and
so on (all of which can be encoded as axioms in an epistemic logic with
the appropriate set of primitive propositions).
By applying $K$ to this collection of facts, it follows that the agent
must know what day of the week it will be on July 26, 5018.
Necessitation assumes agents can infer all
$\mathbf{S5}$ theorems: agent $a$, for instance, would know that
$K_b(K_bq \land \neg K_b (p \rightarrow \neg K_bq))$ is equivalent to
$(K_bq \land M_bp)$.
Since even telling whether a formula is propositionally valid is
co-{\sf NP}-complete, this does not seem so plausible.

The idealisations mentioned in this paragraph are
often summarised as {\em logical omniscience}: our $\mathbf{S5}$ agent
would know everything that is logically deducible. Other
manifestations of logical omniscience are the equivalence of
$K(\varphi \land \psi)$ and $K\varphi \land K\psi$,
and the
derivable rule in $\mathbf{K}$ that allows one to infer $K\varphi
\rightarrow K\psi$ from $\varphi \rightarrow \psi$ (this says that
agents knows all logical consequences of their knowledge).

The fact that, in reality, agents are {\em not} ideal
reasoners, and not logically omniscient, is sometimes a feature
exploited by computational systems. Cryptography for instance is
useful because artificial or human intruders are, due to their limited
capacities, not able to compute the prime factors of a large number
in a
reasonable amount of time. Knowledge, security, and cryptographic
protocols are discussed in Chapter~\chapref{chap:knowledgeandsecurity}

Despite these problems, the \axiom{S5} properties are a useful idealisation of
knowledge
for many applications in distributed computing and
economics, and have been shown to give insight into a number of
problems.
The ${\mathbf S5}$ properties are reasonable for many of the examples that we have
already given; here is one more.  Suppose that we have two
processors, $a$ and $b$, and that they are involved in computations of
three variables, $x, y$, and $z$. For simplicity, assume that the variables
are Boolean, so that they are either 0 or 1. Processor $a$ can read the
value of $x$ and of $y$, and
$b$ can read $y$ and $z$. To model this, we use, for instance, $010$ as
the state where $x = 0 = z$, and $y = 1$. Given our assumptions regarding what
agents can see, we then have $x_1y_1z_1 \sim_a x_2y_2z_2$ iff $x_1 =
x_2$ and $y_1 = y_2$. This is a simple manifestation of an {\em
interpreted system}, where the accessibility relation is based on what an
agent can see in a state. Such a relation is an equivalence relation.
Thus, an interpreted system satisfies all the knowledge
axioms.  (This is formalised in Theorem~\ref{chap1:thm:completeness}(1) below.)

While $\axiom{T}$ has traditionally been considered an appropriate axiom
for
knowledge, it has not been considered appropriate for \emph{belief}.  To
reason about belief, $\axiom{T}$ is typically replaced by the weaker
axiom
$\axiom{D}$\index{axiom!$D$}: $\neg B_\agent\bot$, which says that
the agent does not believe a contradiction; that is, the agent's
beliefs are consistent.  This gives us the axiom system $\axiom{KD45}$.
We can replace $\axiom{D}$ by the following axiom $\axiom{D'}$ to get an
equivalent axiomatisation of belief:
$$\axiom{D}': \ \ K_\agenta\varphi \rightarrow \neg K_\agenta
\neg\varphi.$$
This axioms says that the agent cannot know (or believe) both a fact and
its negation.
Logical systems that have operators for both knowledge and belief often include the axiom $K_\agent \varphi \rightarrow B_\agent\varphi$, saying that knowledge entails belief.
\subsubsection{Axiom systems for group knowledge}
If we are interested in formalising the knowledge
of just one agent $a$, the language $\lan{L}(\atoms,\{K_a\},\agents)$
is arguably too rich.
In the logic
$\axiom{S5}_1$ it can be shown that every formula is equivalent to a
{\em depth-one} formula, which has no nested occurrences of $K_a$.
This follows from the following equivalences, all of which are valid in
$\mathcal{S}5$ as well as being theorems of $\axiom{S5}$:
$KK\varphi \leftrightarrow K\varphi$; $K\neg K\varphi \leftrightarrow
\neg K\varphi$; $K(K\varphi \lor \psi)  \leftrightarrow (K\varphi \lor
K\psi)$; and $K(\neg
K\varphi \lor \psi) \leftrightarrow \neg K\varphi \lor K\psi$.  From a logical perspective things become more interesting in the
multi-agent setting.

We now consider axiom systems for the notions of group knowledge that
were defined earlier.  Not surprisingly, we need some additional axioms.

\begin{definition}[Logic of common knowledge]
The following axiom and rule capture common knowledge.

\begin{center}
\framebox{
\(
\begin{array}{ll}
\axiom{Fix}. &C_\Group \phi\rightarrow E_\Group(\phi \land
C_\Group\phi).\\
\axiom{Ind}. &\mbox{From $\phi \rightarrow E_\Group(\psi \land \phi)$ infer
$\phi \rightarrow C_\Group\psi$}.
\end{array}
\)
}
\end{center}
For each axiom system {\bf X} considered earlier, let $\axiom{XC}$ be
the result of adding $\axiom{Fix}$ and $\axiom{Ind}$ to $\axiom{X}$.
\end{definition}


The {\em fixed point axiom} \axiom{Fix} says that common knowledge can
be viewed as the fixed point of an equation: common knowledge of $\phi$
holds if everyone knows both that $\phi$ holds and that $\phi$ is common
knowledge.  \axiom{Ind} is called the {\em induction rule}; it can be
used to derive common knowledge `inductively'.  If it is the case that
$\phi$ is `self-evident', in the sense that if it is true, then
everyone knows it, and, in addition, if $\phi$ is true, then everyone
knows $\psi$, we can show by induction that if $\phi$ is true, then so
is $E_\Group^k(\psi \land \phi)$ for all $k$.  It follows that $C_\Group
\phi$ is true as well.  Although common knowledge was defined as an
`infinitary' operator, somewhat surprisingly, these axioms completely
characterize it.

\commentout{
For the logic of common belief, we need to adjust the common knowledge axioms so that common belief does not imply veridicality of belief. In fact, there are alternative formulations of these axiomatisations that equally apply to common knowledge and common belief. We only list the changed axioms and rules. For general belief we write $EB_\Group$.

\bigskip

\begin{center}
\framebox{
\(
\begin{array}{ll}
\axiom{Emix} & CB_\Group \phi \imp EB_\Group (\phi \et CB_\Group \phi)\\
\axiom{Eind} & CB_\Group (\phi \imp EB_{\Group} \phi) \imp (EB_\Group \phi \imp CB_\Group \phi)
\end{array}
\)
}
\end{center}
\bigskip
}

For distributed knowledge, we consider the following axioms for all $A
\subseteq \agents$:

\begin{center}
\framebox{
\(
\begin{array}{ll}
\axiom{W}. & K_\agent \phi \rightarrow D_\Group \phi \ \mbox{if $a
\in  \Group$}.\\
\axiom{K_D}. & D_\Group(\phi \rightarrow \psi) \rightarrow (D_\Group\phi
\rightarrow D_\Group \psi).\\
\axiom{T_D}. & D_\Group\varphi \rightarrow \varphi. \\
\axiom{D_D.} & \neg D_\Group \neg \top.\\
\axiom{B_D}. & \varphi \rightarrow D_\Group \neg D_\Group \neg \varphi.\\
\axiom{4_D}. & D_\Group \varphi \rightarrow D_\Group D_\Group\varphi.\\
\axiom{5_D}. & \neg D_\Group\varphi \rightarrow D_\Group\neg
D_\Group\varphi.\\

 \end{array}
\)
}
\end{center}
These axioms have to be understood as follows. It may help to think
about distributed knowledge in a group $\Group$ as the knowledge of a
wise man, who has been told, by every member of $\Group$, what each of
them knows. This is captured by axiom $\axiom{W}$. The other axioms
indicate that the wise man has at least the same reasoning abilities as
distributed knowledge to the system $\axiom{S5}_m$, we
add the axioms $\axiom{W}, \axiom{K_D},  \axiom{T_D}, \axiom{4_D}$, and
$\axiom{5_D}$ to the axiom system. For $\axiom{K}_m$, we add only
$\axiom{W}$
and $\axiom{K_D}$.

\subsubsection{Proving Completeness}
We want to prove that the axiom systems that we have defined are sound
and complete for the corresponding semantics; that is, that $\axiom{K}$
is sound and complete with respect to $\mathcal{K}$, $\axiom{S5}$ is
sound and complete with respect to $\mathcal{S}5$, and so on.
Proving soundness is straightforward: we prove by induction on $k$ that
any formula proved using a derivation of length $k$ is valid.
Proving completeness is somewhat harder. There are different approaches,
but the common one involves to show that if a formula is not derivable,
then there is a model in which it is false.  There is a special model
called the \emph{canonical model} that simultaneously shows this for all
formulas.  We now sketch the construction of the canonical model.

The states in the canonical model correspond to \emph{maximal consistent
sets of formulas}, a notion that we define next.  These sets provide the
bridge between the syntactic and semantic approach to validity.

\begin{definition}[Maximal consistent set]
A formula $\phi$ is {\em consistent with axiom system $\axiom{X}$} if we
cannot derive $\neg \phi$ in $\axiom{X}$.   A finite set
$\{\phi_1,\ldots, \phi_n\}$ of formulas is consistent with $\axiom{X}$
if the conjunction $\phi_1\land \ldots \land \phi_n$ is consistent with
$\axiom{X}$.  An infinite set $\Gamma$ of formulas is consistent with
$\axiom{X}$ if each
finite subset of $\Gamma$ is consistent with $\axiom{X}$.
Given a language $\lan{L}$ and an axiom system $\axiom{X}$, a
\emph{maximal consistent set} for $\axiom{X}$
and $\lan{L}$ is a set $\Gamma$ of formulas in $\lan{L}$ that is
consistent  and \emph{maximal}, in the sense that every strict
superset $\Gamma'$ of $\Gamma$ is inconsistent.
\end{definition}

We can show that a maximal consistent set $\Gamma$ has the property
that, for every formula $\phi \in \lan{L}$, exactly one of $\phi$ and
$\neg \phi$ is in $\Gamma$.
If both were in $\Gamma$, then $\Gamma$ would be inconsistent; if
neither were in $\Gamma$, then $\Gamma$ would not be maximal. A maximal
consistent set is much like a state in a Kripke model, in that every
formula is either true or false (but not both) at a state.  In fact, as
we suggested above, the states in the canonical model can be identified
with maximal consistent sets.

\begin{definition}[Canonical model]
The \emph{canonical model for $\lan{L}$ and $\axiom{X}$} is the Kripke
model
$M=\langle S, R,V \rangle$ defined as follows:
\begin{itemize}
\item $S$ is the set of all maximal consistent sets for $\axiom{X}$ and
$\lan{L}$;
\item $\Gamma R_a \Delta$ iff $\Gamma \vert K_a \subseteq \Delta$, where
$\Gamma \vert K_a = \{ \varphi \mid K_a \phi \in \Gamma \}$;
\item $V(\Gamma)(p)= \mathit{true}$ iff $p \in \Gamma$.
\end{itemize}
\end{definition}

The intuition for the definition of $R_a$ and $V$ is easy to explain.
Our goal is to show that the canonical model satisfies what is called
the \emph{Truth Lemma}: a formula $\phi$ is true at a state $\Gamma$ in
the canonical model iff $\phi \in \Gamma$.
(Here we use the fact
that the states in the canonical model are actually sets of
formulas---indeed, maximal consistent sets.)  We would hope to prove
this by induction.  The definition of $V$ ensures that the Truth Lemma
holds for primitive propositions.  The definition of $R_a$ provides a
necessary condition for the Truth Lemma to hold for formulas of the form
$K_a \phi$.  If $K_a \phi$ holds at a state (maximal consistent set)
$\Gamma$ in the canonical model, then $\phi$ must hold at all states
$\Delta$ that are accessible from $\Gamma$.  This will be the case if
$\Gamma \vert K_a \subseteq \Delta$ for all states $\Delta$ that are accessible
from $\Gamma$ (and the Truth Lemma applies to $\phi$ and $\Delta$).

The Truth Lemma can be shown to hold for the canonical model,
as long as we consider a language that does not involve common knowledge
or distributed knowledge.   (The hard part comes in showing that if
$\neg K_a \phi$ holds at a state $\Gamma$, then there is an accessible
state $\Delta$ such that $\neg \phi \in \Delta$.  That is, we must show
that the $R_a$ relation has `enough' pairs.)
In addition to the Truth Lemma, we can also show that the canonical
model for axiom system $\axiom{X}$ is a model in the corresponding class
of models; for
example, the canonical model for $\axiom{S5}$ is in $\mathcal{S}5$.

Completeness follows relatively easily once these two facts are
established.  If a formula $\phi \in \lan{L}$ cannot be derived
in $\axiom{X}$ then $\neg \phi$ must be consistent with $\axiom{X}$, and
thus can be shown to be an element of a maximal consistent set, say
$\Gamma$.  $\Gamma$ is a state in the canonical model for $\axiom{X}$
and $\lan{L}$.  By the Truth Lemma, $\neg \phi$ is true at $\Gamma$, so
there is a model where $\phi$ is false, proving the completeness of
$\axiom{X}$.

This argument fails if the language includes the common knowledge
operator.  The problem  is that with the common knowledge operator in
the language, the logic is
not \emph{compact}: there is a set of formulas such that all its finite
subsets are satisfiable, yet the whole set is not satisfiable.
Consider the set $\{E_\Group^n p \mid n \in \mathbb{N}\} \cup \{\neg C_\Group
p\}$, where $\Group\subseteq\agents$ is a group with at least two agents.  Each finite subset
of this set is easily seen to be satisfiable
in a model in $\mathcal{S}5$ (and hence in a model in any of the other
classes we have considered), but the whole set of formulas is not
satisfiable in any Kripke model.
Similarly, each finite subset of this set can be shown to be consistent
with $\axiom{S5C}$.   Hence, by definition, the whole set is
consistent with $\axiom{S5C}$ (and hence all other axiom systems we have considered).
This means that this set must be a subset of a maximal
consistent set.  But, as we have observed, there is no
Kripke model where this set of formulas is satisfied.


This means that a different proof technique is necessary to prove
completeness.
Rather than constructing
one large canonical model for all formulas, for each formula $\phi$, we
construct a finite canonical model tailored to $\phi$.  And rather than
considering maximal consistent subsets  to the set of all formulas in the
language, we consider maximal consistent sets of the set of
subformulas of $\phi$.

The canonical model $M_\phi = \langle S_\phi, R, V \rangle$ for $\phi$
and $\axiom{KC}$ is defined as follows:
\begin{itemize}
\item $S_\phi$ is the set of all maximal consistent sets of subformulas
of $\phi$ for $\axiom{KC}$;
\item $\Gamma R_a \Delta$ iff $(\Gamma \vert K_a)
 \cup \{C_\Group  \psi \mid C_\Group \psi \in \Gamma $ and $a \in \Group\} \subseteq \Delta$.
\item $V(\Gamma)(p)= \mathit{true}$ iff $p \in \Gamma$.
\end{itemize}
The intuition for the modification to the definition of $R_a$ is the
following:  Again, for the Truth Lemma to hold, we must have
$\Gamma \vert K_a  \subseteq \Delta$, since if
$K_a\psi \in \Gamma$, we want $\psi$ to hold in all states accessible
from $\Gamma$.  By the fixed point axiom, if $C_\Group \psi$ is true
at a state $s$, so is $E_\Group C_\Group \psi$; moreover, if $a \in
\Group$, then $K_a C_\Group \psi$ is also true at $s$.  Thus, if
$C_\Group \psi$ is true at $\Gamma$, $C_\Group \psi$ must also be true at all
states accessible from $\Gamma$, so we must have $\{C_\Group \psi \mid C_\Group
\psi \in \Gamma $ and $ a \in \Group \} \subseteq \Delta$.  Again,
we can show that the Truth Lemma holds for the canonical model for
$\phi$ and $\axiom{KC}$ for subformulas of $\phi$; that is, if $\psi$ is
a subformula of
$\phi$, then $\psi$ is true at a state $\Gamma$ in the canonical model
for $\phi$ and $\axiom{KC}$ iff $\phi \in \Gamma$.

We must modify this construction somewhat for axiom systems that contain
the axiom $\axiom{4}$ and/or $\axiom{5}$.  For axiom systems that contain
$\axiom{4}$, we redefine $R_a$ so that $\Gamma R_a \Delta$ iff $(\Gamma
\mid K_a) \cup \{C_\Group \psi \mid C_\Group \psi \in \Gamma $ and $ a \in \Group\} \cup
\{ K_a \psi \mid K_a \psi \in \Gamma \}
\subseteq \Delta$.
The reason that we want $\{K_a \varphi \mid K_a \phi \in \Gamma \}
\subseteq \Delta$ is that if $K_a \psi$ is true at the state
$\Gamma$, so is $K_a K_a \psi$, so $K_a \psi$ must be true at all
worlds accessible from $\Gamma$.  An obvious question to ask is why we
did not make this requirement in our original canonical model
construction.  If both $K_a \psi$ and $K_a K_a \psi$ are subformulas of
$\phi$, then the requirement is in fact not necessary.  For if $K_a \psi
\in \Gamma$, then consistency will guarantee that $K_a K_a \psi$ is as
well, so the requirement that $\Gamma \vert K_a \subseteq \Delta$
guarantees that $K_a \psi \in \Delta$.  However, if $K_a \psi$ is a
subformula of $\phi$ but $K_a K_a \psi $ is not, this argument fails.

For systems that contain $\axiom{5}$, there are further subtleties.  We
illustrate this for the case of $\axiom{S5}$. In this case, we require that
$\Gamma R_a \Delta$ iff $\{K_a \psi \mid K_a \psi \in \Gamma \} =
\{K_a \psi \mid K_a \psi \in \Delta \}$ and
$\{C_\group \psi \mid C_\Group \psi \in \Gamma $ and $a \in \Group \} =
\{C_\Group \psi \mid C_\Group \psi \in \Delta $ and $a \in \Group \}$.  Notice that the fact that
$\{K_a \psi \mid K_a \psi \in \Gamma \} = \{K_a \psi \mid K_a \psi
\in \Delta \}$ implies that $\Gamma \vert K_a = \Delta \vert  K_a$.
We have already argued that having $\axiom{4}$ in the system means that
we should have  $\{K_a \psi \mid K_a \psi \in
\Gamma \} \subseteq \{K_a \psi \mid K_a \psi \in \Delta \}$.  For the
opposite inclusion, note that if $ K_a  \psi \notin \Gamma$, then
$\neg K_a \psi$ should be true at the state $\Gamma$ in the canonical
model, so (by $\axiom{5}$) $K_a \neg K_a \psi$ is true at $\Gamma$, and
$\neg K_a \psi$ is true at $\Delta$ if $\Gamma R_a \Delta$.  But this
means that $K_a \psi \notin \Delta$ (assuming that the Truth Lemma
applies).  Similar considerations show that we must have
$\{C_\Group \psi \mid C_\Group \psi \in \Gamma $ and $ a \in \Group \} = \{C_\Group \psi \mid C_\Group \psi \in
\Delta $ and $ a \in \Group \}$, using the fact that $\neg C_\Group \psi \imp E_\Group \neg C_\Group \psi$
is provable in $\axiom{S5C}$.

Getting a complete axiomatisation for languages involving distributed
knowledge requires yet more work; we omit details here.

We summarise the main results regarding completeness of epistemic logics
in the following theorem.
Recall that, for an axiom system $\axiom{X}$, the axiom system
$\axiom{XC}$ is the result of adding the axioms $\axiom{Fix}$ and
$\axiom{Ind}$ to $\axiom{X}$.
Similarly, $\axiom{XD}$ is the result of adding the `appropriate'
distributed knowledge axioms to $\axiom{X}$; specifically,
it includes the axiom $\axiom{W}$,
together with every axiom $\axiom{Y_D}$ for which $\axiom{Y}$ is
an axiom of $\axiom{X}$. So, for example, $\axiom{S5D}$ has
the axioms of $\axiom{S5}$ together with $\axiom{W}$, $\axiom{K_D}$,
$\axiom{T_D}$, $\axiom{4_D}$, and $\axiom{5_D}$.

\begin{theorem}\label{chap1:thm:completeness}
If $\lan(\atoms,\operators,\agents)$,
$\axiom{X}$ is an axiom systems that includes all the axioms and
rules of $\axiom{K}$ and some (possibly empty) subset of $\{\axiom{T},
\axiom{4}, \axiom{5}, \axiom{D}\}$, and $\mathcal{X}$ is the
corresponding class of Kripke models, then the following hold:
\begin{enumerate}
\item if $\operators = \{K_a \mid a \in \agents\}$, then
$\axiom{X}$ is sound and complete for $\mcx$ and $\lan{L}$;
\item if $\operators = \{K_a \mid a \in \agents\} \cup \{C_\Group \mid \Group
\subseteq \agents\}$, then
$\axiom{XC}$ is sound and complete for $\mcx$ and $\lan{L}$;
\item if $\operators = \{K_a \mid a \in \agents\} \cup \{D_\Group \mid \Group
\subseteq \agents\}$, then
$\axiom{XD}$ is sound and complete for $\mcx$ and $\lan{L}$;
\item if $\operators = \{K_a \mid a \in \agents\} \cup
 \{C_\Group \mid \Group \subseteq \agents\} \cup
\{D_\Group \mid \Group \subseteq \agents\}$, then
$\axiom{XCD}$ is sound and complete for $\mcx$ and $\lan{L}$.
\end{enumerate}
\end{theorem}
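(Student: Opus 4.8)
The plan is to prove soundness and completeness separately, handling the four items by successively enriching a single canonical-model argument. For \textbf{soundness}, I would argue by induction on the length of a derivation that every theorem of the relevant system is valid in $\mcx$. The propositional axioms and $\axiom{K}$, together with $\axiom{MP}$ and $\axiom{Nec}$, are handled by Theorem~\ref{chap1:thm:validities}(a)--(d). Each of $\axiom{T}$, $\axiom{4}$, $\axiom{5}$, $\axiom{D}$ is valid exactly on the frame class whose defining property (reflexivity, transitivity, the Euclidean property, seriality) it corresponds to, again by Theorem~\ref{chap1:thm:validities}; the distributed-knowledge axioms $\axiom{W}$, $\axiom{K_D}$, $\axiom{T_D}$, $\dots$ are verified directly against the interpretation of $D_\Group$ as the necessity operator of $R_{D_\Group}=\bigcap_{a\in\Group}R_a$, and $\axiom{Fix}$, $\axiom{Ind}$ against the interpretation of $C_\Group$ via $R_{C_\Group}=R_{E_\Group}^{+}$. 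Since each axiom is valid and each rule preserves validity, soundness follows.

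For \textbf{completeness} of item~(1) (only the operators $K_a$), I would build the canonical model $M=\langle S,R,V\rangle$ whose states are the maximal consistent sets for $\axiom{X}$, with $\Gamma R_a\Delta$ iff $\Gamma\vert K_a\subseteq\Delta$ and $V(\Gamma)(p)=\mathit{true}$ iff $p\in\Gamma$. The heart is the \emph{Truth Lemma}, proved by induction on the structure of $\varphi$: $M,\Gamma\models\varphi$ iff $\varphi\in\Gamma$. The only nontrivial step is the modal one, whose hard direction is the \emph{Existence Lemma}: if $\neg K_a\varphi\in\Gamma$ then some maximal consistent $\Delta\supseteq\Gamma\vert K_a$ contains $\neg\varphi$; this follows by checking that $(\Gamma\vert K_a)\cup\{\neg\varphi\}$ is consistent (using $\axiom{K}$ and $\axiom{Nec}$) and extending it via Lindenbaum's lemma. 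One then verifies that the canonical $R_a$ inherits exactly the frame property matching each axiom present in $\axiom{X}$, so that $M\in\mcx$. Completeness is then immediate: if $\not\proves\varphi$ then $\neg\varphi$ lies in some maximal consistent $\Gamma$, and by the Truth Lemma $M,\Gamma\models\neg\varphi$, witnessing $\mcx\not\models\varphi$.

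For item~(2) (adding $C_\Group$), the infinite canonical model cannot work, since $\axiom{XC}$ is not compact; instead I would, for each underivable $\varphi$, build a \emph{finite} canonical model $M_\varphi$ whose states are maximal consistent sets of (a suitable closure of) the subformulas of $\varphi$, with $R_a$ enlarged so that $\Gamma R_a\Delta$ also requires $\{C_\Group\psi\mid C_\Group\psi\in\Gamma,\ a\in\Group\}\subseteq\Delta$, as indicated in the text; for systems containing $\axiom{4}$ or $\axiom{5}$ one further closes $R_a$ under the $K_a$-formulas (respectively equates the $K_a$- and $C_\Group$-fragments on the two sides). The decisive new point is the $C_\Group$ clause of the Truth Lemma: one shows that $R_{C_\Group}=R_{E_\Group}^{+}$ in $M_\varphi$ tracks the semantics, using $\axiom{Fix}$ for one direction and the induction rule $\axiom{Ind}$ for the other, via a reachability argument along $R_{E_\Group}$-paths. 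Finiteness guarantees that the transitive closure behaves correctly and that $M_\varphi$ lands in the required class.

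The \textbf{main obstacle} is items~(3) and~(4), the distributed-knowledge cases. Interpreting $D_\Group$ as the necessity operator of $\bigcap_{a\in\Group}R_a$ makes the Existence Lemma for $D_\Group$ delicate: defining $R_a$ canonically by $\Gamma\vert K_a\subseteq\Delta$ does \emph{not} force the intersection relation to be rich enough to refute $\neg D_\Group\varphi$, because $\Gamma\vert D_\Group$ can be strictly larger than $\bigcup_{a\in\Group}\Gamma\vert K_a$. The standard remedy is to refine the construction so that the intersection of the individual relations realises exactly the distributed-knowledge relation---e.g.\ by an unravelling or copying construction that splits states so that $R_{D_\Group}$ matches $\bigcap_{a\in\Group}R_a$ while each $R_a$ still satisfies the Existence Lemma for $K_a$ and retains the frame property dictated by $\axiom{X}$. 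Once the $D_\Group$-version of the Existence Lemma is secured (using $\axiom{W}$, $\axiom{K_D}$, and the corresponding $\axiom{Y_D}$ axioms), the Truth Lemma and the class-membership check go through as before, and combining this with the finitary treatment of $C_\Group$ yields item~(4). I expect this reconciliation of $\bigcap_{a\in\Group}R_a$ with the syntactic distributed-knowledge relation to be the genuinely hard part; the remaining steps are routine adaptations of item~(1).
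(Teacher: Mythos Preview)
Your proposal is correct and follows essentially the same route as the paper: soundness by induction on derivations via Theorem~\ref{chap1:thm:validities}, completeness for item~(1) via the standard canonical model and Truth Lemma, completeness for item~(2) via the finite canonical model with the modified $R_a$ (including the further adjustments for systems containing $\axiom{4}$ or $\axiom{5}$), and the acknowledgement that items~(3) and~(4) require additional work to align $\bigcap_{a\in\Group}R_a$ with the syntactic $D_\Group$-relation. The paper itself does not give a self-contained proof of the distributed-knowledge cases either---it simply says this ``requires yet more work; we omit details here'' and defers to the literature---so your identification of the unravelling/copying construction as the missing ingredient is exactly the right diagnosis, and indeed goes slightly beyond what the text supplies.
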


\commentout{
It is interesting to note that a logic $\axiom{X}_m$ can be strongly sound and complete with respect to several classes of models. One example is obtained by using the finite model property, Fact~\ref{chap1:fact:fmp}:  saying that $\axiom{X}_m$ is strongly sound and complete with respect to  $\mathcal{X}_m$ is equivalent to saying that $\axiom{X}_m$ is strongly sound and complete with respect to $\mc{F}in(\mathcal{X}_m)$.
This implies that `being in an infinite model' cannot be expressed in
any of the modal logics discussed. Here is another example of
axiomatisations that are sound and complete with respect to several
classes of models. We know from Theorem~\ref{chap1:thm:completeness}
that $\axiom{K}_m$ is sound and complete to the class of all Kripke
models $\mathcal{K}_m$. However, it is also sound and complete with
respect to the class of models where every accessibility relation
$R_\agent$ is {\em irreflexive}, i.e., for which $\forall s \neg
Rss$. This implies that irreflexivity is not modally definable. We also
know from Theorem~\ref{chap1:thm:completeness} that $\axiom{S5}_1$ is
sound and complete with respect to models in which the accessibility
relation is an equivalence relation. However, it is not difficult to
prove that it is also sound and complete with respect to the class of
models in which the accessibility relation is {\em universal}, that is,
where all states are connected. As a final example, $\axiom{K^D}_m$ is
complete with respect to models for which $R_\Group =
\bigcap_{\agent\in\agents} R_\agent$, but also with respect to the set
of models that satisfy $R_\Group \supseteq \bigcap_{\agent\in\agents}
R_\agent$.
}






\section{Overview of the Book}\label{chap1:sec:overview}



The book is divided into three parts: informational
attitudes, dynamics, and applications. Part I, informational
attitudes, considers ways that basic epistemic logic can be extended
with other modalities related to knowledge and belief, such as ``only
knowing'', ``awareness'', and probability.  There are three chapters
in Part I:

\paragraph{Only Knowing}
Chapter \chapref{chap:onlyknowing}, on only knowing, is authored by Gerhard
Lakemeyer and Hector J.\ Levesque. What do we mean by `only knowing'?  
When we say that an agent knows $p$, we usually mean that the agent
knows {\em at least} $p$, but 
possibly more. In particular, knowing $p$ does not allow us to conclude that $q$ is not known.
Contrast this with the situation of a knowledge-based agent, whose knowledge base consists of $p$,
and nothing else. Here we would very much like to conclude that this agent does not know $q,$ but to
do so requires us to assume that $p$ is all that the agent knows or, as one can say, the agent only
knows $p$. In this chapter, the logic of only knowing for both single
and multiple
agents is considered, from both the semantic and proof-theoretic
perspective. It is shown that only knowing can be used
to capture a certain form of honesty, and that it relates to a form of
non-monotonic reasoning. 

\paragraph{Awareness}
Chapter \chapref{chap:awareness}, on logics where knowledge and awareness
interact, is authored by Burkhard Schipper. 
Roughly speaking, an agent is unaware of a formula $\phi$ if $\phi$ is
not on his radar screen (as opposed to just having no information
about $\phi$, or being uncertain as to the truth of $\phi$).  
The chapter discusses various  approaches to
modelling (un)awareness.
While the focus is on
axiomatisations of structures capable of modelling knowledge and
awareness, structures for modelling
probabilistic beliefs and awareness, are also discussed, as well as
structures for awareness of unawareness. 

\paragraph{Epistemic Probabilistic Logic}
Chapter \chapref{chap:probabilisticupdates}, authored by
Lorenz Demey and Joshua Sack, provides an overview of
systems that combine probability theory, which describes quantitative
uncertainty, with epistemic logic, which describes qualitative
uncertainty. 
By combining knowledge and probability, 
one obtains a very powerful account of information and
information flow. 
Three types of systems are investigated:
systems that describe uncertainty of agents at a
single moment in time, systems where the uncertainty changes
over time, and systems that describe the actions that cause
these changes. 

\bigskip

\noindent Part II on dynamics of informational attitudes considers aspects of how knowledge and
belief change over time.  It consists of three chapters:

\paragraph{Knowledge and Time}
Chapter \chapref{chap:knowledgeandtime}, on knowledge and time, is
authored by Clare Dixon, 
Cl\'audia Nalon, and Ram Ramanujam. It
discusses the dynamic aspects of knowledge, which can be 
characterized by a combination of temporal and epistemic
logics. 
The chapter presents the language and axiomatisation for such a combination,
and discusses complexity and expressivity issues.
It presents two different proof methods (which apply quite broadly):
{\em resolution}
and {\em tableaux}. Levels of knowledge and the relation between
knowledge and communication in distributed protocols are also
discussed, and  an automata-theoretic characterisation of the knowledge
of finite-state agents is provided. The chapter concludes with a brief
survey on applications. 

\paragraph{Dynamic Epistemic Logic}
Chapter \chapref{chap:dynamicepistemiclogic}, on dynamic epistemic logic,
is authored by Lawrence Moss. 
Dynamic Epistemic Logic ($\DEL$) extends epistemic logic with operators
corresponding to  \emph{epistemic actions}.   The most
basic epistemic action is
a public announcement of a given sentence to all agents. 
In the first part of the chapter, a logic called $\PAL$ 
(\emph{public announcement logic}), which includes announcement
operators, is introduced.  
Four different axiomatisations for $\PAL$ are given and compared.
 It turns out that $\PAL$ without
common knowledge is reducible to standard epistemic logic: the announcement
operators may be
translated away.  However, this changes once we include common knowledge
operators in the language. The second part of Chapter
\chapref{chap:dynamicepistemiclogic} 
is devoted to more general epistemic 
actions, such as private
announcements.   

\paragraph{Dynamic Logics of Belief Change}
Chapter \chapref{chap:beliefrevisioninDEL}, on belief change, is authored
by Johan van 
Benthem and Sonja Smets. 
The chapter gives an overview of current dynamic logics that describe
belief update and revision.  This involves a combination of ideas from
belief revision theory and dynamic epistemic logic.  The chapter
describes various types of belief change, depending on whether the
information received is `hard' or `soft'.
The chapter continues with three topics that naturally complement the
setting of single steps of belief change: connections with
probabilistic 
approaches to belief change, long-term temporal process structure including links
with formal learning theory, and multi-agent scenarios of information flow and belief
revision in games and social networks. It ends with a discussion of
alternative approaches, further directions, and windows to the broader
literature.

\bigskip
\noindent Part III considers applications of epistemic logic in
various areas.  It consists of five chapters:

\paragraph{Model Checking Temporal Epistemic Logic}
Chapter \chapref{chap:modelchecking}, authored by
Alessio Lomuscio and Wojciech Penczek, surveys 
work on model checking systems against 
temporal-epistemic specifications.  
The focus is on two approaches to verification:
approaches based on \emph{ordered binary decision diagrams} (OBDDs) and
approaches based on translating specifications to propositional logic,
and then applying propositional satisfiability checkers (these are called
\emph{SAT-based} approaches).
OBDDs provide a compact representation for propositional formulas; they
provide powerful techniques for efficient mode checking;
SAT-based model checking is the basis for many
recent symbolic approach to verification.  
The chapter also discusses some more advanced techniques for model
checking.  


\paragraph{Epistemic Foundations of Game Theory}
Chapter \chapref{chap:epistemicfoundationsforgames}, authored by Giacomo
Bonanno, provides an overview of the epistemic approach to game theory.
Traditionally, game theory focuses on interaction among intelligent,
sophisticated and rational individuals. 
The epistemic approach attempts to characterize, using epistemic notions,
the behavior of rational and intelligent players who know the
structure of the game and the preferences of their opponents and who
recognize each other's rationality and reasoning abilities. The
focus of the analysis is on the implications of common belief of rationality in
strategic-form games and on dynamic games with perfect information. 

\paragraph{BDI Logics}
Chapter \chapref{chap:agentsbdi}, on logics of beliefs, desires, and
intentions (BDI), is authored by
John-Jules Ch.\ Meyer, Jan Broersen and Andreas Herzig. 
Various formalisations of BDI in logic are considered, such as the
approach of Cohen and Levesque (recast in dynamic logic),
Rao and Georgeff's influential $\mathsf{BDI}$ logic based on the
branching-time temporal logic $\mathsf{CTL}^*$, the $\mathsf{KARO}$
framework, in which action together with knowledge (or belief) is
the primary concept on which other agent notions are built, and 
$\mathsf{BDI}$ logics based on $\mathsf{STIT}$ (seeing to it that)
logics, such as $\mathsf{XSTIT}$. 

\paragraph{Knowledge and Ability}
Chapter \chapref{chap:strategicability}, authored by
Thomas {\AA}gotnes, Valentin Goranko, Wojciech Jamroga and Michael
Wooldridge, relates epistemic logics to various
logics for {\em strategic abilities}.  It starts by discussing
approaches from
philosophy and artificial intelligence to modelling the interaction of
agents knowledge and abilities, and then focuses on concurrent game
models and the alternating-time temporal logic $ATL$. The authors 
discuss how $ATL$ enables reasoning about agents' coalitional
abilities to achieve qualitative objectives in concurrent game models,
first assuming complete information and then under incomplete
information and uncertainty about the structure of the game model. 
Finally,  extensions of $ATL$ that allow explicit reasoning
about the interaction of knowledge and strategic abilities are
considered; this leads to the notion of \emph{constructive knowledge}.

\paragraph{Knowledge and Security}
Chapter \chapref{chap:knowledgeandsecurity}, on knowledge and security, is
authored by 
Riccardo Pucella. A persistent intuition in the field of computer security says that
epistemic logic, and more generally epistemic concepts, are relevant
to the formalisation of security properties. What grounds this
intuition is that much work in the field is based on epistemic
concepts. Confidentiality, 
integrity, authentication, anonymity, non-repudiation, all can be
expressed as epistemic properties. This survey illustrates the use of
epistemic concepts and epistemic logic to formalise a specific
security property, {\em confidentiality}. Confidentiality is a prime
example of the use of knowledge to make a security property
precise. It is explored in two large domains of 
application: cryptographic protocol analysis and multi-level security
systems.

\section{Notes}\label{chap1:sec:literature}

The seminal work of the philosopher Jaakko Hintikka
(\citeyear{hintikka:62a}) is typically taken 
as the starting point of modern epistemic logic. Two texts on
epistemic logic 
by computer scientists
were published in 1995: one by \cite{fagin:95a}
and the other by \cite{meyer:95a}.
Another influential text on epistemic logic, 
which focuses more on philosophical aspects,
is by \cite{Rescher:survey}.
Formal treatments of the notion of knowledge in
artificial intelligence, in particular for reasoning about action, go
back to the work of \cite{moore:77a}.  In the
mid-1980s, the conference on Theoretical Aspects of Reasoning
About Knowledge (TARK), later renamed to ``Theoretical Aspects of
\emph{Rationality} and Knowledge, was started (\citeyear{tark});
in the mid-1990s, the Conference on Logic and Foundations of Game and
Decision Theory (LOFT) (\citeyear{loft}) began.  
These two conferences continue
to this day, bringing together computer scientists, economists, and
philosophers.

Our chapter is far from the first introduction to epistemic
logic. The textbooks by \cite{fagin:95a} and by \cite{meyer:95a} each
come with an 
introductory chapter; more recent surveys and introductions can be
found in the book by \citet[Chapter 2]{del},
in a paper on epistemic logic and epistemology  by
\cite{holliday:13a}, 
in the chapter by \cite{estructures}, which provides a survey of
semantics for epistemic notions, 
and in \defcitealias{wikipedia}{Wikipedia}%
online resources (\citealt{stanfordenc}, \citetalias{wikipedia}).

\cite{Hal1} provides an introduction to applications of knowledge in
distributed computing; the early chapters of the book by \cite{Perea12} give
an introduction to the use of epistemic logic in game theory.
As we already said, more discussion of the examples in
Section~\ref{chap1:sec:intro} can be found in the relevant chapters.
Public announcements are considered in
Chapter~\chapref{chap:dynamicepistemiclogic};
protocols are studied in Chapter~\chapref{chap:knowledgeandsecurity}
and, to some extent, in
Chapter~\chapref{chap:knowledgeandtime}; strategic ability is
the main topic of Chapter~\chapref{chap:strategicability};
epistemic foundations of game theory are considered in
Chapter~\chapref{chap:epistemicfoundationsforgames}; distributed computing
is touched on in Chapter~\chapref{chap:knowledgeandtime}, while
examples of model checking distributed protocols are given in
Chapter~\chapref{chap:modelchecking}.

The use of Kripke models puts our approach to epistemic logic firmly in
the tradition of modal logic, of which Kripke is one
of the founders (see \cite{kripke:63a}).
Modal logic has become {\em the} framework to reason
not only about notions as knowledge and belief, but also about agent
attitudes such as
desires and intentions \citep{rao:91c}, and about notions like time
\citep{emerson:90a}, action \citep{harel:84a}, programs
\citep{fischerladner}, reasoning about obligation
and permission \citep{vonWright1951-VONIDL}, and
combinations of them.  Modern references to modal logic include the
textbook by \cite{blackburn:2001a} and the handbook edited by \cite{hanbookmodal}.

Using modal logic to
formalise knowledge and belief suggests that one has an idealised
version of these notions in mind. The discussion in
Section~\ref{subsubsec:s5} is only the tip of the iceberg.  Further discussion
of logical omniscience can be found in 
\citep{Stalnaker1991-STATPO-8,INT:INT3} and in \citep[Chapter 9]{fagin:95a}.
There is a wealth of discussion in the philosophy
and psychology literature of the axioms and their reasonableness
\citep{koriat1993,larsson:2004,zangwill2013}.
Perhaps the most controversial axiom of knowledge is $\axiom{5}$;
which was dismissed in the famous claim by Donald
Rumsfeld 
that there are `unknown unknowns'
(see
  \url{http://en.wikipedia.org/wiki/There_are_known_knowns}). 
Some approaches for dealing with lack of knowledge using awareness avoid
this axiom (and, indeed, all the others); see
Chapter~\chapref{chap:awareness}.

Broadly speaking,
philosophers usually distinguish between the {\em truth} of a claim,
our {\em belief} in it, and the {\em justification for the claim}.
These are often considered the three key elements of knowledge. Indeed,
there are papers that define knowledge as justified true belief.
There has been much debate of this definition, going back to
Gettier's (\citeyear{gettier1963}) {\em Is justified true belief knowledge?}.
Halpern, Samet, and Segev (\citeyear{HSamet2}) provide a recent  
perspective on these issues.
The notion of {\em common knowledge} is often traced back  to the
philosopher David Lewis's (\citeyear{lewis:69a})
independently developed by the sociologist Morris
Friedell (\citeyear{Friedell67}).  Work on common knowledge in economics
was initiated by Robert Aumann (\citeyear{aumann:76a});
John McCarthy's (\citeyear{mccarthy:1978}) work involving common
knowledge had a significant impact in the field of artificial intelligence.
Good starting points
for further reading on the topic of common knowledge are by
\cite{stanfordenc:commonknowledge}
and by \citet[Chapter 6]{fagin:95a}.
Section~9.5
compares the notions
of common knowledge with that of common belief.

Distributed knowledge was discussed first, in an informal way, by
\cite{Hay45},
and then, in a more formal way, by \cite{Hil77}.
It was rediscovered and popularized by
\cite{halpern:84a}, who originally called it
\emph{implicit knowledge}.

The notion of bisimulation is a central notion in modal logic,
providing an answer to the question when two models are `the same'
and is discussed in standard modal logic texts
\citep{blackburn:2001a,hanbookmodal}.  Bisimulation arises quite often
in 
this book, including in Chapters~\chapref{chap:knowledgeandtime},
~\chapref{chap:dynamicepistemiclogic}, and
~\chapref{chap:beliefrevisioninDEL}.

We mentioned below Theorem~\ref{chap1:thm:sat}, when discussing complexity 
of validity, that some recent advances make NP-complete problems seem more tractable: for this
we refer to work by \citet{GKSS08}.

We end this brief discussion of the background literature by providing
the pointers to the technical results mentioned in our
chapter. Theorem~\ref{chap1:thm:validities:item:one} gives some standard
valid formulas for several classes of models (see \citet[Chapter
2.4]{fagin:95a} for a textbook treatment).
Theorem~\ref{chap1:thm:bisim} is a folk theorem in modal logic: for
a proof and discussion, see  \citet[Chapter 2.3]{hanbookmodal}.
Proposition~\ref{chap1:prop:smallmodel} is proved by
\cite{fagin:95a} as Theorem 3.2.2 (for the case $\mcx = \cal{K}$) and
Theorem 3.2.4 (for $\mcx = \mc{T}, \mc{S}4, \mc{KD}45$, and
$\mc{S}5$).  Proposition~\ref{chap1:prop:modelchecking} is
Proposition 3.2.1 by \cite{fagin:95a}. Theorem~\ref{chap1:thm:sat}
is proved by \cite{halpern:92a}.

Although the first proofs of completeness for multi-agent versions of
axiom systems of the form
$\axiom{X}_m$ and $\axiom{XC}_m$ are by \cite{halpern:92a}, the ideas go back much earlier.  In particular,
the basic canonical model construction goes back to 
\cite{Mak} (see \citet[Chapter 4]{blackburn:2001a} for a discussion),
while the idea for completeness of axiom
systems of the form $\axiom{XC}$ is already in the proof of \cite{KP} for proving completeness of dynamic logic.
Completeness for axiom systems of the form \axiom{XD} was proved by
\cite{fagin:92a} and by
\cite{vanderhoek:92a}.
A novel
proof is provided by \citet[Chapter 3]{wang:thesis}.
Theorem~\ref{thm:expressiveness} is part of logical folklore.
A proof of Theorem~\ref{thm:expsuccinct} was given by
\cite{Succ:AIJ}.

\paragraph{Acknowledgements} The authors are indebted to Cl\'{a}udia Nalon for a careful reading. 
Hans van Ditmarsch is also affiliated to IMSc, Chennai, as associated researcher, 
and he acknowledges support from European Research Council grant EPS 313360.
Joseph Y. Halpern was supported in part by NSF grants IIS-0911036 and CCF-1214844, 
by AFOSR grant FA9550-09-1-0266, by ARO grants W911NF-09-1-0281 and W911NF-14-1-0017, 
and by the Multidisciplinary University Research Initiative (MURI) program administered by the
AFOSR under grant FA9550-12-1-0040.


\refstart
{\small 

}
\label{chap1:lastpage}









\end{document}